\def\1{\bm{1}}
\def\rt{{T}}
\def\ru{{U}}
\def\ry{{Y}}
\def\rve{{\mathbf{e}}}
\def\rvt{{T}}
\def\rvv{{V}}
\def\rvw{{W}}
\def\rvx{{X}}
\def\rvy{{Y}}
\def\rvz{{Z}}
\def\vtheta{{\bm{\theta}}}
\def\va{{\bm{a}}}
\def\vb{{\bm{b}}}
\def\vc{{\bm{c}}}
\def\vd{{\bm{d}}}
\def\vf{{\bm{f}}}
\def\vg{{\bm{g}}}
\def\vh{{\bm{h}}}
\def\vi{{\bm{i}}}
\def\vj{{\bm{j}}}
\def\vk{{\bm{k}}}
\def\vr{{\bm{r}}}
\def\vs{{\bm{s}}}
\def\vt{{\bm{t}}}
\def\vu{{\bm{u}}}
\def\vv{{\bm{v}}}
\def\vx{{\bm{x}}}
\def\vy{{\bm{y}}}
\def\vz{{\bm{z}}}
\def\mA{{\bm{A}}}
\def\mC{{\bm{C}}}
\def\mG{{\bm{G}}}
\def\mI{{\bm{I}}}
\def\mJ{{\bm{J}}}
\def\mL{{\bm{L}}}
\def\mO{{\bm{O}}}
\def\mR{{\bm{R}}}
\def\mX{{\bm{X}}}
\DeclareMathAlphabet{\mathsfit}{\encodingdefault}{\sfdefault}{m}{sl}
\SetMathAlphabet{\mathsfit}{bold}{\encodingdefault}{\sfdefault}{bx}{n}
\newcommand{\E}{\mathbb{E}}
\newcommand{\R}{\mathbb{R}}
\newcommand{\KL}{D_{\mathrm{KL}}}
\newcommand{\Var}{\mathrm{Var}}
\newtheorem{theorem}{Theorem}
\newtheorem{proposition}{Proposition}
\newtheorem{corollary}{Corollary}
\theoremstyle{definition}
\newtheorem{definition}{Definition}
\newcommand{\independent}{\raisebox{0.05em}{\rotatebox[origin=c]{90}{$\models$}}}
\DeclareMathOperator{\diag}{diag}
\DeclareMathOperator{\bern}{Bern}
\DeclareMathOperator{\logi}{Logi}
\newcommand{\blambda}{\bm\lambda}
\newcommand{\biglambda}{\bm\Lambda}
\newcommand{\btheta}{\bm\theta}
\newcommand{\bphi}{\bm\phi}
\newcommand{\bgamma}{\bm\gamma}
\newcommand{\bbeta}{\bm\beta}
\newcommand{\beps}{\bm\epsilon}
\newcommand{\bsig}{\bm \sigma}
\newcommand{\inv}{^{-1}}
\newcommand{\st}{^*}
\newcommand{\PS}{\mathbb{P}}
\newcommand{\M}{\mathbb{M}}
\newcommand{\data}{\mathcal{D}}
\newcommand{\x}{\rvx}
\newcommand{\y}{\rvy}
\newcommand{\z}{\rvz}
\newcommand{\condpriorparam}{p_{\bm\lambda}(\vz|\vx,t)}
\newcommand{\decoderparam}{p_{\vf}(\vy|\vz,t)}
\newcommand{\encoderparam}{q_{\bphi}(\vz|\vx,\vy,t)}
\newcommand{\encoder}{q(\vz|\vx,\vy,t)}
\newcommand{\vaegenparam}{p_{\vtheta}(\vy,\vz|\vx,t)}
\newcommand{\vaegen}{p(\vy,\vz|\vx,t)}
\newcommand{\vaeobsparam}{p_{\vtheta}(\vy|\vx,t)}
\newcommand{\vaepostparam}{p_{\vtheta}(\vz|\vx,\vy,t)}
\newcommand{\trueobs}{p(\vy|\vx,t)}
\newcommand{\truejoint}{p(\vx,\vy,t)}
\newcommand{\e}{{\rve}}
\newcommand{\binset}{\{0,1\}}
\setlist{nosep} 
\def\thm@space@setup{%
  \thm@preskip=3.0pt plus 1.0pt minus 2.0pt
  \thm@postskip=.0pt plus .0pt minus .0pt
}
\theoremstyle{plain}
\newtheorem{lemma}{Lemma}
\title{
\vspace{-.3in}
$\beta$-Intact-VAE: Identifying and Estimating \\ Causal Effects under Limited Overlap
\vspace{-.1in}
}
\author{
Pengzhou (Abel) Wu \& Kenji Fukumizu \\
Department of Statistical Science, The Graduate University for Advanced Studies \\
\& The Institute of Statistical Mathematics \\
Tachikawa, Tokyo \\
\texttt{\{wu.pengzhou,fukumizu\}@ism.ac.jp} \\
\vspace{-.3in}
}
\begin{document}

\maketitle

\begin{abstract}
As an important problem in causal inference, we discuss the identification and estimation of treatment effects (TEs) under limited overlap; that is, when subjects with certain features belong to a single treatment group. We use a latent variable to model a prognostic score which is widely used in biostatistics and sufficient for TEs; i.e., we build a generative prognostic model. We prove that the latent variable recovers a prognostic score, and the model identifies individualized treatment effects. The model is then learned as $\beta$-Intact-VAE––a new type of variational autoencoder (VAE). We derive the TE error bounds that enable representations balanced for treatment groups conditioned on individualized features. The proposed method is compared with recent methods using (semi-)synthetic datasets. 
\end{abstract}

\section{Introduction}

Causal inference \citep{imbens2015causal, pearl2009causality}, i.e, inferring causal effects of interventions, is a fundamental field of research. In this work, we focus on treatment effects (TEs) based on a set of observations comprising binary labels $\rt$ for treatment/control (non-treated), outcome $\y$, and other covariates $\x$.  Typical examples include estimating the effects of public policies or new drugs based on the personal records of the subjects. The fundamental difficulty of causal inference is that we never observe \textit{counterfactual} outcomes that would have been if we had made the other decision (treatment or control). While randomized controlled trials (RCTs) control biases through randomization and are ideal protocols for causal inference, they often have ethical and practical issues, or suffer from expensive costs. Thus, causal inference from observational data is important. 

Causal inference from observational data has other challenges as well. One is \textit{confounding}: there may be variables, called confounders, that causally affect both the treatment and the outcome, and spurious correlation/bias follows. The other is the systematic \textit{imbalance} (difference) of the distributions of the covariates between the treatment and control groups––that is, $\x$ depends on $\rt$, which introduces bias in estimation. A majority of studies on causal inference, including the current work, have relied on unconfoundedness; this means that appropriate covariates are collected so that the confounding can be controlled by conditioning on the covariates. 
However, such high-dimensional covariates tend to introduce a stronger imbalance between treatment and control. 

The current work studies the issue of imbalance in estimating individualized TEs conditioned on $\x$. Classical approaches aim for 
\textit{covariate balance}, $\x$ independent of $\rt$, 
by matching and re-weighting \citep{Stuart2010matching,rosenbaum2020modern}. Machine learning methods have also been exploited; there are semi-parametric methods––e.g.,  \citet[TMLE]{van2018targeted}––which improve finite sample performance, as well as non-parametric methods––e.g., \citet[CF]{wager2018estimation}. Notably, from \citet{johansson2016learning}, there has been a recent increase in interest in \textit{balanced representation learning} (BRL) to learn representations $\z$ of the covariates, such that $\z$ independent of $\rt$. 

The most serious form of imbalance is the \textit{limited (or weak) overlap of covariates}, which means that sample points with certain covariate values belong to a single treatment group. In this case, a straightforward estimation of TEs is not possible at non-overlapping covariate values due to lack of data. Some works have focused on providing robustness to limited overlap \citep{armstrong2021finite},  trimming non-overlapping sample points \citep{10.1093/biomet/asy008}, or studying convergence rates based on overlap \citep{hong2020inference}. Limited overlap is particularly relevant to machine learning methods that exploit high-dimensional covariates. This is because, with higher-dimensional covariates, overlap is harder to satisfy and verify \citep{d2020overlap}.

To address imbalance and limited overlap, we use a prognostic score \citep{hansen2008prognostic}; it is a sufficient statistic of outcome predictors and is among the key concepts of sufficient scores for TE estimation. As a function of covariates, 
it can map some non-overlapping values to an overlapping value in a space of lower-dimensions.
For individualized TEs, we consider \textit{conditionally balanced representation} $\z$, such that $\z$ is independent of $\rt$ given $\x$––which, as we will see, is a necessary condition for a balanced prognostic score. 
Moreover, prognostic score modeling can benefit from methods in predictive analytics and exploit rich literature, particularly in medicine and health \citep{hajage2017estimation}. Thus, it is promising to combine the predictive power of prognostic modeling and machine learning. With this idea, our method builds on a generative prognostic model that models the prognostic score as a latent variable and factorizes to the score distribution and outcome distribution.

As we consider latent variables and causal inference, \textit{identification} is an issue that must be discussed before estimation is considered. ``Identification'' means that the parameters of interest (in our case, representation function and TEs) are uniquely determined and expressed using the true observational distribution.
Without identification, a consistent estimator is impossible to obtain, and a model would fail silently; in other words, the model may fit perfectly but will return an estimator that converges to a wrong one, or does not converge at all \citep[particularly Sec.~8]{lewbel2019identification}. Identification is even more important for causal inference; because, unlike usual (non-causal) model misspecification, causal assumptions are often unverifiable through observables \citep{white2013identification}. Thus, it is critical to specify the theoretical conditions for identification, and then the applicability of the methods can be judged by knowledge of an application domain.

A major strength of our generative model is that the latent variable is identifiable. This is because the factorization of our model is naturally realized as a combination of identifiable VAE \citep[iVAE]{khemakhem2020variational} and conditional VAE \citep[CVAE]{sohn2015learning}. Based on model identifiability, we develop two identification results for individualized TEs under limited overlap.
A similar VAE architecture was proposed in \citet{wu2021identifying,wu2021intact}; the current study is different in setting, theory, learning objective, and experiments. The previous work studies unobserved confounding but not limited overlap, with different set of assumptions and identification theories.
The current study further provides bounds on individualized TE error, and the bounds justify a conditionally balancing term controlled by hyperparameter $\beta$, as an interpolation between the two identifications.

In summary, we study the identification (Sec.~\ref{sec:model&id}) and estimation (Sec.~\ref{sec:estimation}) of individualized TEs under limited overlap. 
Our approach is based on recovering prognostic scores from observed variables. To this end, our method exploits recent advances in identifiable representation––particularly iVAE.
The code is in Supplementary Material, and the proofs are in Sec.~\ref{sec:proofs}. Our main contributions are: 
\vspace{-5pt}
\begin{enumerate}[topsep=0pt, partopsep=0pt, itemsep=0pt, parsep=0pt, leftmargin=13pt]
\item[1)] TE identification under limited overlap of $\x$, via prognostic scores and an identifiable model;
\item[2)] bounds on individualized TE error, which justify our conditional BRL;
\item[3)] 
a new regularized VAE, $\beta$-Intact-VAE,
realizing the identification and conditional balance;
\item[4)] experimental comparison to the state-of-the-art methods on (semi-)synthetic datasets.
\end{enumerate}

\subsection{Related work}

\textbf{Limited overlap.}~Under limited overlap, \citet{luo2017estimating} estimate the average TE (ATE) by reducing covariates to a linear prognostic score. \citet{farrell2015robust} estimates a constant TE under a partial linear outcome model. \citet{d2021deconfounding} study the identification of ATE by a general class of scores, given the (linear) propensity score and prognostic score.
Machine learning studies on this topic have focused on finding overlapping regions \citep{oberst2020characterization,dai2020quantifying}, or indicating possible failure under limited overlap \citep{jesson2020identifying}, but not remedies. An exception is \citet{johansson2020generalization}, which provides bounds under limited overlap.
To the best of our knowledge, our method is the first machine learning method that provides identification under limited overlap.

\textbf{Prognostic scores\space} have been recently combined with machine learning approaches, mainly in the biostatistics community. 
For example, 
\citet{huang2017joint} estimate individualized TE by reducing covariates to a linear score which is a joint propensity-prognostic score.  \citet{tarr2021estimating} use SVM to minimize the worst-case bias due to prognostic score imbalance. 
However, in the machine learning community, few methods consider prognostic scores; \citet{zhang2020treatment} and \citet{hassanpour2019learning} learn outcome predictors, without mentioning prognostic score––while \citet{johansson2020generalization} conceptually, but not formally, connects BRL to prognostic score. Our work is the first to formally connect generative learning and prognostic scores for TE estimation.

\textbf{Identifiable representation.}~Recently, independent component analysis (ICA) and representation learning––both ill-posed inverse problems––meet together to yield nonlinear ICA and identifiable representation; for example, using
VAEs \citep{khemakhem2020variational}, and energy models \citep{khemakhem2020ice}. 
The results are exploited in causal discovery \citep{pmlr-v108-wu20b} and out-of-distribution generalization \citep{sun2020latent}. This study is the first to explore identifiable representations in TE identification.  

\textbf{BRL and related methods\space} amount to a major direction. Early BRL methods include BLR/BNN \citep{johansson2016learning} and TARnet/CFR \citep{shalit2017estimating}. In addition, \cite{yao2018representation} exploit the local similarity between data points. \cite{shi2019adapting} use similar architecture to TARnet, considering the importance of treatment probability. There are also methods that use GAN \citep[GANITE]{yoon2018ganite} and Gaussian processes \citep{alaa2017bayesian}. Our method shares the idea of BRL, and further extends to conditional balance––which is natural for individualized TE. 

\textbf{Others.}~Our work can lay conceptual and theoretical foundations of VAE methods for TEs  \citep[e.g.,][]{louizos2017causal,lu2020reconsidering}. In addition, some studies consider monotonicity, which is injectivity on $\R$, together with overlap \citep{johansson2020generalization,zhang2020learning}. See Sec.~\ref{sec:other_related}.

\begingroup

\section{Setup and preliminaries}
\label{sec:setup}

\subsection{Counterfactuals, treatment effects, and identification}
Following \citet{imbens2015causal}, we assume there exist \textit{potential outcomes} $\rvy(t) \in \R^d, t \in \{0,1\}$. $\rvy(t)$ is the outcome that would have been observed if the treatment value $\rt=t$ was applied. We see $\rvy(t)$ as the hidden variables that give the \textit{factual outcome} $\y$ under \textit{factual assignment} $\rt=t$. Formally, $\rvy(t)$ is defined by the \textit{consistency of counterfactuals}: $\rvy = \rvy(t)$ if $\rt=t$; or simply $\y=\y(\rt)$. The \textit{fundamental problem of causal inference} is that, for a unit under research, we can observe only one of $\y(0)$ or $\y(1)$––w.r.t.~the treatment value applied. 
That is, ``factual'' refers to $\y$ or $\rt$, which is \textit{observable}; or estimators built on the observables. We also observe relevant covariate(s) $\rvx \in \mathcal{X} \subseteq \R^m$, which is associated with individuals, with distribution $\data \coloneqq (\x,\y,\rt) \sim \truejoint$.
We use upper-case (e.g. $\rt$) to denote random variables, and lower-case (e.g. $t$) for realizations.

The expected potential outcome is denoted by $\mu_t(\vx) = \E(\rvy(t)|\rvx=\vx)$ conditioned on $\rvx=\vx$.
The estimands in this work are the conditional ATE (CATE) and ATE, defined, respectively, by: 
\begin{equation}
\label{eq:ce}
    \tau(\vx) = \mu_1(\vx) - \mu_0(\vx),\quad \nu = \E(\tau(\x)).
\end{equation}
CATE is seen as an \textit{individual-level}, personalized, treatment effect, given
highly discriminative $\x$.

Standard results \citep{rubin2005causal}\citep[Ch.~3]{hernanCausalInferenceWhat2020} show sufficient conditions for TE identification in general settings. They are \emph{ Exchangeability}: $\rvy(t) \independent \rt| \rvx$, and \emph{Overlap}: $p(t|\vx) > 0$ for any $\vx\in\mathcal{X}$. Both are required for $t\in\binset$. When $t$ appears in statements without quantification, we always mean ``for both $t$''. Often, \textit{Consistency} is also listed; however, as mentioned, it is better known as the well-definedness of counterfactuals.
Exchangeability means, just as in RCTs, but additionally given $\x$, that  there is no correlation between factual $\rt$ and potential $\ry(t)$. Note that the popular assumption $\rvy(0),\rvy(1) \independent \rt| \rvx$ is stronger than $\rvy(t) \independent \rt| \rvx$ and is not necessary for identification \citep[pp.~15]{hernanCausalInferenceWhat2020}.
Overlap means that the supports of $p(\vx|t=0)$ and $p(\vx|t=1)$ should be the same, and this ensures that there are data for $\mu_t(\vx)$ on any $(\vx, t)$.

We rely on consistency and exchangeability, but in Sec.~\ref{sec:identification}, will relax the condition of the overlapping covariate to allow some non-overlapping values $\vx$––that is, covariate $\x$ is \textit{limited-overlapping}.
In this paper, we also discuss overlapping variables other than $\x$ (e.g., prognostic scores), and provide a definition for any random variable $\rvv$ with support $\mathcal{V}$ as follows: 

\begin{definition}
$\rvv$ is \textit{Overlapping} if $p(t|V=\vv) > 0$ for any $t\in\binset,\vv \in \mathcal{V}$. If the condition is violated at some value $\vv$, then $\vv$ is \textit{non-overlapping} and $\rvv$ is \textit{limited-overlapping}. 
\end{definition}

\subsection{Prognostic scores}
\label{sec:pgs}

Our method aims to recover a prognostic score \citep{hansen2008prognostic}, adapted as a Pt-score in Definition \ref{def:scores}. On the other hand, balancing scores \citep{rosenbaum1983central} $\vb(\x)$ are defined by $\rt\independent\x| \vb(\x)$, of which the propensity score $p(t=1|\x)$ is a special case.  See Sec.~\ref{sec:scores} for detail.

\begin{definition}
\label{def:scores}
A \textit{Pt-score} (PtS) is two functions $\PS_t(\x)$ ($t=0,1$) such that $\rvy(t)\independent\x|\PS_t(\x)$.
A PtS is called a \textit{P-score} (PS) if $\PS_0=\PS_1$.
\end{definition}
Note that a PtS is by definition two functions; thus, overlapping $\PS_t(\x)$ means \textit{both} $\PS_0(\x)$ and $\PS_1(\x)$ are overlapping. 
\textbf{Why not balancing scores?} 
While balancing scores $\vb(\x)$ have been widely used in causal inference, PtSs are more suitable for discussing overlap.
Our purpose is to recover an overlapping score for limited-overlapping $\x$. It is known that overlapping $\vb(\x)$ implies overlapping $\x$ \citep{d2020overlap}, which counters our purpose.
In contrast, overlapping PS does not imply overlapping $\vb(\x)$ (see Sec.~\ref{sec:pgs_vs_bs} for an example).
Moreover, with theoretical and experimental evidence, it is recently conjectured that PtSs maximize overlap among a class of sufficient scores, including $\vb(\x)$ \citep{d2021deconfounding}. In general, \cite{hajage2017estimation} show that prognostic score methods perform better––or as well as––propensity score methods.

Below is a corollary of Proposition 5 in \cite{hansen2008prognostic}; note that $\PS_t(\x)$ satisfies exchangeability.
\begin{proposition}[Identification via PtS]
\label{cate_by_bts}
If $\PS_t$ is a PtS and $\y|\PS_{\hat t}(\x),T \sim p_{\y|\PS_{\hat t},T}(\vy|P,t)$ where $\hat{t}\in \binset$ is a counterfactual assignment, then CATE and ATE are identified, using \eqref{eq:ce} and
\begin{equation}
\label{eq:cate_by_bts}
    \begin{split}
        \mu_{\hat{t}}(\vx) &= \E(\rvy({\hat{t}})|\PS_{\hat{t}}(\x),\x=\vx) = \E(\rvy|\PS_{\hat{t}}(\vx),\rt=\hat{t})=\textstyle \int p_{\y|\PS_{\hat t},T}(\vy|\PS_{\hat{t}}(\vx),\hat{t})\vy d\vy
    \end{split}
\end{equation}
\end{proposition}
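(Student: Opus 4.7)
The plan is to verify the chain of equalities in \eqref{eq:cate_by_bts} by combining three ingredients already in force: (i) the PtS property $\rvy(t)\independent\x\mid\PS_t(\x)$; (ii) standard exchangeability $\rvy(t)\independent\rt\mid\x$, assumed globally in the paper; and (iii) consistency of counterfactuals, $\rvy=\rvy(\rt)$. The first and third equalities are essentially definitional, so the real substance is in the second.

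\textbf{First equality.} Since $\PS_{\hat t}$ is a deterministic function of $\x$, conditioning on $\x=\vx$ already determines $\PS_{\hat t}(\x)=\PS_{\hat t}(\vx)$; therefore $\mu_{\hat t}(\vx)=\E(\rvy(\hat t)\mid\x=\vx)=\E(\rvy(\hat t)\mid\PS_{\hat t}(\x),\x=\vx)$ is immediate.

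\textbf{Second equality.} I would first establish a PtS-level exchangeability lemma: $\rvy(t)\independent\rt\mid\PS_t(\x)$. Expanding $p(\rvy(t)\mid\PS_t(\x),\rt=t)$ as an integral over $\x$ conditioned on both $\PS_t(\x)$ and $\rt=t$, I plan to (a) drop the conditioning on $\PS_t(\x)$ from the inner density because $\PS_t$ is a function of $\x$, (b) drop the conditioning on $\rt=t$ using exchangeability (ii), and (c) replace $p(\rvy(t)\mid\x)$ by $p(\rvy(t)\mid\PS_t(\x))$ using the PtS property (i); the latter factor is constant on the fiber $\{\x:\PS_t(\x)=P\}$ on which $p(\x\mid\PS_t(\x)=P,\rt=t)$ is supported, so it pulls out of the integral against a factor that integrates to $1$, yielding the lemma. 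With the lemma in hand, the PtS property lets me drop $\x$, the lemma lets me insert $\rt=\hat t$, and finally consistency (iii) replaces $\rvy(\hat t)$ by $\rvy$ on the event $\{\rt=\hat t\}$:
\begin{equation*}
\E(\rvy(\hat t)\mid\PS_{\hat t}(\x),\x=\vx)=\E(\rvy(\hat t)\mid\PS_{\hat t}(\vx))=\E(\rvy(\hat t)\mid\PS_{\hat t}(\vx),\rt=\hat t)=\E(\rvy\mid\PS_{\hat t}(\vx),\rt=\hat t).
\end{equation*}

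\textbf{Third equality.} This is simply the definition of conditional expectation with respect to the density $p_{\y\mid\PS_{\hat t},T}$ posited in the hypothesis, so it is essentially free. CATE identification then follows from \eqref{eq:ce}, and ATE by iterated expectation in $\x$.

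The main obstacle I anticipate is the PtS-level exchangeability lemma: the argument requires careful bookkeeping of which conditioning sets carry exchangeability and of the fact that $\PS_t(\x)$ is invariant on the fibers over which we integrate. No overlap assumption on $\x$ (or on $\PS_t(\x)$) is invoked, which is important for the later use of the proposition under limited overlap. Once the lemma is in place, the remaining steps are routine manipulations of conditional expectations.
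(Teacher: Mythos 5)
Your proof is correct and rests on the same key fact the paper uses: that a PtS inherits exchangeability, i.e.\ $\rvy(t)\independent\rt\mid\PS_t(\x)$ (the paper flags this with ``note that $\PS_t(\x)$ satisfies exchangeability'' and proves the stronger statement $\rvy(t)\independent\x,\rt\mid\PS_t(\x)$ in Appendix Proposition~\ref{prop:pscore}). The only difference is mechanical: the paper derives the lemma in two lines from the graphoid calculus --- exchangeability of $\x$ gives $\rvy(t)\independent\rt\mid\PS_t(\x),\x$, which combines with the PtS definition $\rvy(t)\independent\x\mid\PS_t(\x)$ via the contraction rule --- whereas you re-derive it by disintegrating over the fibers $\{\x:\PS_t(\x)=P\}$ and pulling the constant factor out of the integral. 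Both are sound; the contraction argument is shorter and avoids the measure-theoretic bookkeeping you anticipate as the main obstacle, while your version is more self-contained and makes explicit where each hypothesis enters. One small caveat on your closing remark: while no overlap of $\x$ is needed (which is indeed the point of the proposition), conditioning on $\{\PS_{\hat t}(\x)=\PS_{\hat t}(\vx),\rt=\hat t\}$ does implicitly require that event to be non-null, i.e.\ overlap at the \emph{score} level; in the proposition this is absorbed into the hypothesis that the conditional density $p_{\y|\PS_{\hat t},T}$ exists, which is exactly how the paper later exploits an overlapping score to handle non-overlapping $\vx$.
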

\vspace{-.1in}
With the knowledge of $\PS_t$ and $p_{\y|\PS_{\hat t},T}$, we choose one of $\PS_0,\PS_1$ and set $t=\hat{t}$ in the density function, w.r.t~the $\mu_{\hat t}$ of interest. This counterfactual assignment resolves the problem of non-overlap at $\vx$.
Note that a sample point with $\x=\vx$ may not have $\rt=\hat{t}$.

We mainly consider additive noise models for $\y(t)$,  which ensures the existence of PtSs.
\begin{enumerate}
\renewcommand{\labelenumi}{\textbf{(G1)}}
\def\theenumi{\textbf{(G1)}}
\item\hspace{-2mm}\footnote{The symbols \textbf{G}, \textbf{M}, and \textbf{D} in the labels of conditions stand for Generating process, Model, and Data.} \label{ass:anm} (Additive noise model) the data generating process (DGP) for $\y$ is 
$\y = \vf\st(\M(\x),{\rt}) + \e$
where $\vf\st,\M$ are functions, and $\e$ denotes a zero-mean exogenous (external) noise.
\end{enumerate}
The potential outcomes can be defined by the DGP as $\y(t):=\vf\st(\M(\x),t) + \e$. The DGP also specifies how other variables causally affect $\y$. For example, $\x$ affects $\y$ through $\M$; and thus $\M(\x)$ is the effect modifier \citep{hansen2008prognostic}––which is often components of $\x$ affecting $\y$ directly. 
Additive noise models are used in nonparametric regression methods for TEs \citep{caron2020estimating}.

Under \ref{ass:anm}, we can find natural 
examples of PS and PtS. 
1) $\PS_t(\x)\coloneqq\vf\st_t(\M(\x))=\mu_t(\x)$\footnote{We often write $t$ of the function argument in subscripts, indicating possible counterfactual assignments.} is a PtS
and not PS; 2) $\M$ is a PS ($\x$ is a trivial PS); and 3) $\PS(\x)\coloneqq(\mu_0(\x), \mu_1(\x))$ is a PS.

We use PS and PtS to construct a representation for CATE estimation, and their balance is important.   
Obviously, a PS $\PS(\x)$ is a conditionally balanced representation (defined as $\z\independent\rt|\x$ in Introduction), because $\PS(X)$ does not depend on $\rt$ given $\x$. Note that a PtS is conditionally balanced if and only if it is a PS. Thus, we introduce the notion of \textit{balanced PtS} in a non-rigorous way: a PtS $\PS_t$ is called balanced if the value of a measure for the conditional independence $\PS_{\rt}(\x) \independent \rt | \x$ is small.

\section{Identification under generative prognostic model}
\label{sec:model&id}

\begingroup

\setlength{\columnsep}{8pt}
\setlength{\intextsep}{5pt}
\begin{wrapfigure}{r}{0.26\columnwidth}
  \vspace{-.2in}
  \begin{center}
    \includegraphics[width=0.26\columnwidth]{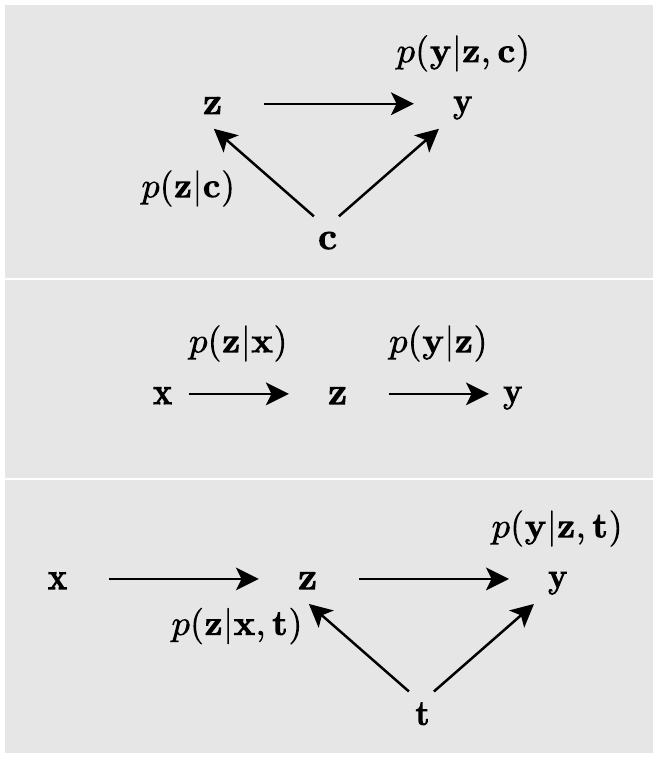}
  \end{center}
  \vspace{-.15in}
  \caption{CVAE, iVAE, and Intact-VAE: Graphical models of the decoders. 
  }
\vspace{-.5in}
\label{f:vae}
\end{wrapfigure}

In Sec.~\ref{sec:arch}, we specify the generative prognostic model $\vaegen$, and show its identifiability. In Sec.~\ref{sec:identification}, we prove the identification of CATEs, which is one of our main contributions. The theoretical analysis involves only our generative model (i.e., prior and decoder), but not the encoder.
The encoder is not part of the generative model and is involved as an approximate posterior in the estimation, which is studied in Sec.~\ref{sec:estimation}.

\subsection{Model, architecture, and identifiability}
\label{sec:arch}
We present the necessary definitions and results, and defer more explanations to Sec.~\ref{sec:arch_details}. 
The contents of this subsection are essentially taken from \cite{wu2021principled}, but included here for completeness.

Our goal is to build a model that can be learned by VAE from observational data to obtain a PtS, or better, a PS, via the latent variable $\z$. The generative prognostic model of the proposed method is as follows: 
\begin{equation}
\label{model_indep}
    \begin{gathered}
        p_{\vtheta}(\vy,\vz|\vx,t) = p_{\vf}(\vy|\vz,t)p_{\blambda}(\vz|\vx,t), \\
        p_{\vf}(\vy|\vz,t)=p_{\beps}(\vy-\vf_{t}(\vz)),
         \quad 
        p_{\blambda}(\vz|\vx,t) \sim \mathcal{N} ( \vz ; \vh_{t}(\vx),\diag(\vk_{t}(\vx))).
    \end{gathered}
\end{equation}
The factor $p_{\vf}(\vy|\vz,t)$ is our decoder, which models $p_{\y|\PS_t,T}(\vy|P,t)$ in \eqref{eq:cate_by_bts}; and $p_{\blambda}(\vz|\vx,t)$ is the conditional prior, which models $\PS_{\rt}(\x)$. 
The outcome assumes an additive noise model such that 
$\beps \sim p_{\beps}$ denotes the noise model. The prior is a factorized Gaussian, where $\blambda_{\rt}(\x)\coloneqq \diag\inv(\vk_{\rt}(\rvx))(\vh_{\rt}(\x), -\frac{1}{2})^T$ is the natural parameter as in the exponential family. $\vtheta\coloneqq(\vf,\blambda)=(\vf, \vh, \vk)$ contains the functional parameters. We denote $n\coloneqq\dim(\z)$.

For inference, the standard argument derives the ELBO: 
\begin{equation}
\label{elbo}
    \log \trueobs 
    \geq \E_{\vz \sim q}\log \decoderparam - \KL(\encoder\Vert \condpriorparam) .
\end{equation}
Note that the encoder $q$ conditions on all the observables $(\rvx,\rvy,\rt)$; this fact plays an important role in Section \ref{sec:regular}. This architecture is called \textit{Intact-VAE} (\textit{I}de\textit{n}tifiable \textit{t}re\textit{a}tment-\textit{c}ondi\textit{t}ional VAE). 
See Figure \ref{f:vae} for comparison in terms of graphical models. See Sec.~\ref{vaes} for basics of VAEs.

\endgroup

Our model identifiability extends the theory of iVAE,
and the following conditions are inherited. 
\begin{enumerate}
\renewcommand{\labelenumi}{\textbf{(M1)}}
\def\theenumi{\textbf{(M1)}}
\item \label{ass:model} i) $\vf_t$ is injective, and ii) $\vf_t$ is differentiable. 
\end{enumerate}
\begin{enumerate}
\renewcommand{\labelenumi}{\textbf{(D1)}}
\def\theenumi{\textbf{(D1)}}
\item \label{ass:inv_jac_smp} $\blambda_t(\x)$ is non-degenerate, i.e., the linear hull of its support is $2n$-dimensional. 
\end{enumerate}
Under \textbf{(M1)} and \textbf{(D1)}, we obtain the following identifiability of the parameters in the model: if $\vaeobsparam=p_{\vtheta'}(\vy|\vx,t)$, we have, for any $\vy_t$ in the image of $\vf_t$:
\begin{equation}
\label{eq:class}
     \vf_t^{-1}(\vy_t) = \diag(\va){\vf'_t}\inv(\vy_t) + \vb 
     =:\mathcal{A}_t({\vf'_t}\inv(\vy_t))
\end{equation}
where $\diag(\va)$ is an invertible $n$-diagonal matrix and $\vb$ is an $n$-vector, both of which depend on $\bm\lambda_t(\vx)$ and $\bm\lambda'_t(\vx)$. 
The essence of the result is that $\vf'_t = \vf_t \circ \mathcal{A}_t$; that is, $\vf_t$ can be identified (learned) up to an affine transformation $\mathcal{A}_t$. See Sec.~\ref{sec:proofs} for the proof and a relaxation of \ref{ass:inv_jac_smp}.
In this paper, symbol $'$ (prime) always indicates another parameter (variable, etc.): $\vtheta'=(\vf',\blambda')$.

\subsection{Identifications under limited-overlapping covariate}
\label{sec:identification}

In this subsection, we present two results of CATE identification based on the recovery of equivalent PS and PtS, respectively.  
Since PtSs are functions of $\x$, the theory assumes a noiseless prior for simplification, i.e., $\vk(\x)=\bm0$; the prior $\z_{\blambda,t} \sim p_{\blambda}(\vz|\vx,t)$ degenerates to function $\vh_{t}(\vx)$. 

PtSs with dimensionality lower than or equal to $d=\dim(\y)$ are essential to address limited overlapping, as shown below.
We set $n=d$ because $\mu_t$ is a PtS of the same dimension as $\y$ under \ref{ass:anm}. In practice, $n=d$ means that we seek a low-dimensional representation of $\x$.  
\ref{ass:pt_inj} makes the dimensionality explicit and reduces to \ref{ass:anm} if the only possibility is that $\PS_t=\mu_t$ and $\vj_t$ is identity. 
\begin{enumerate}
\renewcommand{\labelenumi}{\textbf{(G2)}}
\def\theenumi{\textbf{(G2)}}
\item \label{ass:pt_inj} (Low-dimensional PtS) Under \ref{ass:anm}, $\mu_{t}(\x)=\vj_{t}(\PS_t(\x))$
for some $\PS_t$ and injective $\vj_t$.
\end{enumerate}
We use \ref{ass:pt_inj} instead of \ref{ass:anm} hereafter. Clearly, $\PS_t$ in \ref{ass:pt_inj} is PtS. In addition, injectivity and $n=d$ ensure that $n=\dim(\y) \geq \dim(\PS_t)$. Similarly, \ref{ass:p_inj} ensures that $n \geq \dim(\PS)$ for PS. 

\begin{enumerate}
\renewcommand{\labelenumi}{\textbf{(G2')}}
\def\theenumi{\textbf{(G2')}}
\item \label{ass:p_inj} (Low-dimensional PS) Under \ref{ass:anm}, $\mu_{t}(\x)=\vj_{t}(\PS(\x))$
for some $\PS$ and injective $\vj_t$.
\end{enumerate}
\ref{ass:p_inj} means that CATEs
are given by $\mu_0$ and an invertible function $\vi\coloneqq\vj_1\circ\vj_0\inv$. See Sec.~\ref{sec:p_inj} for more discussions and real-world examples. 

With \ref{ass:pt_inj} or \ref{ass:p_inj}, overlapping $\x$ can be relaxed to overlapping PS or PtS plus the following:

\begin{enumerate}
\renewcommand{\labelenumi}{\textbf{{(M2)}}} 
\def\theenumi{\textbf{(M2)}} 
\item \label{ass:prepart} \text{(Score partition preserving) For any $\vx,\vx' \in \mathcal{X}$, if $\PS_t(\vx)=\PS_t(\vx')$, then $\vh_t(\vx)=\vh_t(\vx')$.}
\end{enumerate}
Note that \ref{ass:prepart} is only required for optimal $\vh$ that satisfies $\E_{p_{\btheta}}(\y|\x,\rt)=\E(\y|\x,\rt)$ in Proposition \ref{th:id_nop_pscore}, or $\vaeobsparam=\trueobs$ in Theorem \ref{th:id_nop_ptscore}. The intuition is that $\PS_t$ maps non-overlapping $\vx$ to an overlapping value, and $\vh_t$ preserves this property through learning. 
Linear $\PS_t$ and $\vh_t$ imply \ref{ass:prepart} and are often assumed, e.g., in \cite{huang2017joint,luo2017estimating,d2021deconfounding}. Linear outcome models \citep{farrell2015robust,schuler2020increasing} are also common.

Our first identification, Proposition \ref{th:id_nop_pscore}, relies on \ref{ass:p_inj} and our generative model, \textit{without} model identifiability (so differentiable $\vf_t$ is not needed). 

\begin{proposition}[Identification via recovery of PS]
\label{th:id_nop_pscore}
Suppose we have DGP \ref{ass:p_inj} and model \eqref{model_indep} with $n=d$. Assume \ref{ass:model}-i) and\renewcommand{\labelenumi}{\textbf{\emph{(M3)}}}\begin{inparaenum}\def\theenumi{\textbf{(M3)}}
\item \label{ass:match_pscore} \emph{(PS matching) $\vh_0(\x)=\vh_1(\x)$ and $\vk(\x) = \bm0$.}
\end{inparaenum}
Then, if $\E_{p_{\btheta}}(\y|\x,\rt)=\E(\y|\x,\rt)$, we have
\renewcommand{\labelenumi}{\arabic{enumi})}
\begin{enumerate}
\def\theenumi{\arabic{enumi})}
\item (Recovery of PS) $\vz_{\blambda,t}=\vh_t(\vx)=\vv(\PS(\vx))$
on overlapping $\vx$, \\ where $\vv: \mathcal{P} \to \R^n$ is an injective function, and $\mathcal{P}\coloneqq\{\PS(\vx)|\text{overlapping }\vx\}$.
\item (CATE Identification) if $\PS$ in \ref{ass:p_inj} is overlapping, and \ref{ass:prepart} is satisfied, then 
$\mu_{{t}}(\vx) = \hat{\mu}_{{t}}(\vx)$ for any ${t} \in \binset$ and $\vx \in \mathcal{X}$, where $\hat{\mu}_{{t}}(\vx) \coloneqq \E_{p_{\blambda}(\z| \vx,t)} \E_{p_{\vf}}(\y|\z,{t}) = \vf_t(\vh_t(\vx))$.
\end{enumerate}
\end{proposition}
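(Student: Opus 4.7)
The plan is to exploit the fact that \textbf{(M3)} collapses the Gaussian prior to the deterministic value $\vh(\vx)$ (since $\vk=\vzero$ and $\vh_0=\vh_1=:\vh$), so that under the additive-noise decoder, $\E_{p_{\btheta}}(\y|\vx,t)=\vf_t(\vh(\vx))$. Combined with the DGP, exchangeability, and \ref{ass:p_inj}, the true conditional mean is $\E(\y|\vx,t)=\mu_t(\vx)=\vj_t(\PS(\vx))$. Hence the assumed moment match becomes the key pointwise identity
\begin{equation*}
\vf_t(\vh(\vx)) \;=\; \vj_t(\PS(\vx)),
\end{equation*}
valid for every $(\vx,t)$ in the observational support, i.e.\ for $t\in\binset$ whenever $p(\vx,t)>0$.

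For Part 1, I restrict to overlapping $\vx$, where both $t=0$ and $t=1$ are in the support. Using the injectivity of $\vf_t$ from \ref{ass:model}-i) I invert both equations to obtain $\vh(\vx)=\vf_0^{-1}(\vj_0(\PS(\vx)))$ and $\vh(\vx)=\vf_1^{-1}(\vj_1(\PS(\vx)))$. Since the left-hand side is $t$-free, the two maps must agree on the image set $\mathcal{P}=\{\PS(\vx):\vx\text{ overlapping}\}$; call the common map $\vv$. Injectivity of $\vv$ is immediate from the injectivity of $\vf_t^{-1}$ (on the image of $\vf_t$) and of $\vj_t$ (from \ref{ass:p_inj}), since a composition of injections is an injection. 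This yields $\vz_{\blambda,t}=\vh(\vx)=\vv(\PS(\vx))$ as claimed.

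For Part 2, the task is to extend the identity $\vf_t(\vh(\vx))=\mu_t(\vx)$ from the observational support to every $\vx\in\mathcal{X}$ and every $t$. Fix a non-overlapping $\vx$ and some $t$ with $p(\vx,t)=0$, and let $\vp=\PS(\vx)$. Because $\PS$ is overlapping by hypothesis, $p(t\mid\PS=\vp)>0$, so there exists $\vx'_t$ with $\PS(\vx'_t)=\vp$ and $(\vx'_t,t)$ in the support. The defining equation holds at $(\vx'_t,t)$, giving $\vf_t(\vh(\vx'_t))=\vj_t(\vp)$. Now \ref{ass:prepart}, specialised to the present PS case (i.e.\ $\PS_0=\PS_1=\PS$), forces $\vh(\vx)=\vh(\vx'_t)$ since $\PS(\vx)=\PS(\vx'_t)$. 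Combining these, $\hat\mu_t(\vx)=\vf_t(\vh(\vx))=\vj_t(\PS(\vx))=\mu_t(\vx)$, which is CATE identification.

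The main obstacle, and the reason \ref{ass:prepart} is invoked, is the last step: the moment condition by itself only constrains the parameters on the observational support, so without a structural tie between $\vh$ and $\PS$ across the fibres of $\PS$ nothing forces the learned $\vh$ to take a consistent value on a non-overlapping $\vx$. \ref{ass:prepart} precisely supplies this tie by making $\vh$ constant on level sets of $\PS$, allowing the identity proven on overlapping companions $\vx'_t$ to be transported to $\vx$. Care also needs to be taken that $\vv$ is well-defined (same from both $t$) before claiming injectivity; this is handled by reading off both inversion formulae and using the $t$-independence of $\vh$.
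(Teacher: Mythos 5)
Your proof is correct and follows essentially the same route as the paper's: reduce the moment condition to the pointwise identity $\vf_t\circ\vh=\vj_t\circ\PS$ on the observational support, use the $t$-independence of $\vh$ on overlapping $\vx$ to force $\vf_0^{-1}\circ\vj_0=\vf_1^{-1}\circ\vj_1=:\vv$ (injective as a composition of injections), and then transport the identity to non-overlapping $\vx$ via an overlapping companion point supplied by overlap of $\PS$ together with \ref{ass:prepart}. The only difference is cosmetic: the paper phrases Part 1 as a characterization of the full solution set (including a non-emptiness check), whereas you argue directly from the assumed optimum, which suffices for the statement.
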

In essence, i) the true DGP is identified up to an invertible mapping $\vv$, such that $\vf_t=\vj_t\circ\vv\inv$ and $\vh_t=\vv\circ\PS_t$; and ii) $\PS_t$ is recovered up to $\vv$, and $\y(t)\independent\x|\PS_t(\x)$ is preserved––with \textit{same} $\vv$ for both $t$. Theorem \ref{th:id_nop_ptscore} below also achieves the essence i) and ii), under $\PS_0\neq\PS_1$.

The existence of PS is more preferred, 
because it satisfies overlap and \ref{ass:prepart} more easily than PtS which 
requires the conditions for each of the two functions of PtS.
However, the existence of low-dimensional PS is uncertain in practice when our knowledge of the DGP is limited. Thus, we depend on Theorem \ref{th:id_nop_ptscore} based on the model identifiability to work under PtS which 
generally exists. 
\begin{theorem}[Identification via recovery of PtS]
\label{th:id_nop_ptscore}
Suppose we have DGP \ref{ass:pt_inj} and model \eqref{model_indep} with $n=d$.
For the model, assume \ref{ass:model} 
and\renewcommand{\labelenumi}{\textbf{\emph{(M3')}}}\begin{inparaenum}
\def\theenumi{\textbf{(M3')}}
\item \label{ass:match_noise} \emph{(Noise matching) $p_{\e}=p_{\beps}$ and $\vk(\x)=k\vk'(\x),k \to 0$}.
\end{inparaenum} 
Assume further that \ref{ass:inv_jac_smp} and\renewcommand{\labelenumi}{\textbf{\emph{(D2)}}}\begin{inparaenum}
\def\theenumi{\textbf{(D2)}}
\item \label{ass:g_bcovar_smp} \emph{(Balance from data) $\mathcal{A}_0=\mathcal{A}_1$ in \eqref{eq:class}}.
\end{inparaenum}
Then, if $\vaeobsparam=\trueobs$; conclusions 1) and 2) in Proposition \ref{th:id_nop_pscore} hold with $\PS$ replaced with $\PS_t$ in \ref{ass:pt_inj}; and the domain of $\vv$ becomes
$\mathcal{P}\coloneqq\{\PS_t(\vx)|p(t,\vx) > 0\}$.
\end{theorem}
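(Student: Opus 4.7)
My plan is to invoke the iVAE-style identifiability relation~\eqref{eq:class} with $\vtheta$ the learned parameters and $\vtheta'$ a canonical embedding of the true DGP inside the model family, then read off the required representation relation. By~\ref{ass:pt_inj} the true outcome obeys $\y = \vj_t(\PS_t(\x)) + \e$, so I take $\vf_t' := \vj_t$, $\vh_t' := \PS_t$, the same vanishing prior covariance $k\vk'(\x)$ demanded by~\ref{ass:match_noise}, and noise density $p_\beps = p_\e$. Because $\vj_t$ is injective by~\ref{ass:pt_inj} and can be taken differentiable, this comparator lies in the model class and meets~\ref{ass:model}, so equation~\eqref{eq:class} is available.

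In the noiseless limit $k\to 0$ both $p_\vtheta(\vy|\vx,t)$ and $p_{\vtheta'}(\vy|\vx,t)$ reduce to shifts of the noise density, so the hypothesis $p_\vtheta(\vy|\vx,t) = p(\vy|\vx,t)$ combined with noise matching gives $p_\beps(\vy - \vf_t(\vh_t(\vx))) = p_\e(\vy - \vj_t(\PS_t(\vx)))$ at every $(\vx,t)$ with $p(t,\vx) > 0$. Under~\ref{ass:model} and~\ref{ass:inv_jac_smp}, equation~\eqref{eq:class} applied at each positive $k$ produces an invertible affine $\mathcal{A}_t$ satisfying $\vj_t = \vf_t \circ \mathcal{A}_t$, while the shift-uniqueness argument inside the iVAE identifiability proof forces $\vf_t(\vh_t(\vx)) = \vj_t(\PS_t(\vx))$. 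Injectivity of $\vf_t$ then yields $\vh_t(\vx) = \mathcal{A}_t(\PS_t(\vx))$, and~\ref{ass:g_bcovar_smp} collapses $\mathcal{A}_0 = \mathcal{A}_1 =: \vv$. Setting $\vv$ as this common map on $\mathcal{P} := \{\PS_t(\vx)\mid p(t,\vx)>0\}$ delivers conclusion~1) with the enlarged domain; $\vv$ is injective because it is the composition of an invertible diagonal linear map $\diag(\va)$ with a translation.

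For conclusion~2), on any overlapping $\vx$ the chain $\hat{\mu}_t(\vx) = \vf_t(\vh_t(\vx)) = \vf_t(\vv(\PS_t(\vx))) = \vj_t(\PS_t(\vx)) = \mu_t(\vx)$ is immediate from conclusion~1). For $\vx$ non-overlapping at treatment $t$, the assumed overlap of $\PS_t$ supplies some overlapping $\vx'$ with $\PS_t(\vx') = \PS_t(\vx)$, and~\ref{ass:prepart} forces $\vh_t(\vx) = \vh_t(\vx')$, so the chain closes on all of $\mathcal{X}$. The principal technical obstacle I anticipate is making the $k\to 0$ limit rigorous: for each positive $k$ the prior is a genuine Gaussian and I need to verify that~\ref{ass:inv_jac_smp} persists along the limit and that matched marginals converge to matched shifts of the noise density (so that the deconvolution step producing $\vf_t(\vh_t(\vx)) = \vj_t(\PS_t(\vx))$ is legitimate). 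Once that analytic bridge is in place, everything else is algebraic bookkeeping on top of~\eqref{eq:class} and~\ref{ass:g_bcovar_smp}.
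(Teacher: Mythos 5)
Your proposal is correct and follows essentially the same route as the paper: embed the true DGP as a comparator inside the model class (the paper parametrizes this comparator by an arbitrary injective $\Delta$ with $\vf^*_t=\vj_t\circ\Delta^{-1}$, $\vh^*_t=\Delta\circ\PS_t$, of which your choice is the case $\Delta=\mathrm{id}$), apply the identifiability relation~\eqref{eq:class} to get $\vj_t=\vf_t\circ\mathcal{A}_t$, use noise matching and the shift-uniqueness of a density to force $\vf_t\circ\vh_t=\vj_t\circ\PS_t$, collapse $\mathcal{A}_0=\mathcal{A}_1=:\vv$ via~\ref{ass:g_bcovar_smp}, and finish conclusion~2) exactly as in Proposition~\ref{th:id_nop_pscore} using~\ref{ass:prepart} and overlap of $\PS_t$. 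The $k\to0$ limit you flag is treated at the same level of informality in the paper's own proof (it asserts the singularities cancel in~\eqref{eq:class} without a detailed limiting argument), so this is a shared, not a new, gap.
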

Theorem \ref{th:id_nop_ptscore} implies that, without PS, we need to know or learn the distribution of hidden noise $\beps$ to have $p_{\e}=p_{\beps}$. 
Proposition \ref{th:id_nop_pscore} and Theorem \ref{th:id_nop_ptscore} achieve recovery and identification in a complementary manner; the former starts from the prior by $\PS_0=\PS_1$ and $\vh_0=\vh_1$, 
while the latter starts from the decoder by $\mathcal{A}_0=\mathcal{A}_1$ and $p_{\e}=p_{\beps}$.
We see that $\mathcal{A}_0=\mathcal{A}_1$ acts as a kind of balance because it replaces $\PS_0=\PS_1$ (balanced PtS) in Proposition \ref{th:id_nop_pscore}. We show in Sec.~\ref{sec:proofs} a sufficient and necessary condition \ref{ass:g_bcovar} on data that ensures $\mathcal{A}_0=\mathcal{A}_1$.
Note that the singularities due to $k\to 0$ (e.g., $\blambda \to \bm 0$) cancel out in \eqref{eq:class}.
See Sec.~\ref{sec:complementarity} for more on the complementarity between the two identifications.

\section{Estimation by $\beta$-Intact-VAE}
\label{sec:estimation}

\subsection{Prior as PS, posterior as PtS, and $\beta$ as regularization strength}
\label{sec:regular}

In Sec.~\ref{sec:identification}, we see that the existence of PS (Proposition \ref{th:id_nop_pscore}) is preferable in identifying the true DGP up to an equivalent expression––while Theorem \ref{th:id_nop_ptscore} allows us to deal with PtS by adding other conditions.
In learning our model with data, we assume that there is a PtS, and the decomposition of \ref{ass:pt_inj} holds. However, such decompositions are not unique in general, and they are equivalent for CATE identification; and those with more balanced equivalent PtS are preferable. In this sense, we want to not only recover PtS, but also \textit{discover} equivalent PS, if possible. This idea is common in practice. For example, in a real-world nutrition study \citep{huang2017joint}, a reduction of 11 covariates discovers a 1-dimensional linear PS.

We consider two ways to discover and recover a equivalent PS (or balanced PtS) by a VAE. One is to use a prior which does not depend on $t$, indicating a preference for PS. Namely, we set $\blambda_0=\blambda_1=:\biglambda$ and have $p_{\biglambda}(\vz|\vx)$ as the prior in \eqref{model_indep}. The decoder and encoder are factorized Gaussians:
\begin{equation}
\label{eq:enc}
    p_{\vf,\vg}(\vy|\vz,t) =\mathcal{N}(\vy ; \vf_{t}(\vz) , \diag(\vg_{t}(\vz))),
    \mkern9mu
    q_{\bphi}(\vz|\vx,\vy,t)= \mathcal{N} ( \vz ; \vr_{t}(\vx,\vy) , \diag(\vs_{t}(\vx,\vy))),
\end{equation}
where $\bphi =(\vr,\vs)$. 
The other is to introduce a hyperparameter $\beta$ in the ELBO 
as in $\beta$-VAE \citep{higgins2016beta}. The modified ELBO with $\beta$, up to the additive constant, is derived as:
\begin{equation}
\label{eq:elbo_imp}
      \E_{\data} \{- \beta\KL(q_{\bphi}\Vert p_{\biglambda}) -\E_{\vz\sim q_{\bphi}}[(\vy-\vf_{t}(\vz))^2/2\vg^2_{t}(\vz)] - \E_{\vz\sim q_{\bphi}}\log|\vg_{t}(\vz)| \}.
\end{equation}
For convenience, here and in $\mathcal{L}_{\vf}$ in Sec.~\ref{sec:balance}, we omit the summation as if $\y$ is univariate. 
The encoder $q_{\bphi}$ depends on $t$ and can realize a PtS. With $\beta$, we control the trade-off between the first and second terms: the former is the divergence of the posterior from the balanced prior, and the latter is the reconstruction of the outcome. Note that a larger $\beta$ encourages the conditional balance $\z\independent\rt|\x$ on the posterior. By choosing $\beta$ appropriately, e.g., by validation, the ELBO can recover a balanced PtS while fitting the outcome well. In summary, we base the estimation on Proposition \ref{th:id_nop_pscore} and PS as much as possible, but step into Theorem \ref{th:id_nop_ptscore} and noise modeling required by $p_{\e}=p_{\beps}$ when necessary. 

Note also that the parameters $\vg$ and $\vk$, which model the outcome noise and express the uncertainty of the prior, respectively,  are both learned by the ELBO. This deviates from the theoretical conditions described in Sec.~\ref{sec:identification}, but it is more practical and yields better results in our experiments.
See Sec.~\ref{sec:elbo_disc} for more ideas and connections behind the ELBO. 

Once the VAE is learned\footnote{As usual, we expect the variational inference and optimization procedure to be (near) optimal; that is, consistency of VAE. \textit{Consistent estimation} using the prior is a direct corollary of the consistent VAE. see Sec.~\ref{sec:consist_vae} for formal statements and proofs. Under Gaussian models, it is possible to prove the consistency of the posterior estimation, as shown in \citet{bonhomme2019posterior}.} by the ELBO, the estimate of the expected potential outcomes is given by: 
\begin{equation}
\label{eq:cate_est}
    \hat{\mu}_{\hat{t}}(\vx)=\E_{q(\vz|\vx)}\vf_{\hat{t}}(\vz)=\E_{\data|\vx \sim p(\vy,t|\vx)}\E_{\vz \sim q_{\bphi}}\vf_{\hat{t}}(\vz),\; {\hat{t}} \in \binset,
\end{equation}
where $q(\vz|\vx)\coloneqq \E_{p(\vy,t|\vx)}\encoderparam$
is the aggregated posterior. We mainly consider the case where $\vx$ is observed in the data, and the sample of $(\y,\rt)$ is taken from the data given $\x=\vx$. When $\vx$ is not in the data, we replace $q_{\bphi}$ with $p_{\biglambda}$ in (\theequation) (see Sec.~\ref{sec:prepost} for details and \ref{sec:add_exp} for results). Note that ${\hat{t}}$ in \eqref{eq:cate_est} indicates a counterfactual assignment that may not be the same as the factual $\rt=t$ in the data. That is, we set $\rt=\hat{t}$ in the decoder. The assignment is not applied to the encoder which is learned from factual $\x,\y,\rt$ (see also the explanation of $\epsilon_{CF,t}$ in Sec.~\ref{sec:balance}). 
The overall \textbf{algorithm} steps are i) train the VAE using \eqref{eq:elbo_imp}, and ii) infer CATE $\hat{\tau}(\vx)=\hat{\mu}_1(\vx)-\hat{\mu}_0(\vx)$ by \eqref{eq:cate_est}.

\subsection{Conditionally balanced representation learning}
\label{sec:balance}

We formally justify our ELBO \eqref{eq:elbo_imp} from the BRL viewpoint. We show that the conditional BRL via the first term of the ELBO results from bounding a CATE error; particularly, the error due to the imprecise recovery of $\vj_t$ in \ref{ass:pt_inj} is controlled by the ELBO. Previous works \citep{shalit2017estimating,lu2020reconsidering} instead focus on unconditional balance and bound PEHE which is marginalized on $\x$. Sec.~\ref{sec:ihdp} experimentally shows the advantage of our bounds and ELBO. Further, we connect the bounds to identification and consider noise modeling through $\vg_t(\vz)$. Sec Sec.~\ref{sec:novel_bound} for detail. 
We introduce the objective that we bound. Using \eqref{eq:cate_est} to estimate CATE, $\hat{\tau}_{\vf}(\vz)\coloneqq\vf_1(\vz)-\vf_0(\vz)$ is marginalized on $q(\vz|\vx)$. On the other hand, the \textit{true CATE}, given the covariate $\vx$ or score $\vz$, is:
\begin{equation}
\label{eq:taus}
    \tau(\vx)=\vj_1(\PS_1(\vx))-\vj_0(\PS_0(\vx)),\quad \tau_{\vj}(\vz)=\vj_1(\vz)-\vj_0(\vz),
\end{equation}
where
$\vj_t$ is associated with a balanced PtS $\PS_t$ discovered as the target of recovery by our VAE. 
Accordingly, given $\vx$,
the \textit{error of posterior CATE}, with or without knowing $\PS_t$, is defined as
\begin{equation}
    \epsilon_{\vf}^{*}(\vx) \coloneqq \E_{q(\vz|\vx)}(\hat{\tau}_{\vf}(\vz)-\tau(\vx))^2;\quad
    \epsilon_{\vf}(\vx) \coloneqq \E_{q(\vz|\vx)}(\hat{\tau}_{\vf}(\vz)-\tau_{\vj}(\vz))^2.
\end{equation}
We bound $\epsilon_{\vf}$ instead of $\epsilon_{\vf}^{*}$ because the error between $\tau(\x)$ and $\tau_{\vj}(\z)$ is small––if the balanced $\PS_t$ is recovered, then $\vz\approx\PS_0(\vx)\approx\PS_1(\vx)$ in \eqref{eq:taus}. We consider the error between $\hat{\tau}_{\vf}$ and $\tau_{\vj}$ below. 
We define the risks of outcome regression, into which $\epsilon_{\vf}$ is decomposed.
\begin{definition}[CATE risks]
Let 
$\y({\hat t})|\PS_{\hat t}(\x) \sim p_{\y({\hat t})|\PS_{\hat t}}(\vy|P)$ and 
$q_t(\vz|\vx)\coloneqq q(\vz|\vx,t)=\E_{p(\vy|\vx,t)} q_{\bphi}$. The \textit{potential outcome loss} at $(\vz, t)$, \textit{factual risk}, and \textit{counterfactual risk} are: 
\begin{equation*}
\begin{gathered}
    \mathcal{L}_{\vf}(\vz,\hat{t})\coloneqq \E_{p_{\y({\hat t})|\PS_{\hat t}}(\vy|P=\vz)}(\vy-\vf_{\hat t}(\vz))^2/\vg_{\hat t}(\vz)^{2}
    =\vg_{\hat t}(\vz)^{-2}\textstyle\int (\vy-\vf_{\hat t}(\vz))^2 p_{\y({\hat t})|\PS_{\hat t}}(\vy|\vz)d\vy;
    \\
    \epsilon_{F,t}(\vx) \coloneqq \E_{q_t(\vz|\vx)}\mathcal{L}_{\vf}(\vz,t);\quad \epsilon_{CF,t}(\vx) \coloneqq \E_{q_{1-t}(\vz|\vx)}\mathcal{L}_{\vf}(\vz,t).
\end{gathered}
\end{equation*}
\end{definition}
With $\y(t)$ involved, $\mathcal{L}_{\vf}$ is a potential outcome loss on $\vf$, weighted by $\vg$. The factual and counterfactual counterparts, $\epsilon_{F,t}$ and $\epsilon_{CF,t}$, are defined accordingly. In $\epsilon_{F,t}$, unit $\vu=(\vx, \vy, t)$ is involved in the learning of $q_t(\vz|\vx)$, as well as in $\mathcal{L}_{\vf}(\vz,t)$ since $\y(t)=\vy$ for the unit. In $\epsilon_{CF,t}$, however, unit $\vu'=(\vx, \vy', 1-t)$ is involved in $q_{1-t}(\vz|\vx)$, but not in $\mathcal{L}_{\vf}(\vz,t)$ since $\y(t)\neq\vy'=\y(1-t)$. 

Thus, \textit{the regression error (second) term in ELBO \eqref{eq:elbo_imp} controls $\epsilon_{F,t}$ via factual data}.
On the other hand, $\epsilon_{CF,t}$ is not estimable due to the unobservable $\y(1-\rt)$, but 
is bounded by $\epsilon_{F,t}$ plus $M\mathbb{D}(\vx)$ in 
Theorem \ref{th:gen_bound} below––which, in turn, bounds $\epsilon_{\vf}$ by decomposing it to $\epsilon_{F,t}$, $\epsilon_{CF,t}$, and $\mathbb{V}_{\y}$.
\begin{theorem}[CATE error bound]
\label{th:gen_bound}
Assume $|\mathcal{L}_{\vf}(\vz,t)| \leq M$ and $|\vg_t(\vz)| \leq G$, then:
\begin{equation}
\begin{split}
\textstyle
    \epsilon_{\vf}(\vx) \leq 2[G^2(\epsilon_{F,0}(\vx) + \epsilon_{F,1}(\vx) +M\mathbb{D}(\vx)) - \mathbb{V}_{\y}(\vx)]
\end{split}
\end{equation}
where $\mathbb{D}(\vx)\coloneqq \sum_t\sqrt{\KL(q_t \Vert q_{1-t})/2}$,
and $\mathbb{V}_{\y}(\vx)\coloneqq\E_{q(\vz|\vx)}\sum_t\E_{p_{\y({t})|\PS_{t}}(\vy|\vz)}(\vy-\vj_{t}(\vz))^2$.
\end{theorem}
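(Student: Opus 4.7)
The plan is to bound $\epsilon_{\vf}(\vx)$ in three stages: first decompose the CATE error into per-treatment regression gaps, then exchange each gap for the reconstruction risk $\mathcal{L}_{\vf}$ via an $L^2$ orthogonality (which is what produces the negative $\mathbb{V}_{\y}(\vx)$ term), and finally pass from the marginal posterior $q(\vz|\vx)$ to the factual posterior $q_t(\vz|\vx)$ via Pinsker's inequality.

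For the first stage I would apply the elementary inequality $(a-b)^2 \leq 2a^2 + 2b^2$ to $\hat{\tau}_{\vf}(\vz) - \tau_{\vj}(\vz) = (\vf_1(\vz) - \vj_1(\vz)) - (\vf_0(\vz) - \vj_0(\vz))$, yielding
\begin{equation*}
\epsilon_{\vf}(\vx) \leq 2 \sum_t \E_{q(\vz|\vx)}(\vf_t(\vz) - \vj_t(\vz))^2 .
\end{equation*}
For the second stage, the crucial observation is that under \ref{ass:pt_inj} one has $\E(\y(t) \mid \PS_t(\x) = \vz) = \vj_t(\vz)$, since the PtS property $\y(t) \independent \x \mid \PS_t(\x)$ collapses the conditional expectation to $\E(\y(t) \mid \x) = \mu_t(\x) = \vj_t(\PS_t(\x))$. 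Therefore $\vj_t(\vz)$ is the $L^2$-projection under $p_{\y(t)|\PS_t}(\cdot \mid \vz)$, and expanding a square gives
\begin{equation*}
(\vj_t(\vz) - \vf_t(\vz))^2 = \E_{p_{\y(t)|\PS_t}(\vy|\vz)}(\vy - \vf_t(\vz))^2 - \E_{p_{\y(t)|\PS_t}(\vy|\vz)}(\vy - \vj_t(\vz))^2 .
\end{equation*}
Bounding the first term on the right by $G^2 \mathcal{L}_{\vf}(\vz, t)$ via $\vg_t(\vz)^2 \leq G^2$, then taking $\E_{q(\vz|\vx)}$ and summing over $t$, yields
\begin{equation*}
\sum_t \E_{q(\vz|\vx)}(\vf_t(\vz) - \vj_t(\vz))^2 \leq G^2 \sum_t \E_{q(\vz|\vx)} \mathcal{L}_{\vf}(\vz, t) - \mathbb{V}_{\y}(\vx) .
\end{equation*}

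The third stage converts $\E_{q(\vz|\vx)} \mathcal{L}_{\vf}(\vz, t)$ into $\epsilon_{F,t}(\vx)$. Using $q(\vz|\vx) = \sum_{t'} p(t'|\vx) q_{t'}(\vz|\vx)$, I would write $\E_{q(\vz|\vx)} \mathcal{L}_{\vf}(\vz, t) = \E_{q_t} \mathcal{L}_{\vf}(\vz, t) + p(1-t|\vx)(\E_{q_{1-t}} - \E_{q_t}) \mathcal{L}_{\vf}(\vz, t)$. Because $|\mathcal{L}_{\vf}(\vz, t)| \leq M$, the oscillation bound $|\E_P f - \E_Q f| \leq M \cdot \mathrm{TV}(P,Q)$ together with Pinsker's inequality $\mathrm{TV} \leq \sqrt{\KL/2}$ (choosing the direction of KL that matches $\mathbb{D}$) gives $\E_{q(\vz|\vx)} \mathcal{L}_{\vf}(\vz, t) \leq \epsilon_{F,t}(\vx) + M \sqrt{\KL(q_t \| q_{1-t})/2}$. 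Summing on $t$ produces $\epsilon_{F,0}(\vx) + \epsilon_{F,1}(\vx) + M \mathbb{D}(\vx)$, and plugging this into the previous display produces the claimed inequality.

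The main step that requires care is the orthogonality identity $\E(\y(t) \mid \PS_t(\x) = \vz) = \vj_t(\vz)$: it is what makes $\mathbb{V}_{\y}(\vx)$ enter with a negative sign on the right-hand side rather than as slack, and it relies on both the conditional independence in the definition of a PtS and the injective $\vj_t$ factorization in \ref{ass:pt_inj}. The remaining ingredients — the $(a-b)^2 \leq 2a^2 + 2b^2$ decomposition, the bound by $G^2$, and the TV/Pinsker control — are standard once the boundedness hypotheses on $\vg_t$ and $\mathcal{L}_{\vf}$ are in place.
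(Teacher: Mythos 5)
Your proof is correct and takes essentially the same route as the paper's, which assembles the identical three ingredients --- the $(a+b)^2\le 2(a^2+b^2)$ split, the bias--variance identity built on $\vj_t(\vz)=\E(\y(t)\mid \PS_t(\x)=\vz)$, and the TV/Pinsker control of the gap between the marginal posterior $q(\vz|\vx)$ and the factual posteriors $q_t(\vz|\vx)$ --- into Lemmas \ref{th:cf_bound} and \ref{th:error_deomp} of the appendix; your only change is to split $q(\vz|\vx)$ into $q_0,q_1$ after the bias--variance step rather than before. One bookkeeping note: your per-$t$ display $\E_{q(\vz|\vx)}\mathcal{L}_{\vf}(\vz,t)\le \epsilon_{F,t}(\vx)+M\sqrt{\KL(q_t\Vert q_{1-t})/2}$ actually holds with the excess $2p(1-t|\vx)M\,\mathbb{TV}(q_0,q_1)$, which need not be below $M\sqrt{\KL(q_t\Vert q_{1-t})/2}$ when $p(1-t|\vx)>1/2$; but since the weights $p(1-t|\vx)$ sum to one over $t$ and $\mathbb{D}$ contains a Pinsker term for each KL direction, the summed bound $\epsilon_{F,0}+\epsilon_{F,1}+M\mathbb{D}$ that you actually plug in is exactly right.
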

$\mathbb{D}(\vx)$ measures the imbalance between $q_t(\z|\vx)$ and is symmetric for $t$. 
Correspondingly, \textit{the KL term in ELBO \eqref{eq:elbo_imp} is also symmetric for $t$ and balances $q_t(\vz|\vx)$} by encouraging $\z\independent\rt|\x$ for the posterior. $\mathbb{V}_{\y}(\vx)$ reflects the intrinsic variance in the DGP and can not be controlled.

Estimating $G,M$ is nontrivial. Instead, we rely on $\beta$ in the ELBO to weight the terms in (\theequation). We do not need two hyperparameters since \textit{$G$ is implicitly controlled by the third term in ELBO \eqref{eq:elbo_imp}}, which is a norm constraint. 
$\beta$ is a trade-off between the conditional balance of learned PtS (affected by $\vf_t$), and precision/effective sample size of outcome regression––and can be seen as the probabilistic counterpart of \citet{tarr2021estimating} and \citet{kallus2018more}. 





\section{Experiments}

We compare our method with existing methods on three types of datasets. Here, we present two experiments; the remaining one on the Pokec dataset is deferred to Sec.~\ref{sec:pokec}.
As in previous works \citep{shalit2017estimating, louizos2017causal}, we report the absolute error of ATE $\epsilon_{ate}\coloneqq |\E_{\data}(y(1)-y(0)) - \E_{\data}\hat{\tau}(\vx)|$ and, as a surrogate of square CATE error $\epsilon_{cate}(\vx)=\E_{\data|\vx}[(y(1)-y(0))-\hat{\tau}(\vx)]^2$, the empirical PEHE $\epsilon_{pehe}\coloneqq \E_{\data}\epsilon_{cate}(\vx)$  \citep{hill2011bayesian}, which is the average square CATE error. 
Unless otherwise indicated, for each function $\vf,\vg,\vh,\vk,\vr,\vs$ in ELBO \eqref{eq:elbo_imp}, we use a multilayer perceptron, with $3 * 200$ hidden units, and ReLU activations. Further, $\biglambda=(\vh,\vk)$ depends only on $\rvx$. 
The Adam optimizer with initial learning rate $10^{-4}$ and batch size 100 is employed.
All experiments use early-stopping of training by evaluating the ELBO on a validation set. 
More details on hyper-parameters and settings are given in each experiment.

\subsection{Synthetic dataset}
\label{sec:exp_syn}

\begingroup

\begin{equation}
\label{art_model}
\begin{split}
\textstyle
    \rvw|\rvx \sim \mathcal{N}(\vh(\rvx), \vk(\rvx));\;
    \rt|\rvx \sim \bern(\logi(\omega l(\rvx)));\;
    \ry|\rvw,\rt \sim \mathcal{N}(f_{\rt}(\rvw), g_{\rt}(\rvw)). 
\end{split}
\end{equation}

\vspace{-.05in}
We generate synthetic datasets following \eqref{art_model}. Both $\x\sim \mathcal{N}(\bm\mu, \bsig)$ and $\rvw$ are factorized Gaussians. $\bm\mu, \bsig$ are randomly sampled. The functions $\vh,\vk,l$ are linear.
Outcome models $f_0,f_1$ are built by NNs with invertible activations. $\ry$ is univariate, $\dim(\x)=30$, and $\dim(\rvw)$ ranges from 1 to 5. $\rvw$ is a PS, but the dimensionality is not low enough to satisfy the injectivity in \ref{ass:p_inj}, when $\dim(\rvw) > 1$.
We have 5 different overlap levels controlled by $\omega$ that multiplies the logit value. See Sec.~\ref{sec:exp_syn_app} for details and more results on synthetic datasets.

\setlength{\columnsep}{8pt}
\setlength{\intextsep}{5pt}

\begin{wrapfigure}{r}{0.38\textwidth}
\vspace{-.35in}
  \begin{center}
    \includegraphics[width=0.38\textwidth]{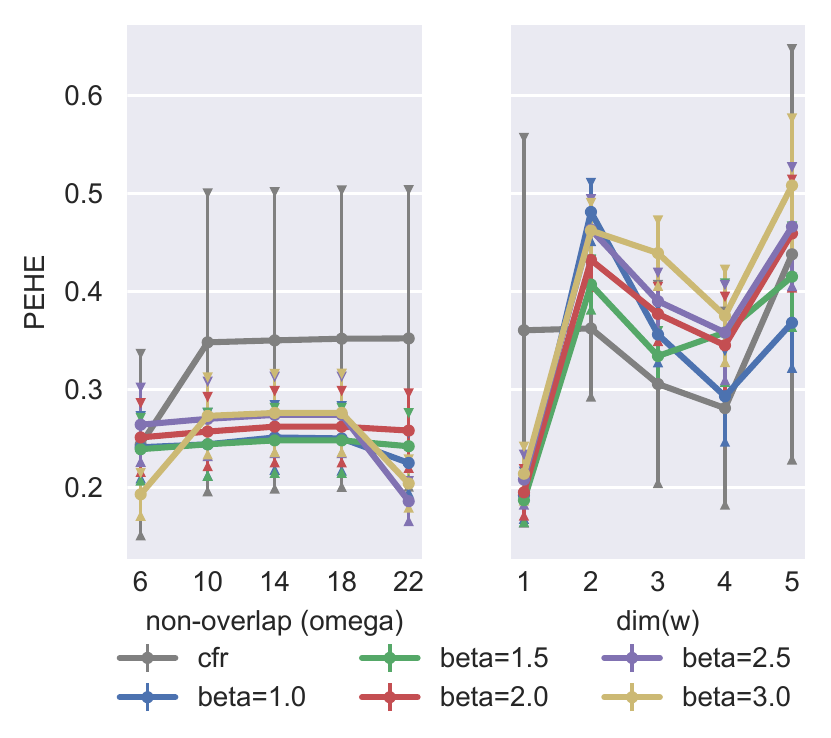}
  \end{center}
  \vspace{-.2in}
  
  \caption{\footnotesize{$\sqrt{\epsilon_{pehe}}$ on synthetic datasets. Error bar on 10 random DGPs.}}
\vspace{-.15in}
\label{fig:depn}
\end{wrapfigure}


With the same $(\dim(\rvw),\omega)$, we evaluate our method and CFR on 10 random DGPs, with different sets of functions $f,g,\vh,\vk,l$ in \eqref{art_model}. For each DGP, we sample 1500 data points, and split them into 3 equal sets for training, validation, and testing. We show our results for different hyperparameter $\beta$. For CFR, we try different balancing parameters and present the best results (see the Appendix for detail). 

In each panel of Figure \ref{fig:depn}, we adjust one of $\omega, \dim(\rvw)$, with the other fixed to the lowest. As implied by our theory, our method, with only \textit{1-dimensional} $\z$, performs much better in the left panel (where $\dim(\rvw) = 1$ satisfies $\ref{ass:p_inj}$) than in the right panel (when $\dim(\rvw) > 1$). Although CFR uses 200-dimensional representation, in the left panel our method performs much better than CFR; moreover, in the right panel CFR is not much better than ours. 
Further, our method is much more robust against different DGPs than CFR (see the error bars). Thus, the results indicate
the power of identification and recovery of scores. 
(see Figure \ref{recover} also).

\setlength{\columnsep}{8pt}
\setlength{\intextsep}{5pt}
\begin{wrapfigure}{r}{0.38\textwidth}
\vspace{-.1in}
  \begin{center}
    \includegraphics[width=0.38\textwidth]{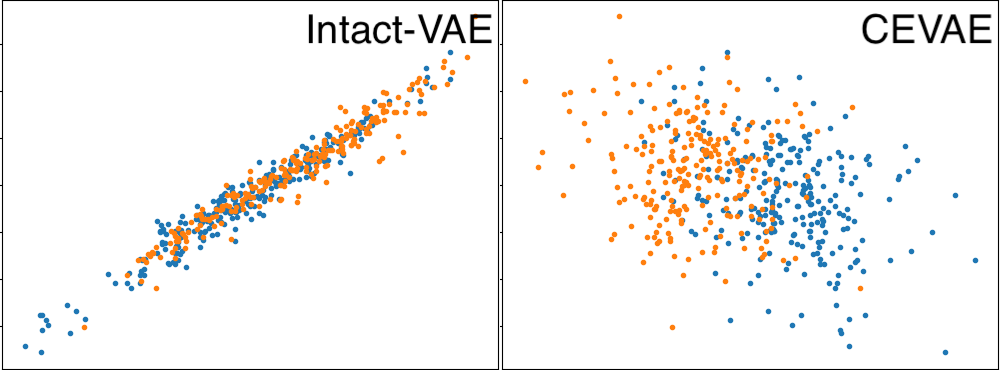}
  \end{center}
   \vspace{-.18in}
 
  \caption{\footnotesize{Plots of recovered 
  - true 
  latent. Blue: $t=0$, Orange: $t=1$.}} 
\vspace{-.1in}
\label{recover}
\end{wrapfigure}


Under the lowest overlap level ($\omega=22$), large $\beta(=2.5,3)$ shows the best results, which accords with the intuition and bounds in Sec.~\ref{sec:estimation}. 
When $\dim(\rvw) > 1$, $f_t$ in (\theequation) is non-injecitve and learning of PtS is necessary, 
and thus, larger $\beta$ has a negative effect.
In fact, $\beta=1$ is significantly better than $\beta=3$ when $\dim(\rvw)>2$. We note that our method, with a higher-dimensional $\z$, outperforms or matches CFR also under $\dim(\rvw)>1$ (see Appendix Figure \ref{fig:z200}). Thus, the performance gap under $\dim(\rvw)>1$ in Figure \ref{fig:depn} should be due to the capacity of NNs in $\beta$-Intact-VAE. In Appendix Figure \ref{fig:ate} for ATE error, CFR drops performance w.r.t overlap levels. This is evidence that CFR and its unconditional balance overly focus on PEHE (see Sec.~\ref{sec:ihdp} for detail).

When $\dim(\rvw)=1$, there are no better PSs than $\rvw$, because $f_t$ is invertible and no information can be dropped from $\rvw$.
Thus, our method stably learns $\z$ as an approximate affine transformation of the true $\rvw$, showing identification. An example is shown in Figure \ref{recover}, and more plots are in Appendix Figure \ref{fig:recover}. 
For comparison, we run CEVAE \citep{louizos2017causal}, which is also based on VAE but without identification; CEVAE shows much lower quality of recovery.
As expected, both recovery and estimation are better with the balanced prior $p_{\biglambda}(\vz|\vx)$, and we can see examples of bad recovery using $p_{\blambda}(\vz|\vx,t)$ in Appendix Figure \ref{fig:bad_recover}.

\endgroup

\subsection{IHDP benchmark dataset}
\label{sec:ihdp}


This experiment shows our conditional BRL matches state-of-the-art BRL methods and does not overly focus on PEHE. The IHDP \citep{hill2011bayesian} is a widely used benchmark dataset; while it is less known, its covariates are limited-overlapping, and thus it is used in \citet{johansson2020generalization} which considers limited overlap. The dataset is based on an RCT, but \texttt{Race} is artificially introduced as a confounder by removing all treated babies with nonwhite mothers in the data. Thus, \texttt{Race} is highly limited-overlapping, and other covariates that have high correlation to \texttt{Race}, e.g, \texttt{Birth weight} \citep{kelly2009does}, are also limited-overlapping.
See Sec.~\ref{sec:ihdp_app} for detail and more results.

There is a linear PS (linear combination of the covariates). However, most of the covariates are binary, so the support of the PS is often on small and separated intervals.
Thus, the Gaussian latent $\z$  in our model is misspecified.
We use 10-dimensional $\z$ to address this, similar to \citet{louizos2017causal}. 
We set $\beta=1$ since it works well on synthetic datasets with limited overlap.

As shown in Table \ref{t:ihdp}, $\beta$-Intact-VAE outperforms or matches the state-of-the-art methods. 
Notably, our method outperforms other generative models (CEVAE and GANITE) by large margins. 
Our method has the best ATE estimation and is only slightly worse than CFR for $\epsilon_{pehe}$. 
This fact reflects that $\epsilon_{pehe}$ is not a good criterion for CATE estimation because it is 
the marginalized CATE error––one expects less ATE error with overall less CATE error on individual-level, while PEHE focuses on those $\x=\vx$ with high probability and/or large $\epsilon_{cate}(\vx)$. Indeed, the unconditional balance
in \cite{shalit2017estimating} is based on bounding PEHE, thus results in sub-optimal ATE estimation (see also Appendix Figure \ref{fig:ate} where CFR gives larger ATE errors with less overlap). 

\begin{table}[h]

\centering
\scriptsize
\vspace{-.05in}
\caption{Errors on IHDP over 1000 random DGPs. 
``Mod.~*'' indicates the modified version with unconditional balancing hyperparameter of value ``*''. \textit{Italic} indicates where the modified version is 
significantly worse than the original.
\textbf{Bold} indicates method(s) which is 
significantly better than others. The results of other methods are taken from \cite{shalit2017estimating}, GANITE \citep{yoon2018ganite}, and CEVAE \citep{louizos2017causal}.} 

\vspace{-.05in}
\begin{tabular}{p{.6cm}p{.5cm}p{.5cm}p{.5cm}p{.5cm}p{.6cm}p{.6cm}p{.8cm}p{.8cm}p{.8cm}p{.8cm}p{.8cm}p{.8cm}}
\toprule 
Method &{TMLE} &{BNN} &{CFR} &{CF} &{CEVAE} &{GANITE} &{Ours} &{Mod. 1} &{Mod.~0.2} &{Mod.~0.1} &{Mod.~0.05} &{Mod.~0.01}\\
\midrule 
$\epsilon_{ate}$ &.30$_{\pm .01}$ &.37$_{\pm .03}$  
&.25$_{\pm.01}$  &\textbf{.18}$_{\pm.01}$ &.34$_{\pm.01}$   &.43$_{\pm.05}$  &\textbf{.177}$_{\pm .007}$ &\textit{.196}$_{\pm .008}$
&\textbf{.177}$_{\pm.007}$ &\textbf{.167}$_{\pm .005}$ &\textbf{.177}$_{\pm .006}$ &\textbf{.179}$_{\pm.006}$
\\
\midrule 
$\sqrt{\epsilon_{pehe}}$ &5.0$_{\pm .2}$ &2.2$_{\pm .1}$  
&\textbf{.71}$_{\pm.02}$ &3.8$_{\pm.2}$ &2.7$_{\pm.1 }$  &1.9$_{\pm .4}$ &.843$_{\pm .030}$ &\textit{1.979}$_{\pm .082}$ 
&\textit{1.116}$_{\pm.046}$ &$.777_{\pm .026}$ &.894$_{\pm.039}$ &.841$_{\pm .029}$
\\
\bottomrule 
\end{tabular}
\label{t:ihdp}
\vspace{-.05in}
\end{table}

To examine conditional v.s unconditional balance clearly, we modify our method and add two components for unconditional balance from \citet{shalit2017estimating} (see the Appendix), and compare the modified version to the original. 
The over-focus on PEHE of the unconditional balance from CFR is seen more clearly in the modified version. With different values of the hyperparameter,  unconditional balance does not improve (and barely affects) ATE estimation; it does affect PEHE more significantly, but often gives worse PEHE unless the hyperparameter is fine-tuned (with value 0.1). 

\vspace{-1mm}
\section{Conclusion}
\vspace{-1mm}
We proposed a method for CATE estimation under limited overlap. 
Our method exploits identifiable VAE, a recent advance in generative models, and is fully motivated and theoretically justified by causal considerations: identification, prognostic score, and balance.
Experiments show evidence that the injectivity of $\vf_t$ in our model is possibly unnecessary because $\dim(\z)>\dim(\y)$ yields better results. A theoretical study of this is an interesting future direction.
We believe that VAEs are suitable for \textit{principled} causal inference owing to their probabilistic nature, if not compromised by ad hoc heuristics (see Sec.~\ref{sec:vae_te}).
\citet[Sec.~4.2]{wu2021principled} introduce some newest ideas of this project.
 

\bibliography{main}

\begin{thebibliography}{79}
\providecommand{\natexlab}[1]{#1}
\providecommand{\url}[1]{\texttt{#1}}
\expandafter\ifx\csname urlstyle\endcsname\relax
  \providecommand{\doi}[1]{doi: #1}\else
  \providecommand{\doi}{doi: \begingroup \urlstyle{rm}\Url}\fi

\bibitem[Abrevaya et~al.(2015)Abrevaya, Hsu, and Lieli]{abrevaya2015estimating}
Jason Abrevaya, Yu-Chin Hsu, and Robert~P Lieli.
\newblock Estimating conditional average treatment effects.
\newblock \emph{Journal of Business \& Economic Statistics}, 33\penalty0
  (4):\penalty0 485--505, 2015.

\bibitem[Alaa \& van~der Schaar(2017)Alaa and van~der Schaar]{alaa2017bayesian}
Ahmed~M Alaa and Mihaela van~der Schaar.
\newblock Bayesian inference of individualized treatment effects using
  multi-task gaussian processes.
\newblock In \emph{Advances in Neural Information Processing Systems}, pp.\
  3424--3432, 2017.

\bibitem[Armstrong \& Koles{\'a}r(2021)Armstrong and
  Koles{\'a}r]{armstrong2021finite}
Timothy~B Armstrong and Michal Koles{\'a}r.
\newblock Finite-sample optimal estimation and inference on average treatment
  effects under unconfoundedness.
\newblock \emph{Econometrica}, 89\penalty0 (3):\penalty0 1141--1177, 2021.

\bibitem[Bonhomme \& Weidner(2021)Bonhomme and Weidner]{bonhomme2019posterior}
St{\'e}phane Bonhomme and Martin Weidner.
\newblock Posterior average effects.
\newblock \emph{arXiv preprint arXiv:1906.06360v5}, 2021.

\bibitem[Caron et~al.(2020)Caron, Manolopoulou, and Baio]{caron2020estimating}
Alberto Caron, Ioanna Manolopoulou, and Gianluca Baio.
\newblock Estimating individual treatment effects using non-parametric
  regression models: a review.
\newblock \emph{arXiv preprint arXiv:2009.06472}, 2020.

\bibitem[Chernozhukov \& Hansen(2013)Chernozhukov and
  Hansen]{chernozhukov2013quantile}
Victor Chernozhukov and Christian Hansen.
\newblock Quantile models with endogeneity.
\newblock \emph{Annu. Rev. Econ.}, 5\penalty0 (1):\penalty0 57--81, 2013.

\bibitem[Chetverikov \& Wilhelm(2017)Chetverikov and
  Wilhelm]{chetverikov2017nonparametric}
Denis Chetverikov and Daniel Wilhelm.
\newblock Nonparametric instrumental variable estimation under monotonicity.
\newblock \emph{Econometrica}, 85\penalty0 (4):\penalty0 1303--1320, 2017.

\bibitem[Chetverikov et~al.(2018)Chetverikov, Santos, and
  Shaikh]{chetverikov2018econometrics}
Denis Chetverikov, Andres Santos, and Azeem~M Shaikh.
\newblock The econometrics of shape restrictions.
\newblock \emph{Annual Review of Economics}, 10:\penalty0 31--63, 2018.

\bibitem[Cuturi(2013)]{cuturi2013sinkhorn}
Marco Cuturi.
\newblock Sinkhorn distances: Lightspeed computation of optimal transport.
\newblock In \emph{Advances in neural information processing systems}, pp.\
  2292--2300, 2013.

\bibitem[Dai \& Stultz(2020)Dai and Stultz]{dai2020quantifying}
Wangzhi Dai and Collin~M Stultz.
\newblock Quantifying common support between multiple treatment groups using a
  contrastive-vae.
\newblock In \emph{Machine Learning for Health}, pp.\  41--52. PMLR, 2020.

\bibitem[D'Amour \& Franks(2021)D'Amour and Franks]{d2021deconfounding}
Alexander D'Amour and Alexander Franks.
\newblock Deconfounding scores: Feature representations for causal effect
  estimation with weak overlap.
\newblock \emph{arXiv preprint arXiv:2104.05762}, 2021.

\bibitem[Doersch(2016)]{doersch2016tutorial}
Carl Doersch.
\newblock Tutorial on variational autoencoders.
\newblock \emph{arXiv preprint arXiv:1606.05908}, 2016.

\bibitem[D’Amour et~al.(2020)D’Amour, Ding, Feller, Lei, and
  Sekhon]{d2020overlap}
Alexander D’Amour, Peng Ding, Avi Feller, Lihua Lei, and Jasjeet Sekhon.
\newblock Overlap in observational studies with high-dimensional covariates.
\newblock \emph{Journal of Econometrics}, 2020.

\bibitem[Farrell(2015)]{farrell2015robust}
Max~H Farrell.
\newblock Robust inference on average treatment effects with possibly more
  covariates than observations.
\newblock \emph{Journal of Econometrics}, 189\penalty0 (1):\penalty0 1--23,
  2015.

\bibitem[Freyberger \& Horowitz(2015)Freyberger and
  Horowitz]{freyberger2015identification}
Joachim Freyberger and Joel~L Horowitz.
\newblock Identification and shape restrictions in nonparametric instrumental
  variables estimation.
\newblock \emph{Journal of Econometrics}, 189\penalty0 (1):\penalty0 41--53,
  2015.

\bibitem[Gan \& Li(2016)Gan and Li]{gan2016efficiency}
Li~Gan and Qi~Li.
\newblock Efficiency of thin and thick markets.
\newblock \emph{Journal of Econometrics}, 192\penalty0 (1):\penalty0 40--54,
  2016.

\bibitem[Gopalan \& Blei(2013)Gopalan and Blei]{gopalan2013efficient}
Prem~K Gopalan and David~M Blei.
\newblock Efficient discovery of overlapping communities in massive networks.
\newblock \emph{Proceedings of the National Academy of Sciences}, 110\penalty0
  (36):\penalty0 14534--14539, 2013.

\bibitem[Hajage et~al.(2017)Hajage, De~Rycke, Chauvet, and
  Tubach]{hajage2017estimation}
David Hajage, Yann De~Rycke, Guillaume Chauvet, and Florence Tubach.
\newblock Estimation of conditional and marginal odds ratios using the
  prognostic score.
\newblock \emph{Statistics in medicine}, 36\penalty0 (4):\penalty0 687--716,
  2017.

\bibitem[Hansen(2008)]{hansen2008prognostic}
Ben~B Hansen.
\newblock The prognostic analogue of the propensity score.
\newblock \emph{Biometrika}, 95\penalty0 (2):\penalty0 481--488, 2008.

\bibitem[Hassanpour \& Greiner(2019)Hassanpour and
  Greiner]{hassanpour2019learning}
Negar Hassanpour and Russell Greiner.
\newblock Learning disentangled representations for counterfactual regression.
\newblock In \emph{International Conference on Learning Representations}, 2019.

\bibitem[Hernan \& Robins(2020)Hernan and
  Robins]{hernanCausalInferenceWhat2020}
Miguel~A. Hernan and James~M. Robins.
\newblock \emph{Causal {{Inference}}: {{What If}}}.
\newblock {CRC Press}, 1st edition, 2020.
\newblock ISBN 978-1-4200-7616-5.

\bibitem[Higgins et~al.(2017)Higgins, Matthey, Pal, Burgess, Glorot, Botvinick,
  Mohamed, and Lerchner]{higgins2016beta}
Irina Higgins, Lo{\"{\i}}c Matthey, Arka Pal, Christopher Burgess, Xavier
  Glorot, Matthew Botvinick, Shakir Mohamed, and Alexander Lerchner.
\newblock beta-vae: Learning basic visual concepts with a constrained
  variational framework.
\newblock In \emph{5th International Conference on Learning Representations},
  2017.
\newblock URL \url{https://openreview.net/forum?id=Sy2fzU9gl}.

\bibitem[Hill(2011)]{hill2011bayesian}
Jennifer~L Hill.
\newblock Bayesian nonparametric modeling for causal inference.
\newblock \emph{Journal of Computational and Graphical Statistics}, 20\penalty0
  (1):\penalty0 217--240, 2011.

\bibitem[Hong et~al.(2020)Hong, Leung, and Li]{hong2020inference}
Han Hong, Michael~P Leung, and Jessie Li.
\newblock Inference on finite-population treatment effects under limited
  overlap.
\newblock \emph{The Econometrics Journal}, 23\penalty0 (1):\penalty0 32--47,
  2020.

\bibitem[Huang \& Chan(2017)Huang and Chan]{huang2017joint}
Ming-Yueh Huang and Kwun Chuen~Gary Chan.
\newblock Joint sufficient dimension reduction and estimation of conditional
  and average treatment effects.
\newblock \emph{Biometrika}, 104\penalty0 (3):\penalty0 583--596, 2017.

\bibitem[Huber \& W{\"u}thrich(2018)Huber and W{\"u}thrich]{huber2018local}
Martin Huber and Kaspar W{\"u}thrich.
\newblock Local average and quantile treatment effects under endogeneity: a
  review.
\newblock \emph{Journal of Econometric Methods}, 8\penalty0 (1), 2018.

\bibitem[Imbens \& Rubin(2015)Imbens and Rubin]{imbens2015causal}
Guido~W Imbens and Donald~B Rubin.
\newblock \emph{Causal inference in statistics, social, and biomedical
  sciences}.
\newblock Cambridge University Press, 2015.

\bibitem[Janzing \& Scholkopf(2010)Janzing and Scholkopf]{janzing2010causal}
Dominik Janzing and Bernhard Scholkopf.
\newblock Causal inference using the algorithmic markov condition.
\newblock \emph{IEEE Transactions on Information Theory}, 56\penalty0
  (10):\penalty0 5168--5194, 2010.

\bibitem[Jesson et~al.(2020)Jesson, Mindermann, Shalit, and
  Gal]{jesson2020identifying}
Andrew Jesson, S{\"o}ren Mindermann, Uri Shalit, and Yarin Gal.
\newblock Identifying causal-effect inference failure with uncertainty-aware
  models.
\newblock \emph{Advances in Neural Information Processing Systems}, 33, 2020.

\bibitem[Johansson et~al.(2016)Johansson, Shalit, and
  Sontag]{johansson2016learning}
Fredrik Johansson, Uri Shalit, and David Sontag.
\newblock Learning representations for counterfactual inference.
\newblock In \emph{International conference on machine learning}, pp.\
  3020--3029, 2016.

\bibitem[Johansson et~al.(2019)Johansson, Sontag, and
  Ranganath]{johansson2019support}
Fredrik~D Johansson, David Sontag, and Rajesh Ranganath.
\newblock Support and invertibility in domain-invariant representations.
\newblock In \emph{The 22nd International Conference on Artificial Intelligence
  and Statistics}, pp.\  527--536. PMLR, 2019.

\bibitem[Johansson et~al.(2020)Johansson, Shalit, Kallus, and
  Sontag]{johansson2020generalization}
Fredrik~D Johansson, Uri Shalit, Nathan Kallus, and David Sontag.
\newblock Generalization bounds and representation learning for estimation of
  potential outcomes and causal effects.
\newblock \emph{arXiv preprint arXiv:2001.07426}, 2020.

\bibitem[Kallus et~al.(2018)Kallus, Pennicooke, and
  Santacatterina]{kallus2018more}
Nathan Kallus, Brenton Pennicooke, and Michele Santacatterina.
\newblock More robust estimation of sample average treatment effects using
  kernel optimal matching in an observational study of spine surgical
  interventions.
\newblock \emph{arXiv preprint arXiv:1811.04274}, 2018.

\bibitem[Kelly et~al.(2009)Kelly, Panico, Bartley, Marmot, Nazroo, and
  Sacker]{kelly2009does}
Yvonne Kelly, Lidia Panico, Mel Bartley, Michael Marmot, James Nazroo, and
  Amanda Sacker.
\newblock Why does birthweight vary among ethnic groups in the uk? findings
  from the millennium cohort study.
\newblock \emph{Journal of public health}, 31\penalty0 (1):\penalty0 131--137,
  2009.

\bibitem[Khemakhem et~al.(2020{\natexlab{a}})Khemakhem, Kingma, Monti, and
  Hyvarinen]{khemakhem2020variational}
Ilyes Khemakhem, Diederik Kingma, Ricardo Monti, and Aapo Hyvarinen.
\newblock Variational autoencoders and nonlinear ica: A unifying framework.
\newblock In \emph{International Conference on Artificial Intelligence and
  Statistics}, pp.\  2207--2217, 2020{\natexlab{a}}.

\bibitem[Khemakhem et~al.(2020{\natexlab{b}})Khemakhem, Monti, Kingma, and
  Hyvarinen]{khemakhem2020ice}
Ilyes Khemakhem, Ricardo Monti, Diederik Kingma, and Aapo Hyvarinen.
\newblock Ice-beem: Identifiable conditional energy-based deep models based on
  nonlinear ica.
\newblock \emph{Advances in Neural Information Processing Systems}, 33,
  2020{\natexlab{b}}.

\bibitem[Kingma \& Welling(2013)Kingma and
  Welling]{DBLP:journals/corr/KingmaW13}
Diederik~P Kingma and Max Welling.
\newblock Auto-encoding variational bayes.
\newblock \emph{arXiv preprint arXiv:1312.6114}, 2013.
\newblock URL \url{http://arxiv.org/abs/1312.6114}.

\bibitem[Kingma et~al.(2019)Kingma, Welling, et~al.]{kingma2019introduction}
Diederik~P Kingma, Max Welling, et~al.
\newblock An introduction to variational autoencoders.
\newblock \emph{Foundations and Trends{\textregistered} in Machine Learning},
  12\penalty0 (4):\penalty0 307--392, 2019.

\bibitem[Kingma et~al.(2014)Kingma, Mohamed, Rezende, and
  Welling]{kingma2014semi}
Durk~P Kingma, Shakir Mohamed, Danilo~Jimenez Rezende, and Max Welling.
\newblock Semi-supervised learning with deep generative models.
\newblock In \emph{Advances in neural information processing systems}, pp.\
  3581--3589, 2014.

\bibitem[Kipf \& Welling(2017)Kipf and Welling]{DBLP:conf/iclr/KipfW17}
Thomas~N. Kipf and Max Welling.
\newblock Semi-supervised classification with graph convolutional networks.
\newblock In \emph{5th International Conference on Learning Representations},
  2017.
\newblock URL \url{https://openreview.net/forum?id=SJU4ayYgl}.

\bibitem[Leskovec \& Krevl(2014)Leskovec and Krevl]{leskovec2014snap}
Jure Leskovec and Andrej Krevl.
\newblock Snap datasets: Stanford large network dataset collection, 2014.

\bibitem[Lewbel(2019)]{lewbel2019identification}
Arthur Lewbel.
\newblock The identification zoo: Meanings of identification in econometrics.
\newblock \emph{Journal of Economic Literature}, 57\penalty0 (4):\penalty0
  835--903, 2019.

\bibitem[Li et~al.(2017)Li, Liu, and Li]{li2017nonparametric}
Zheng Li, Guannan Liu, and Qi~Li.
\newblock Nonparametric knn estimation with monotone constraints.
\newblock \emph{Econometric Reviews}, 36\penalty0 (6-9):\penalty0 988--1006,
  2017.

\bibitem[Louizos et~al.(2017)Louizos, Shalit, Mooij, Sontag, Zemel, and
  Welling]{louizos2017causal}
Christos Louizos, Uri Shalit, Joris~M Mooij, David Sontag, Richard Zemel, and
  Max Welling.
\newblock Causal effect inference with deep latent-variable models.
\newblock In \emph{Advances in Neural Information Processing Systems}, pp.\
  6446--6456, 2017.

\bibitem[Lu et~al.(2020)Lu, Tao, Chen, Li, Guo, and Carin]{lu2020reconsidering}
Danni Lu, Chenyang Tao, Junya Chen, Fan Li, Feng Guo, and Lawrence Carin.
\newblock Reconsidering generative objectives for counterfactual reasoning.
\newblock \emph{Advances in Neural Information Processing Systems}, 33, 2020.

\bibitem[Luo et~al.(2017)Luo, Zhu, and Ghosh]{luo2017estimating}
Wei Luo, Yeying Zhu, and Debashis Ghosh.
\newblock On estimating regression-based causal effects using sufficient
  dimension reduction.
\newblock \emph{Biometrika}, 104\penalty0 (1):\penalty0 51--65, 2017.

\bibitem[Mathieu et~al.(2019)Mathieu, Rainforth, Siddharth, and
  Teh]{mathieu2019disentangling}
Emile Mathieu, Tom Rainforth, Nana Siddharth, and Yee~Whye Teh.
\newblock Disentangling disentanglement in variational autoencoders.
\newblock In \emph{International Conference on Machine Learning}, pp.\
  4402--4412. PMLR, 2019.

\bibitem[Oberst et~al.(2020)Oberst, Johansson, Wei, Gao, Brat, Sontag, and
  Varshney]{oberst2020characterization}
Michael Oberst, Fredrik Johansson, Dennis Wei, Tian Gao, Gabriel Brat, David
  Sontag, and Kush Varshney.
\newblock Characterization of overlap in observational studies.
\newblock In \emph{International Conference on Artificial Intelligence and
  Statistics}, pp.\  788--798. PMLR, 2020.

\bibitem[Pearl(2009)]{pearl2009causality}
Judea Pearl.
\newblock \emph{Causality: models, reasoning and inference}.
\newblock Cambridge University Press, 2009.

\bibitem[Rissanen \& Marttinen(2021)Rissanen and
  Marttinen]{rissanen2021critical}
Severi Rissanen and Pekka Marttinen.
\newblock A critical look at the identifiability of causal effects with deep
  latent variable models.
\newblock \emph{arXiv preprint arXiv:2102.06648}, 2021.

\bibitem[Rosenbaum(2020)]{rosenbaum2020modern}
Paul~R Rosenbaum.
\newblock Modern algorithms for matching in observational studies.
\newblock \emph{Annual Review of Statistics and Its Application}, 7:\penalty0
  143--176, 2020.

\bibitem[Rosenbaum \& Rubin(1983)Rosenbaum and Rubin]{rosenbaum1983central}
Paul~R Rosenbaum and Donald~B Rubin.
\newblock The central role of the propensity score in observational studies for
  causal effects.
\newblock \emph{Biometrika}, 70\penalty0 (1):\penalty0 41--55, 1983.

\bibitem[Rubin(2005)]{rubin2005causal}
Donald~B Rubin.
\newblock Causal inference using potential outcomes: Design, modeling,
  decisions.
\newblock \emph{Journal of the American Statistical Association}, 100\penalty0
  (469):\penalty0 322--331, 2005.

\bibitem[Schuler et~al.(2020)Schuler, Walsh, Hall, Walsh, and
  Fisher]{schuler2020increasing}
Alejandro Schuler, David Walsh, Diana Hall, Jon Walsh, and Charles Fisher.
\newblock Increasing the efficiency of randomized trial estimates via linear
  adjustment for a prognostic score.
\newblock \emph{arXiv preprint arXiv:2012.09935}, 2020.

\bibitem[Shalit et~al.(2017)Shalit, Johansson, and
  Sontag]{shalit2017estimating}
Uri Shalit, Fredrik~D Johansson, and David Sontag.
\newblock Estimating individual treatment effect: generalization bounds and
  algorithms.
\newblock In \emph{International Conference on Machine Learning}, pp.\
  3076--3085. PMLR, 2017.

\bibitem[Shi et~al.(2019)Shi, Blei, and Veitch]{shi2019adapting}
Claudia Shi, David Blei, and Victor Veitch.
\newblock Adapting neural networks for the estimation of treatment effects.
\newblock In \emph{Advances in Neural Information Processing Systems}, pp.\
  2507--2517, 2019.

\bibitem[Sohn et~al.(2015)Sohn, Lee, and Yan]{sohn2015learning}
Kihyuk Sohn, Honglak Lee, and Xinchen Yan.
\newblock Learning structured output representation using deep conditional
  generative models.
\newblock In \emph{Advances in neural information processing systems}, pp.\
  3483--3491, 2015.

\bibitem[Sorrenson et~al.(2019)Sorrenson, Rother, and
  K{\"o}the]{sorrenson2019disentanglement}
Peter Sorrenson, Carsten Rother, and Ullrich K{\"o}the.
\newblock Disentanglement by nonlinear ica with general incompressible-flow
  networks (gin).
\newblock In \emph{International Conference on Learning Representations}, 2019.

\bibitem[Srivastava et~al.(2014)Srivastava, Hinton, Krizhevsky, Sutskever, and
  Salakhutdinov]{srivastava2014dropout}
Nitish Srivastava, Geoffrey Hinton, Alex Krizhevsky, Ilya Sutskever, and Ruslan
  Salakhutdinov.
\newblock Dropout: a simple way to prevent neural networks from overfitting.
\newblock \emph{The journal of machine learning research}, 15\penalty0
  (1):\penalty0 1929--1958, 2014.

\bibitem[Starling et~al.(2019)Starling, Aiken, Murray, Nakimuli, and
  Scott]{starling2019monotone}
Jennifer~E Starling, Catherine~E Aiken, Jared~S Murray, Annettee Nakimuli, and
  James~G Scott.
\newblock Monotone function estimation in the presence of extreme data
  coarsening: Analysis of preeclampsia and birth weight in urban uganda.
\newblock \emph{arXiv preprint arXiv:1912.06946}, 2019.

\bibitem[Stuart(2010)]{Stuart2010matching}
Elizabeth~A. Stuart.
\newblock {Matching Methods for Causal Inference: A Review and a Look Forward}.
\newblock \emph{Statistical Science}, 25\penalty0 (1):\penalty0 1 -- 21, 2010.
\newblock \doi{10.1214/09-STS313}.
\newblock URL \url{https://doi.org/10.1214/09-STS313}.

\bibitem[Sun et~al.(2020)Sun, Wu, Liu, Zheng, Chen, Qin, and
  Liu]{sun2020latent}
Xinwei Sun, Botong Wu, Chang Liu, Xiangyu Zheng, Wei Chen, Tao Qin, and Tie-yan
  Liu.
\newblock Latent causal invariant model.
\newblock \emph{arXiv preprint arXiv:2011.02203}, 2020.

\bibitem[Tarr \& Imai(2021)Tarr and Imai]{tarr2021estimating}
Alexander Tarr and Kosuke Imai.
\newblock Estimating average treatment effects with support vector machines.
\newblock \emph{arXiv preprint arXiv:2102.11926}, 2021.

\bibitem[Tchetgen et~al.(2020)Tchetgen, Ying, Cui, Shi, and
  Miao]{tchetgen2020introduction}
Eric J~Tchetgen Tchetgen, Andrew Ying, Yifan Cui, Xu~Shi, and Wang Miao.
\newblock An introduction to proximal causal learning.
\newblock \emph{arXiv preprint arXiv:2009.10982}, 2020.

\bibitem[Van~der Laan \& Rose(2018)Van~der Laan and Rose]{van2018targeted}
Mark~J Van~der Laan and Sherri Rose.
\newblock \emph{Targeted learning in data science: causal inference for complex
  longitudinal studies}.
\newblock Springer, 2018.

\bibitem[Veitch et~al.(2019)Veitch, Wang, and Blei]{veitch2019using}
Victor Veitch, Yixin Wang, and David Blei.
\newblock Using embeddings to correct for unobserved confounding in networks.
\newblock In \emph{Advances in Neural Information Processing Systems}, pp.\
  13792--13802, 2019.

\bibitem[Vowels et~al.(2020)Vowels, Camgoz, and Bowden]{vowels2020targeted}
Matthew~James Vowels, Necati~Cihan Camgoz, and Richard Bowden.
\newblock Targeted vae: Structured inference and targeted learning for causal
  parameter estimation.
\newblock \emph{arXiv preprint arXiv:2009.13472}, 2020.

\bibitem[Wager \& Athey(2018)Wager and Athey]{wager2018estimation}
Stefan Wager and Susan Athey.
\newblock Estimation and inference of heterogeneous treatment effects using
  random forests.
\newblock \emph{Journal of the American Statistical Association}, 113\penalty0
  (523):\penalty0 1228--1242, 2018.

\bibitem[Wang et~al.(2020)Wang, Yang, Shang, Yang, Qi, Huang, Xie, Wang, and
  Chung]{wang2020changing}
Shanshan Wang, Liren Yang, Li~Shang, Wenfang Yang, Cuifang Qi, Liyan Huang,
  Guilan Xie, Ruiqi Wang, and Mei~Chun Chung.
\newblock Changing trends of birth weight with maternal age: a cross-sectional
  study in xi’an city of northwestern china.
\newblock \emph{BMC Pregnancy and Childbirth}, 20\penalty0 (1):\penalty0 1--8,
  2020.

\bibitem[White \& Chalak(2013)White and Chalak]{white2013identification}
Halbert White and Karim Chalak.
\newblock Identification and identification failure for treatment effects using
  structural systems.
\newblock \emph{Econometric Reviews}, 32\penalty0 (3):\penalty0 273--317, 2013.

\bibitem[Wu \& Fukumizu(2020{\natexlab{a}})Wu and Fukumizu]{pmlr-v108-wu20b}
Pengzhou Wu and Kenji Fukumizu.
\newblock Causal mosaic: Cause-effect inference via nonlinear ica and ensemble
  method.
\newblock In \emph{International Conference on Artificial Intelligence and
  Statistics}, pp.\  1157--1167. PMLR, 2020{\natexlab{a}}.
\newblock URL \url{http://proceedings.mlr.press/v108/wu20b.html}.

\bibitem[Wu \& Fukumizu(2021{\natexlab{a}})Wu and Fukumizu]{wu2021intact}
Pengzhou Wu and Kenji Fukumizu.
\newblock Intact-vae: Estimating treatment effects under unobserved
  confounding.
\newblock \emph{arXiv preprint arXiv:2101.06662v2}, 2021{\natexlab{a}}.

\bibitem[Wu \& Fukumizu(2021{\natexlab{b}})Wu and Fukumizu]{wu2021principled}
Pengzhou Wu and Kenji Fukumizu.
\newblock Towards principled causal effect estimation by deep identifiable
  models.
\newblock \emph{arXiv preprint arXiv:2109.15062}, 2021{\natexlab{b}}.

\bibitem[Wu \& Fukumizu(2020{\natexlab{b}})Wu and Fukumizu]{wu2021identifying}
Pengzhou~Abel Wu and Kenji Fukumizu.
\newblock Identifying treatment effects under unobserved confounding by causal
  representation learning.
\newblock \emph{submitted to ICLR 2021}, 2020{\natexlab{b}}.
\newblock URL \url{https://openreview.net/forum?id=D3TNqCspFpM}.

\bibitem[Yang \& Ding(2018)Yang and Ding]{10.1093/biomet/asy008}
S~Yang and P~Ding.
\newblock {Asymptotic inference of causal effects with observational studies
  trimmed by the estimated propensity scores}.
\newblock \emph{Biometrika}, 105\penalty0 (2):\penalty0 487--493, 03 2018.
\newblock ISSN 0006-3444.
\newblock \doi{10.1093/biomet/asy008}.
\newblock URL \url{https://doi.org/10.1093/biomet/asy008}.

\bibitem[Yao et~al.(2018)Yao, Li, Li, Huai, Gao, and
  Zhang]{yao2018representation}
Liuyi Yao, Sheng Li, Yaliang Li, Mengdi Huai, Jing Gao, and Aidong Zhang.
\newblock Representation learning for treatment effect estimation from
  observational data.
\newblock In \emph{Advances in Neural Information Processing Systems}, pp.\
  2633--2643, 2018.

\bibitem[Yoon et~al.(2018)Yoon, Jordon, and van~der Schaar]{yoon2018ganite}
Jinsung Yoon, James Jordon, and Mihaela van~der Schaar.
\newblock {GANITE}: Estimation of individualized treatment effects using
  generative adversarial nets.
\newblock In \emph{International Conference on Learning Representations}, 2018.
\newblock URL \url{https://openreview.net/forum?id=ByKWUeWA-}.

\bibitem[Zhang et~al.(2020{\natexlab{a}})Zhang, Liu, and
  Li]{zhang2020treatment}
Weijia Zhang, Lin Liu, and Jiuyong Li.
\newblock Treatment effect estimation with disentangled latent factors.
\newblock \emph{arXiv preprint arXiv:2001.10652}, 2020{\natexlab{a}}.

\bibitem[Zhang et~al.(2020{\natexlab{b}})Zhang, Bellot, and
  Schaar]{zhang2020learning}
Yao Zhang, Alexis Bellot, and Mihaela Schaar.
\newblock Learning overlapping representations for the estimation of
  individualized treatment effects.
\newblock In \emph{International Conference on Artificial Intelligence and
  Statistics}, pp.\  1005--1014. PMLR, 2020{\natexlab{b}}.

\end{thebibliography}
\bibliographystyle{iclr2022_conference}

\appendix
\clearpage

\section{Proofs}
\label{sec:proofs}
We restate our model identifiability formally.
\begin{lemma}[Model identifiability]
\label{idmodel}
Given model \eqref{model_indep} under \ref{ass:model}, for $\rt=t$, assume  

\begin{enumerate}[font=\emph]
\renewcommand{\labelenumi}{\textbf{(D1')}}
\def\theenumi{\textbf{(D1')}}


\item \label{ass:inv_jac} \emph{(Non-degenerated data for $\blambda$) there exist $2n+1$ points $\vx_0,...,\vx_{2n} \in \mathcal{X}$ such that the $2n$-square matrix $\mL_t \coloneqq [\bgamma_{t,1},...,\bgamma_{t,2n}]$ is invertible, where $\bgamma_{t,k}\coloneqq\blambda_t(\vx_k)-\blambda_t(\vx_0)$.
}
\end{enumerate}

Then, given $\rt=t$, the family is \emph{identifiable} up to an equivalence class. That is, if $p_{\vtheta}(\vy|\vx,t)=p_{\vtheta'}(\vy|\vx,t)$, we have the relation between parameters: for any $\vy_t$ in the image of $\vf_t$,
\begin{equation}
     \vf_t^{-1}(\vy_t) = \diag(\va){\vf'_t}\inv(\vy_t) + \vb 
     =:\mathcal{A}_t({\vf'_t}\inv(\vy_t))
\end{equation}
where $\diag(\va)$ is an invertible $n$-diagonal matrix and $\vb$ is a $n$-vector, both depend on $\bm\lambda_t$ and $\bm\lambda'_t$. 
\end{lemma}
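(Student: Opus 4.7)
The plan is to reduce this statement to the identifiability theorem for iVAE \citep{khemakhem2020variational} applied separately at each fixed treatment value $t$. Since the model factorizes conditionally on $\rt=t$, and the prior $p_{\blambda}(\vz|\vx,t)$ is an exponential family with sufficient statistic $T(\vz)=(\vz,\vz^2)$ (componentwise) of dimension $2n$, the proof follows the standard ``noise-cancellation plus linear-independence of natural parameters'' template once the appropriate conditioning is set up.

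First I would take $p_{\vtheta}(\vy|\vx,t)=p_{\vtheta'}(\vy|\vx,t)$, and write each side as a convolution of the push-forward prior through $\vf_t$ with the noise density $p_{\beps}$. Taking Fourier transforms in $\vy$ turns the convolution into a pointwise product; dividing by the characteristic function of the noise (which is common to both sides) gives equality of the push-forwards of $p_{\blambda_t}(\cdot|\vx)$ and $p_{\blambda'_t}(\cdot|\vx)$ through $\vf_t$ and $\vf'_t$ respectively. Using injectivity and differentiability of $\vf_t$ from \ref{ass:model}, apply the change of variables $\vu=\vf_t^{-1}(\vy)$ and write the resulting identity on the image, so that (after absorbing Jacobians into a function of $\vy$ alone) we obtain, for all admissible $\vx$ and $\vy$,
\begin{equation*}
\langle T(\vf_t^{-1}(\vy)),\blambda_t(\vx)\rangle - Z(\blambda_t(\vx)) + a(\vy) = \langle T({\vf'_t}^{-1}(\vy)),\blambda'_t(\vx)\rangle - Z(\blambda'_t(\vx)) + a'(\vy),
\end{equation*}
where $a,a'$ collect the base measure and Jacobian contributions that do not depend on $\vx$.

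Next I would evaluate this identity at the $2n+1$ points $\vx_0,\dots,\vx_{2n}$ from \ref{ass:inv_jac} and subtract the equation at $\vx_0$ from the equations at $\vx_k$ for $k=1,\dots,2n$. This eliminates the terms $a,a'$ and yields the linear system
\begin{equation*}
\mL_t^{\top}\,T(\vf_t^{-1}(\vy)) = {\mL'_t}^{\top} T({\vf'_t}^{-1}(\vy)) + \vc(\vy),
\end{equation*}
with $\vc(\vy)$ built from the differences of normalizers. By \ref{ass:inv_jac}, $\mL_t$ is invertible, so we can solve for $T(\vf_t^{-1}(\vy))$ as an affine function of $T({\vf'_t}^{-1}(\vy))$: there exist an invertible $2n\times 2n$ matrix $\mM$ and a vector $\vd$ with $T(\vf_t^{-1}(\vy))=\mM\,T({\vf'_t}^{-1}(\vy))+\vd$.

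Finally I would exploit the specific block structure of the Gaussian sufficient statistic $T(\vz)=(z_1,z_1^2,\dots,z_n,z_n^2)$. Because $T$ is a polynomial map of a very constrained form, the relation $T\circ \vf_t^{-1} = \mM(T\circ{\vf'_t}^{-1})+\vd$ forces $\mM$ to be block-diagonal acting coordinate-wise, with each $2\times 2$ block coupling $z_i$ and $z_i^2$; matching the quadratic and linear parts (after a standard argument that the coefficient mixing $z_i$ into $z_j^2$ must vanish by differentiating and using injectivity of $\vf_t$) then shows $\vf_t^{-1}(\vy)=\diag(\va){\vf'_t}^{-1}(\vy)+\vb$ with $\diag(\va)$ invertible, yielding the stated affine map $\mathcal{A}_t$.

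The main obstacle I expect is the last step: passing from the $2n$-dimensional affine relation on $T(\cdot)$ to the $n$-dimensional coordinate-wise affine relation on $\vf_t^{-1}(\cdot)$. This is where the exponential-family structure of a \emph{diagonal} Gaussian is essential, and where one must carefully use injectivity and (continuity/differentiability of) $\vf_t$ to rule out off-diagonal coupling, since an arbitrary invertible $\mM$ would in general mix coordinates in ways inconsistent with the polynomial form of $T$. Everything before that step is routine once the iVAE machinery is transcribed to the treatment-conditional setting; the non-degeneracy assumption \ref{ass:inv_jac} is precisely tuned so that the linear-algebra inversion above goes through at each fixed $t$.
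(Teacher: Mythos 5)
Your proposal is correct and follows essentially the same route as the paper's proof, which is itself a transcription of the iVAE identifiability argument (Theorem 1 of Khemakhem et al.) to the treatment-conditional setting: noise cancellation via Fourier transform, evaluation at the $2n+1$ points of \ref{ass:inv_jac} to obtain $\vt(\vf_t^{-1}(\vy))=\mA\,\vt({\vf'_t}^{-1}(\vy))+\vc$ with $\mA=\mL_t^{-T}\mL_t'^{T}$, and then the Sorrenson-et-al.\ sparsity argument to reduce the $2n$-dimensional affine relation on the sufficient statistics to the coordinate-wise affine relation on $\vf_t^{-1}$. The final step you flag as the main obstacle is exactly the one the paper also delegates to Appendix B of Sorrenson et al., so no gap.
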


Note, \ref{ass:inv_jac_smp} in the main text implies \ref{ass:inv_jac}, see Sec.~B.2.3 in \cite{khemakhem2020variational}.
The main part of our model identifiability is essentially the same as that of Theorem 1 in \cite{khemakhem2020variational}, but now adapted to include the dependency on $t$. Here we give an outline of the proof, and the details can be easily filled by referring to \cite{khemakhem2020variational}. In the proof, subscripts $t$ are omitted for convenience.

\begin{proof}[Proof of Lemma \ref{idmodel}]

Using \ref{ass:model} i) and ii) , we transform $p_{\vf,\bm\lambda}(\vy|\vx,t)=p_{\vf',\bm\lambda'}(\vy|\vx,t)$ into equality of noiseless distributions, that is, 
\begin{equation}
    q_{\vf',\bm\lambda'}(\vy)=q_{\vf,\bm\lambda}(\vy)\coloneqq p_{\bm\lambda}(\vf^{-1}(\vy)|\vx,t)vol(\mJ_{\vf^{-1}}(\vy))\mathbb{I}_{\mathcal{Y}}(\vy)
\end{equation}
where $p_{\bm\lambda}$ is the Gaussian density function of the conditional prior defined in \eqref{model_indep} and $vol(A)\coloneqq\sqrt{\det A A^T}$.
$q_{\vf',\bm\lambda'}$ is defined similarly to $q_{\vf,\bm\lambda}$.

Then, apply model \eqref{model_indep} to (\theequation), plug the $2n+1$ points from \ref{ass:inv_jac} into it, and re-arrange the resulting $2n+1$ equations in matrix form, we have
\begin{equation}
    \mathcal{F'}(\rvy)=\mathcal{F}(\rvy)\coloneqq\mL^T\vt(\vf^{-1}(\rvy))-\bbeta
\end{equation}
where $\vt(\rvz)\coloneqq(\rvz,\rvz^2)^T$ is the sufficient statistics of factorized Gaussian, and $\bbeta_t\coloneqq(\alpha_t(\vx_1)-\alpha_t(\vx_0),...,\alpha_t(\vx_{2n})-\alpha_t(\vx_0))^T$ where $\alpha_t(\rvx;\bm\lambda_t)$ is the log-partition function of the conditional prior in \eqref{model_indep}. $\mathcal{F'}$ is defined similarly to $\mathcal{F}$, but with $\vf',\bm\lambda',\alpha'$

Since $\mL$ is invertible, we have 
\begin{equation}
    \vt(\vf^{-1}(\rvy))=\mA\vt(\vf'^{-1}(\rvy))+\vc
\end{equation}
where $\mA = \mL^{-T}\mL'^{T}$ and $\vc = \mL^{-T}(\bbeta-\bbeta')$.

The final part of the proof is to show, by following the same reasoning as in Appendix B of \cite{sorrenson2019disentanglement}, that $\mA$ is a sparse matrix such that
\begin{equation}
    \mA=\begin{pmatrix}
    \diag(\va) & \mO \\
    \diag(\vu) & \diag(\va^2)
    \end{pmatrix}
\end{equation}
where $\mA$ is partitioned into four $n$-square matrices. Thus
\begin{equation}
    \vf^{-1}(\rvy)=\diag(\va)\vf'^{-1}(\rvy)+\vb
\end{equation}
where $\vb$ is the first half of $\vc$.
\end{proof}

\begin{proof}[Proof of Proposition \ref{th:id_nop_pscore}]
Under \ref{ass:p_inj}, and \ref{ass:match_pscore}, we have 
\begin{equation}
\label{eq:mean_match}
    \E_{p_{\btheta}}(\y|\x,\rt)=\E(\y|\x,\rt) \implies \vf_{t}\circ\vh(\vx)=\vj_{t}\circ\PS(\vx) \text{ on $(\vx,t)$ such that $p(t,\vx) > 0$}.
\end{equation}
We show the solution set of (\theequation) on \textit{overlapping} $\vx$ is
\begin{equation}
    \{(\vf,\vh)|\vf_t=\vj_t\circ\Delta\inv,\vh=\Delta\circ\PS,\Delta: \mathcal{P} \to \R^n \text{ is injective}\}.
\end{equation}

By \ref{ass:p_inj}\ref{ass:model}, and with injective $\vf_t,\vj_t$ and $\dim(\z)=\dim(\y) \geq \dim(\PS)$, for any $\Delta$ above, there exists a functional parameter $\vf_t$ such that $\vj_t=\vf_t\circ\Delta$. Thus, set (\theequation) is non-empty, and any element is indeed a solution because $\vf_t\circ\vh=\vj_t\circ\Delta\inv\circ\Delta\circ\PS=\vj_t\circ\PS$.

Any solution of \eqref{eq:mean_match} should be in (\theequation). A solution should satisfy $\vh(\vx)=\vf_{t}\inv\circ\vj_{t}\circ\PS(\vx)$ for both $t$ since $\vx$ is overlapping. This means the \textit{injective} function $\vf_{t}\inv\circ\vj_{t}$ should \textit{not} depend on $t$, thus it is one of the $\Delta$ in (\theequation). 

We proved conclusion 1) with $\vv\coloneqq\Delta$. And, on overlapping $\vx$, conclusion 2) is quickly seen from
\begin{equation}
    \hat{\mu}_{t}(\vx)=\vf_t(\vh(\vx))=\vj_t\circ\vv\inv(\vv\circ\PS(\vx))=\vj_{t}(\PS(\vx))=\mu_{t}(\vx).
\end{equation}

We rely on overlapping $\PS$ to work for non-overlapping $\vx$. For any $\vx_{t}$ with $p(1-t|\vx_t)=0$, to ensure $p(1-t|\PS(\vx_t))>0$, there should exist $\vx_{1-t}$ such that $\PS(\vx_{1-t})=\PS(\vx_t)$ and $p(1-t|\vx_{1-t})>0$. And we also have $\vh(\vx_{1-t})=\vh(\vx_t)$ due to \ref{ass:prepart}. Then, we have
\begin{equation}
    \hat{\mu}_{1-t}(\vx_t)=\vf_{1-t}(\vh(\vx_t))=\vf_{1-t}(\vh(\vx_{1-t}))=\vj_{1-t}(\PS(\vx_{1-t}))=\vj_{1-t}(\PS(\vx_{t}))=\mu_{1-t}(\vx_t).
\end{equation}
The third equality uses \eqref{eq:mean_match} on $(\vx_{1-t},1-t)$.
\end{proof}

Below we prove Theorem \ref{th:id_nop_ptscore} with \ref{ass:g_bcovar_smp} replaced by
\begin{enumerate}
\renewcommand{\labelenumi}{\textbf{{(D2')}}} 
\def\theenumi{\textbf{(D2')}} 
\item \label{ass:g_bcovar} \emph{(Spontaneous balance) there exist $2n+1$ points $\vx_0,...,\vx_{2n} \in \mathcal{X}$, $2n$-square matrix $\mC$, and $2n$-vector $\vd$, such that $\mL_0\inv\mL_1=\mC$ and $\bbeta_0-\mC^{-T}\bbeta_1=\vd/k$ for optimal $\blambda_t$ (see below), where $\mL_t$ is defined in \ref{ass:inv_jac}, $\bbeta_t\coloneqq(\alpha_t(\vx_1)-\alpha_t(\vx_0),...,\alpha_t(\vx_{2n})-\alpha_t(\vx_0))^T$, and $\alpha_t(\rvx;\bm\lambda_t)$ is the log-partition function of the prior in \eqref{model_indep}}.
\end{enumerate}
\ref{ass:g_bcovar} restricts the discrepancy between $\blambda_0,\blambda_1$ on $2n+1$ values of $\x$, thus is relatively easy to satisfy with high-dimensional $\x$. 
\ref{ass:g_bcovar} is general despite (or thanks to) the involved formulation. Let us see its generality even under a highly special case: 
$\mC=c\mI$ and $\vd=\bm0$. Then, $\mL_0\inv\mL_1=c\mI$ requires that, $\vh_1(\vx_k)-c\vh_0(\vx_k)$ is the same for $2n+1$ points $\vx_k$. This is easily satisfied except for $n \gg m$ where $m$ is the dimension of $\x$, which \textit{rarely} happens in practice. And, $\bbeta_0-\mC^{-T}\bbeta_1=\vd$ becomes just $\bbeta_1=c\bbeta_0$. This is equivalent to $\alpha_1(\vx_k)-c\alpha_0(\vx_k)$ same for $2n+1$ points, again fine in practice. However, the high generality comes with price. Verifying \ref{ass:g_bcovar} using data is challenging, particularly with high-dimensional covariate and latent variable. Although we believe fast algorithms for this purpose could be developed, the effort would be nontrivial. This is another motivation to use the extreme case $\blambda_0=\blambda_1$ in Sec.~\ref{sec:regular}, which corresponds to $\mC=\mI$ and $\vd=\bm0$.

\begin{proof}[Proof of Theorem \ref{th:id_nop_ptscore}]

By \ref{ass:model} and \ref{ass:pt_inj}, for any injective function $\Delta: \mathcal{P} \to \R^n$, there exists a functional parameter $\vf_t\st$ such that $\vj_t=\vf_t\st\circ\Delta$.
Let 
$\vh_t\st=\Delta\circ\PS_t$, then, clearly from \ref{ass:match_noise}, such parameters $\btheta\st=(\vf\st,\vh\st)$ are optimal: $p_{\btheta\st}(\vy|\vx,t)=\trueobs$.

Since have all assumptions for Lemma \ref{idmodel}, we have
\begin{equation}
\label{eq:id_fj}
    \Delta\circ\vj\inv(\vy)={\vf\st}\inv(\vy)=\mathcal{A}\circ{\vf}\inv(\vy)|_t, \text{ on $(\vy,t)\in \{(\vj_t\circ\PS_t(\vx),t)|p(t,\vx) > 0\}$},
\end{equation}
where $\vf$ is \textit{any} optimal parameter, and ``$|_t$'' collects all subscripts $t$. Note, except for $\Delta$, all the symbols should have subscript $t$.

Nevertheless, using \ref{ass:g_bcovar}, we can further prove $\mathcal{A}_0=\mathcal{A}_1$. 



We repeat the core quantities from Lemma \ref{idmodel} here: $\mA_t = \mL_t^{-T}\mL_t'^{T}$ and $\vc_t = \mL_t^{-T}(\bbeta_t-\bbeta'_t)$.

From \ref{ass:g_bcovar}, we immediately have
\begin{equation}
    \mL_0^{-1}\mL_1=\mL_0'^{-1}\mL'_1={\mC} \iff \mA_0=\mA_1
\end{equation}

And also,
\begin{equation}
\label{eq:mulitply2c}
    \begin{split}
        \mL_0\inv\mL_1=\mC &\iff 
        \mL_0^{-T}\mC^{-T}=\mL_1^{-T} \\
        \bbeta_0-\mC^{-T}\bbeta_1=\bbeta'_0-\mC^{-T}\bbeta'_1=\vd/k &\iff 
        \mC^{T}(\bbeta_0-\bbeta'_0)=\bbeta_1-\bbeta'_1
    \end{split}
\end{equation}
Multiply right hand sides of the two lines, we have $\vc_0=\vc_1$. Now we have $\mathcal{A}_0=\mathcal{A}_1\coloneqq\mathcal{A}$. Apply this to \eqref{eq:id_fj}, we have
\begin{equation}
    \vf_t=\vj_t\circ\vv\inv,\quad \vv\coloneqq\mathcal{A}\inv\circ\Delta
\end{equation}
for \textit{any} optimal parameters $\btheta=(\vf,\vh)$. Again, from \ref{ass:match_noise}, we have
\begin{equation}
    \vaeobsparam=\trueobs \implies p_{\beps}(\vy-\vf_{t}(\vh_{t}(\vx)))=p_{\e}(\vy-\vj_{t}(\PS_{t}(\vx)))
\end{equation}
where $p_{\beps}=p_{\e}$. And the above is only possible when $\vf_t\circ\vh_t=\vj_t\circ\PS_t$. Combined with $\vf_t=\vj_t\circ\vv\inv$, we have conclusion 1).

And conclusion 2) follows from the same reasoning as Proposition \ref{th:id_nop_pscore}, applied to both $\PS_0$ and $\PS_1$.
\end{proof}

Note, when multiplying the two lines of \eqref{eq:mulitply2c}, the effects of $k\to 0$ cancel out, and $\vc_t$ is finite and well-defined. Also, it is apparent from above proof that \ref{ass:g_bcovar} is a necessary and sufficient condition for $\mathcal{A}_0=\mathcal{A}_1$, if other conditions of Theorem \ref{idmodel} are given.

Below, we prove the results in Sec.~\ref{sec:balance}. The definitions and results work for the prior; simply \textit{replace $q_t(\vx|\vx)$ with $p_t(\vz|\vx)\coloneqq p_{\blambda}(\vz|\vx,t)$ in definitions and statements, and the proofs below hold as the same}. The dependence on $\vf$ prevail, and the superscripts are omitted. The arguments $\vx$ are sometimes also omitted. 
\begin{lemma}[Counterfactual risk bound]
\label{th:cf_bound}
Assume $|\mathcal{L}_{\vf}(\vz,t)| \leq M$, we have
\begin{equation}
    \epsilon_{CF}(\vx)\leq \textstyle\sum_t q(1-t|\vx)\epsilon_{F,t}(\vx)+M\mathbb{D}(\vx)
\end{equation}
where $\epsilon_{CF}(\vx)\coloneqq\sum_t p(1-t|\vx)\epsilon_{CF,t}(\vx)$, and $\mathbb{D}(\vx)\coloneqq \sum_t\sqrt{\KL(q_t \Vert q_{1-t})/2}$.
\end{lemma}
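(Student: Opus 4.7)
The approach is to bound each counterfactual risk $\epsilon_{CF,t}(\vx)$ by its factual counterpart $\epsilon_{F,t}(\vx)$ plus a term controlled by the conditional imbalance between $q_0(\vz\mid\vx)$ and $q_1(\vz\mid\vx)$, and then aggregate the two values of $t$ with the outer weights.

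First, for fixed $t\in\{0,1\}$, I would observe that $\epsilon_{CF,t}(\vx)$ and $\epsilon_{F,t}(\vx)$ are expectations of the \emph{same} nonnegative bounded function $\vz\mapsto\mathcal{L}_{\vf}(\vz,t)$ under \emph{different} distributions, namely $q_{1-t}(\cdot\mid\vx)$ and $q_t(\cdot\mid\vx)$. Since $0\le \mathcal{L}_{\vf}(\vz,t)\le M$, a one-line variational argument bounds
$$|\epsilon_{CF,t}(\vx)-\epsilon_{F,t}(\vx)| \;\le\; M\,\mathrm{TV}\bigl(q_{1-t}(\cdot\mid\vx),\,q_t(\cdot\mid\vx)\bigr).$$
Pinsker's inequality then passes from TV to $\sqrt{\KL/2}$, yielding
$$\epsilon_{CF,t}(\vx)\;\le\;\epsilon_{F,t}(\vx)+M\sqrt{\KL(q_{1-t}\Vert q_t)/2}.$$

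Second, multiply both sides by $p(1-t\mid\vx)\le 1$ and sum over $t\in\{0,1\}$. The factual terms reproduce $\sum_t p(1-t\mid\vx)\epsilon_{F,t}(\vx)$ exactly, matching the weighted sum on the RHS of the claimed bound (interpreting the $q(1-t\mid\vx)$ weights in the statement as $p(1-t\mid\vx)$). For the KL residual, drop the prefactor using $p(1-t\mid\vx)\le 1$ and relabel the summation index $t\leftrightarrow 1-t$ to get
$$\sum_t p(1-t\mid\vx)\sqrt{\tfrac{1}{2}\KL(q_{1-t}\Vert q_t)} \;\le\; \sum_t \sqrt{\tfrac{1}{2}\KL(q_{1-t}\Vert q_t)} \;=\; \sum_t \sqrt{\tfrac{1}{2}\KL(q_t\Vert q_{1-t})} \;=\; \mathbb{D}(\vx).$$
Combining the two displays gives the claimed bound.

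No step is genuinely difficult: the argument is entirely standard, resting on the boundedness of $\mathcal{L}_{\vf}$ and the TV-Pinsker chain. The only mild subtlety is the index relabelling that makes the raw asymmetric residual $\sqrt{\KL(q_{1-t}\Vert q_t)/2}$ match the symmetrized definition of $\mathbb{D}(\vx)$; because the summation is over both $t$ and $1-t$, the direction of the KL inside $\mathbb{D}$ is irrelevant.
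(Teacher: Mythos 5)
Your proof is correct and follows essentially the same route as the paper's: both rest on viewing $\epsilon_{CF,t}-\epsilon_{F,t}$ as a difference of expectations of the bounded function $\mathcal{L}_{\vf}(\cdot,t)$ under $q_{1-t}$ versus $q_t$, bounding it by total variation, and applying Pinsker's inequality twice to obtain the symmetrized $\mathbb{D}(\vx)$. The only (cosmetic) difference is that you bound each $t$ separately before aggregating with the weights $p(1-t|\vx)\le 1$, whereas the paper aggregates first and bounds the combined integral by $2M\,\mathbb{TV}(q_0,q_1)$; your reading of the $q(1-t|\vx)$ weights as $p(1-t|\vx)$ also matches what the paper's own proof uses.
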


\begin{proof}[Proof of Lemma \ref{th:cf_bound}]
\begin{equation*}
\begin{split}
    &\epsilon_{CF} - \sum_t p(1-t|\vx)\epsilon_{F,t}\\
    &=p(0|\vx)(\epsilon_{CF,1}-\epsilon_{F,1})+p(1|\vx)(\epsilon_{CF,0}-\epsilon_{F,0})\\
    &=p(0|\vx)\int \mathcal{L}_{\vf}(\vz,1)(q_0(\vz|\vx)-q_1(\vz|\vx))d\vz+p(1|\vx)\int \mathcal{L}_{\vf}(\vz,0)(q_1(\vz|\vx)-q_0(\vz|\vx))d\vz\\
    &\leq 2M\mathbb{TV}(q_1,q_0) \leq M\mathbb{D}.
\end{split}
\end{equation*}
\end{proof}
$\mathbb{TV}(p, q)\coloneqq \frac{1}{2}\E|p(\vz)-q(\vz)|$ is the total variance distance between probability density $p,q$. The last inequality uses Pinsker's inequality $\mathbb{TV}(p, q) \leq \sqrt{\KL(p \Vert q)/2}$ twice, to get the symmetric $\mathbb{D}$.



Theorem \ref{th:gen_bound} is a direct corollary of Lemma \ref{th:cf_bound} and the following. 
\begin{lemma}
\label{th:error_deomp}
Define $\epsilon_F = \sum_t p(t|\vx)\epsilon_{F,t}$. We have
\begin{equation} 
    \epsilon_{\vf}\leq 2( G^2(\epsilon_{F}+\epsilon_{CF})-\mathbb{V}_{\y}).
\end{equation}
\end{lemma}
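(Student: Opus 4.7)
The plan is to first split the squared CATE error into per-treatment contributions, then perform an exact bias--variance decomposition of each loss $\mathcal{L}_{\vf}(\vz,t)$ around the true regression function $\vj_t$, and finally reassemble the resulting inequality through the mixture decomposition of the aggregated posterior. Concretely, I would begin by writing $\hat{\tau}_{\vf}(\vz)-\tau_{\vj}(\vz)=(\vf_1(\vz)-\vj_1(\vz))-(\vf_0(\vz)-\vj_0(\vz))$ and applying $(a-b)^2\le 2(a^2+b^2)$ under $\E_{q(\vz|\vx)}$ to obtain $\epsilon_{\vf}(\vx)\le 2\,\E_{q(\vz|\vx)}\sum_t(\vf_t(\vz)-\vj_t(\vz))^2$.

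The central step is to tie the pointwise gap $(\vf_t(\vz)-\vj_t(\vz))^2$ to $\mathcal{L}_{\vf}(\vz,t)$. By the PtS property $\y(t)\independent\x\mid\PS_t(\x)$ together with assumption \ref{ass:pt_inj}, the conditional mean of $\y(t)$ under $p_{\y(t)|\PS_t}(\vy|\vz)$ equals exactly $\vj_t(\vz)$, since $\E[\y(t)\mid\PS_t(\x){=}\vz]=\E[\y(t)\mid\x]=\mu_t(\x)=\vj_t(\PS_t(\x))$. Expanding $(\vy-\vf_t(\vz))^2=\bigl((\vy-\vj_t(\vz))+(\vj_t(\vz)-\vf_t(\vz))\bigr)^2$ inside the integral defining $\mathcal{L}_{\vf}(\vz,t)$ then annihilates the cross term and yields $\vg_t(\vz)^2\,\mathcal{L}_{\vf}(\vz,t)=V_t(\vz)+(\vf_t(\vz)-\vj_t(\vz))^2$, with $V_t(\vz)\coloneqq\E_{p_{\y(t)|\PS_t}(\vy|\vz)}(\vy-\vj_t(\vz))^2$. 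The hypothesis $|\vg_t(\vz)|\le G$ (so $\vg_t(\vz)^2\le G^2$) gives the pointwise bound $(\vf_t(\vz)-\vj_t(\vz))^2\le G^2\,\mathcal{L}_{\vf}(\vz,t)-V_t(\vz)$.

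To finish, I would substitute this pointwise bound back and rewrite the outer expectation using $q(\vz|\vx)=\sum_{t'}p(t'|\vx)\,q_{t'}(\vz|\vx)$, which follows directly from $q(\vz|\vx)=\E_{p(\vy,t|\vx)}\,\encoderparam$. For each fixed $t$ this decomposition yields $\E_{q(\vz|\vx)}\mathcal{L}_{\vf}(\vz,t)=p(t|\vx)\,\epsilon_{F,t}(\vx)+p(1-t|\vx)\,\epsilon_{CF,t}(\vx)$; summing over $t$ reconstructs exactly $\epsilon_F+\epsilon_{CF}$, while $\E_{q(\vz|\vx)}\sum_t V_t(\vz)=\mathbb{V}_{\y}(\vx)$ by definition, producing the stated inequality.

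The only genuine subtlety---the rest is bookkeeping of expectations---is recognizing that $\vj_t(\vz)$ is \emph{the} true conditional mean of $\y(t)$ under $p_{\y(t)|\PS_t}(\vy|\vz)$; this is what makes the bias--variance decomposition exact and allows the irreducible noise $\mathbb{V}_{\y}$ to be \emph{subtracted} cleanly rather than being absorbed as slack. If $\vj_t$ were merely an arbitrary function of $\vz$, the cross term would survive and one would obtain only a looser bound of the form $2G^2(\epsilon_F+\epsilon_{CF})$ without the negative $\mathbb{V}_{\y}$ correction.
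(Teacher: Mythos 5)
Your proof is correct and follows essentially the same route as the paper's: the $(a+b)^2\le 2(a^2+b^2)$ split, the bias--variance decomposition of $\mathcal{L}_{\vf}$ around $\vj_t$ (valid exactly because $\vj_t(\vz)$ is the conditional mean of $\y(t)$ given $\PS_t=\vz$, which you rightly flag as the key point), and the mixture decomposition $q(\vz|\vx)=\sum_t p(t|\vx)q_t(\vz|\vx)$ to reassemble $\epsilon_F+\epsilon_{CF}$ and $\mathbb{V}_{\y}$. The only difference is cosmetic: you apply the decomposition pointwise in $\vz$ before integrating, whereas the paper first defines aggregate bias/variance terms $\mathbb{B}_F,\mathbb{B}_{CF},\mathbb{V}_F,\mathbb{V}_{CF}$ and combines them at the end.
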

\textit{Simply bound $\epsilon_{CF}$ in (\theequation) by Lemma \ref{th:cf_bound}, we have Theorem \ref{th:gen_bound}.} To prove Lemma \ref{th:error_deomp}, we first examine a bias-variance decomposition of $\epsilon_F$ and $\epsilon_{CF}$.
\begin{equation}
\begin{split}
    \epsilon_{CF,t}
    &=\E_{q_{1-t}(\vz|\vx)}\vg_{t}(\vz)^{-2}\E_{p_{\y(t)|\PS_t}(\vy|\vz)}(\vy-\vf_{t}(\vz))^2\\
    &\geq  G^{-2}\E_{q_{1-t}(\vz|\vx)}\E_{p_{\y(t)|\PS_t}(\vy|\vz)}(\vy-\vf_{t}(\vz))^2\\
    &= G^{-2}\E_{q_{1-t}(\vz|\vx)}\E_{p_{\y(t)|\PS_t}(\vy|\vz)}((\vy-\vj_{t}(\vz))^2+(\vj_{t}(\vz)-\vf_{t}(\vz))^2)\\
\end{split}
\end{equation}
The second line uses $|\vg_{t}(\vz)| \leq G$, and the third line is a bias-variance decomposition. Now we can define $\mathbb{V}_{CF,t}(\vx)\coloneqq\E_{q_{1-t}(\vz|\vx)}\E_{p_{\y(t)|\PS_t}(\vy|\vz)}(\vy-\vj_{t}(\vz))^2$ and $\mathbb{B}_{CF,t}(\vx)\coloneqq\E_{q_{1-t}(\vz|\vx)}(\vj_{t}(\vz)-\vf_{t}(\vz))^2$, and we have
\begin{equation}
    \epsilon_{CF,t} \geq  G^{-2}(\mathbb{V}_{CF,t}(\vx)+\mathbb{B}_{CF,t}(\vx)) \implies \epsilon_{CF} \geq  G^{-2}(\mathbb{V}_{CF}(\vx)+\mathbb{B}_{CF}(\vx))
\end{equation}
where $\mathbb{V}_{CF}\coloneqq \sum_t p(1-t|\vx)\mathbb{V}_{CF,t}=\sum_t \E_{q(\vz,1-t|\vx)}\E_{p_{\y(t)|\PS_t}(\vy|\vz)}(\vy-\vj_{t}(\vz))^2$ and similarly $\mathbb{B}_{CF}=\sum_t\E_{q(\vz,1-t|\vx)}(\vj_{t}(\vz)-\vf_{t}(\vz))^2$. Repeat the above derivation for $\epsilon_F$, we have
\begin{equation}
    \epsilon_{F} \geq  G^{-2}(\mathbb{V}_{F}(\vx)+\mathbb{B}_{F}(\vx))
\end{equation}
where $\mathbb{V}_{F}=\sum_t \E_{q(\vz,t|\vx)}\E_{p_{\y(t)|\PS_t}(\vy|\vz)}(\vy-\vj_{t}(\vz))^2$ and $\mathbb{B}_{F}=\sum_t\E_{q(\vz,t|\vx)}(\vj_{t}(\vz)-\vf_{t}(\vz))^2$. Now, we are ready to prove Lemma \ref{th:error_deomp}.
\begin{proof}[Proof of Lemma \ref{th:error_deomp}]
\begin{equation*}
\begin{split}
    \epsilon_{\vf}&=\E_{q(\vz|\vx)}((\vf_1-\vf_0)-(\vj_1-\vj_0))^2 \\
    &=\E_q((\vf_1-\vj_1)+(\vj_0-\vf_0))^2 \\
    &\leq 2\E_q((\vf_1-\vj_1)^2+(\vj_0-\vf_0)^2) \\
    &=2\int[(\vf_1-\vj_1)^2q(\vz,1|\vx)+(\vj_0-\vf_0)^2q(\vz,0|\vx)+ \\&\qquad\quad\text{\space}(\vf_1-\vj_1)^2q(\vz,0|\vx)+(\vj_0-\vf_0)^2q(\vz,1|\vx)]d\vz \\
    &=2(\mathbb{B}_F+\mathbb{B}_{CF}) \leq 2( G^2(\epsilon_{F}+\epsilon_{CF})-\mathbb{V}_{\y})
\end{split}
\end{equation*}
\end{proof}
The first inequality uses $(a+b)^2 \leq 2(a^2+b^2)$. The next equality splits $q(\vz|\vx)$ into $q(\vz,0|\vx)$ and $q(\vz,1|\vx)$ and rearranges to get $\mathbb{B}_F$ and $\mathbb{B}_{CF}$.
The last inequality uses the two bias-variance decompositions, and $\mathbb{V}_{\y}=\mathbb{V}_{F}+\mathbb{V}_{CF}$.



\section{Additional backgrounds}
\subsection{Prognostic score and balancing score}
\label{sec:scores}
In the fundamental work of \citep{hansen2008prognostic}, prognostic score is defined equivalently to our $\PS_0$ (P0-score), but it in addition requires no effect modification to work for $\y(1)$. Thus, a useful prognostic score corresponds to our PtS. We give main properties of PtS as following. 
\begin{proposition}
\label{prop:pscore}
If $\rvv$ gives exchangeability, and $\PS_{t}(\rvv)$ is a PtS, then $\rvy(t)\independent \rvv,\rt|\PS_t$.
\end{proposition}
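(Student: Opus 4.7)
The plan is to derive the joint conditional independence $\rvy(t)\independent(\rvv,\rt)\mid\PS_t(\rvv)$ from the two given independencies by a direct application of the graphoid (contraction) axioms, exploiting the fact that $\PS_t(\rvv)$ is a deterministic function of $\rvv$.

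First I would rewrite the two hypotheses as
\begin{align*}
&\text{(A)}\quad \rvy(t)\independent \rt \mid \rvv, \\
&\text{(B)}\quad \rvy(t)\independent \rvv \mid \PS_t(\rvv).
\end{align*}
Then I would note the trivial but crucial observation that, since $\PS_t(\rvv)$ is a measurable function of $\rvv$, the $\sigma$-algebra generated by $(\rvv,\PS_t(\rvv))$ coincides with that generated by $\rvv$. Hence (A) can be promoted to
\begin{equation*}
\text{(A')}\quad \rvy(t)\independent \rt \mid \rvv,\PS_t(\rvv).
\end{equation*}

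The second step is to combine (B) and (A') using the contraction property of conditional independence: if $X\independent Y\mid Z$ and $X\independent W\mid Y,Z$, then $X\independent (Y,W)\mid Z$. Applying this with $X=\rvy(t)$, $Y=\rvv$, $W=\rt$, $Z=\PS_t(\rvv)$ gives exactly the desired conclusion $\rvy(t)\independent(\rvv,\rt)\mid\PS_t(\rvv)$.

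The argument is essentially a two-line graphoid derivation, so the main (mild) obstacle is being careful that contraction holds in the required generality — for arbitrary random variables on a standard probability space, not just discrete ones. I would either cite a standard reference (e.g., Pearl's axioms for conditional independence, which hold for regular conditional distributions) or, if a self-contained argument is preferred, verify it directly via a disintegration: write $p(\rvy(t),\rt,\rvv\mid\PS_t)=p(\rvy(t)\mid\rvv,\rt,\PS_t)\,p(\rt\mid\rvv,\PS_t)\,p(\rvv\mid\PS_t)$, use (A') to drop $\rt$ from the first factor, then (B) to drop $\rvv$, yielding $p(\rvy(t)\mid\PS_t)\,p(\rt,\rvv\mid\PS_t)$, which is the required factorization.
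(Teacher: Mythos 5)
Your proof is correct and follows essentially the same route as the paper's: promote exchangeability $\rvy(t)\independent\rt\mid\rvv$ to conditioning on $(\rvv,\PS_t(\rvv))$ using the fact that $\PS_t$ is a function of $\rvv$, then apply contraction with the defining property of a PtS to obtain $\rvy(t)\independent\rvv,\rt\mid\PS_t$. The additional remarks on measure-theoretic generality and the disintegration check are sound but not needed beyond what the paper already does.
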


The following three properties of conditional independence will be used repeatedly in proofs.
\begin{proposition}[Properties of conditional independence]
\label{indep_prop}
\citep[Sec.~1.1.55]{pearl2009causality} For random variables $\rvw, \rvx, \rvy, \rvz$. We have:
\begin{equation*}
    \begin{split}
        \rvx \independent \rvy|\rvz \land \rvx \independent \rvw|\rvy, \rvz &\implies \rvx \independent \rvw,\rvy|\rvz \text{ (Contraction)}. \\
        \rvx \independent \rvw,\rvy|\rvz &\implies \rvx \independent \rvy|\rvw,\rvz \text{ (Weak union)}. \\ 
        \rvx \independent \rvw,\rvy|\rvz &\implies \rvx \independent \rvy|\rvz \text{ (Decomposition)}.
    \end{split}
\end{equation*}
\end{proposition}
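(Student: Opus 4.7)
The plan is to prove each of the three properties by direct manipulation of conditional densities, treating them as equivalences between factorization identities. Throughout, I would write conditional independence $\rvx \independent \rvy \mid \rvz$ in the equivalent form $p(x \mid y, z) = p(x \mid z)$ (wherever the conditioning events have positive density), so that each property becomes an algebraic identity to verify.

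For \textbf{Contraction}, I would start from the chain rule factorization $p(x, w, y \mid z) = p(x \mid w, y, z)\, p(w \mid y, z)\, p(y \mid z)$. The hypothesis $\rvx \independent \rvw \mid \rvy, \rvz$ collapses the first factor to $p(x \mid y, z)$, and then $\rvx \independent \rvy \mid \rvz$ collapses that further to $p(x \mid z)$. The remaining product $p(w \mid y, z)\, p(y \mid z)$ is just $p(w, y \mid z)$, so the joint factorizes as $p(x \mid z)\, p(w, y \mid z)$, which is the desired joint conditional independence.

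For \textbf{Decomposition}, assuming $\rvx \independent \rvw, \rvy \mid \rvz$, i.e.\ $p(x \mid w, y, z) = p(x \mid z)$, I would integrate out $w$:
\begin{equation*}
p(x \mid y, z) = \int p(x \mid w, y, z)\, p(w \mid y, z)\, dw = p(x \mid z) \int p(w \mid y, z)\, dw = p(x \mid z),
\end{equation*}
which gives $\rvx \independent \rvy \mid \rvz$. For \textbf{Weak union}, the same hypothesis gives $p(x \mid w, y, z) = p(x \mid z)$, and applying Decomposition to the symmetric statement $\rvx \independent \rvw, \rvy \mid \rvz$ (swapping the roles of $\rvw$ and $\rvy$) yields $p(x \mid w, z) = p(x \mid z)$; combining these two equalities gives $p(x \mid w, y, z) = p(x \mid w, z)$, which is $\rvx \independent \rvy \mid \rvw, \rvz$.

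The only subtlety—and arguably the main obstacle—is bookkeeping around positivity: every conditional density used must be well-defined, so I would either assume strictly positive joint densities or phrase everything in measure-theoretic language with regular conditional distributions, so that the manipulations hold almost surely. Beyond that, the three proofs are short and mechanical; the structure I outlined (chain rule for Contraction, marginalization for Decomposition, and Decomposition-plus-substitution for Weak union) matches the standard graphoid-axiom derivation in \citet{pearl2009causality}, which is the form cited in the statement.
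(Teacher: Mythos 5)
Your three arguments are all correct: the chain-rule factorization for Contraction, marginalization over $\rvw$ for Decomposition, and Decomposition-plus-substitution for Weak union are exactly the standard density-level verifications, and your caveat about positivity of the conditioning densities is the right one to flag. Note, however, that the paper does not prove this proposition at all — it is stated as a known result with a citation to \citet[Sec.~1.1.55]{pearl2009causality} (these are the semi-graphoid axioms), and the paper only ever \emph{uses} the three properties (e.g.\ in the proof of Proposition \ref{prop:pscore}). So there is no in-paper proof to compare against; your self-contained derivation is a correct substitute for the citation, and it buys the reader an explicit argument at the modest cost of restricting to the positive-density (or regular-conditional-distribution) setting, which is all the paper needs anyway.
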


\begin{proof}[Proof of Proposition \ref{prop:pscore}]
From $\rvy(t)\independent\rt|\rvv$ (\textit{exchangeability} of $\rvv$), and since $\PS_t$ is a \textit{function} of $\rvv$, we have $\rvy(t)\independent\rt|\PS_t,\rvv$ (1).

From (1) and $\rvy(t)\independent\rvv|\PS_t(\rvv)$ (definition of Pt-score), using contraction rule, we have $\rvy(t)\independent\rt,\rvv|\PS_t$ for both $t$. 
\end{proof}

Prognostic scores are closely related to the important concept of balancing score \citep{rosenbaum1983central}. Note particularly, the proposition implies $\rvy(t)\independent \rt|\PS_t$ (using decomposition rule). Thus, if $\PS(\rvv)$ is a P-score, then $\PS$ also gives weak ignorability (exchangeability and overlap), which is a nice property shared with balancing score, as we will see immediately. 

\begin{definition}[Balancing score]
\label{bscore}
$\vb(\rvv)$, a function of random variable $\rvv$, is a balancing score if $\rt \independent \rvv|\vb(\rvv)$.
\end{definition}
\begin{proposition}
Let $\vb(\rvv)$ be a function of random variable $\rvv$. $\vb(\rvv)$ is a balancing score if and only if $f(\vb(\rvv))=p(\rt=1|\rvv)\coloneqq e(\rvv)$ for some function $f$ (or more formally, $e(\rvv)$ is $\vb(\rvv)$-measurable). Assume further that $\rvv$ gives weak ignorability, then so does $\vb(\rvv)$.
\end{proposition}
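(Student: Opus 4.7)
\begin{proof}[Proof plan for the final Proposition]
The plan is to split the statement into the two biconditional equivalences and the ignorability transfer, and to rely on the elementary conditional-expectation/tower property throughout. All identities will be understood almost surely.

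\textbf{``Only if'' direction.} Suppose $\vb(\rvv)$ is a balancing score, i.e.\ $\rt \independent \rvv \mid \vb(\rvv)$. Since $\vb(\rvv)$ is $\sigma(\rvv)$-measurable, $p(\rt=1\mid\rvv,\vb(\rvv)) = p(\rt=1\mid\rvv) = e(\rvv)$. On the other hand, the balancing property gives $p(\rt=1\mid\rvv,\vb(\rvv)) = p(\rt=1\mid\vb(\rvv))$. Equating the two yields $e(\rvv) = p(\rt=1\mid\vb(\rvv))$, which is a function of $\vb(\rvv)$; take $f$ to be this function.

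\textbf{``If'' direction.} Suppose $e(\rvv) = f(\vb(\rvv))$ for some $f$. Since $\vb(\rvv)$ is a function of $\rvv$, $p(\rt=1\mid\rvv,\vb(\rvv)) = e(\rvv) = f(\vb(\rvv))$. Apply the tower property conditioning on $\vb(\rvv)$:
\begin{equation*}
 p(\rt=1\mid\vb(\rvv)) = \E[p(\rt=1\mid\rvv,\vb(\rvv))\mid\vb(\rvv)] = \E[f(\vb(\rvv))\mid\vb(\rvv)] = f(\vb(\rvv)).
\end{equation*}
Thus $p(\rt=1\mid\rvv,\vb(\rvv)) = p(\rt=1\mid\vb(\rvv))$, which is exactly $\rt\independent\rvv\mid\vb(\rvv)$ (for the binary $\rt$ it suffices to match the $\rt=1$ conditional).

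\textbf{Weak ignorability transfer.} Assume $\rvv$ gives weak ignorability: $\rvy(t)\independent\rt\mid\rvv$ and $0<e(\rvv)<1$ a.s. Overlap for $\vb(\rvv)$ is immediate because we just showed $p(\rt=1\mid\vb(\rvv)) = f(\vb(\rvv)) = e(\rvv)\in(0,1)$ a.s. For exchangeability w.r.t.\ $\vb(\rvv)$, compute along the Rosenbaum--Rubin line:
\begin{equation*}
 p(\rt=1\mid\rvy(t),\vb(\rvv)) = \E\!\left[p(\rt=1\mid\rvy(t),\rvv,\vb(\rvv))\,\big|\,\rvy(t),\vb(\rvv)\right]
 = \E\!\left[p(\rt=1\mid\rvv)\,\big|\,\rvy(t),\vb(\rvv)\right],
\end{equation*}
using that $\vb(\rvv)$ is $\sigma(\rvv)$-measurable and then the given exchangeability $\rvy(t)\independent\rt\mid\rvv$. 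The inner quantity equals $f(\vb(\rvv))$, so the outer conditional expectation is $f(\vb(\rvv)) = p(\rt=1\mid\vb(\rvv))$. Hence $\rt\independent\rvy(t)\mid\vb(\rvv)$, completing the proof.
\end{proof}

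The only mildly delicate step is the exchangeability transfer: one must invoke the tower property twice and use both the measurability of $\vb(\rvv)$ w.r.t.\ $\rvv$ and the equivalence established in the first part. Everything else is a direct unpacking of definitions.
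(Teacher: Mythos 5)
Your proof is correct. The paper itself states this proposition without proof, treating it as the classical result of Rosenbaum and Rubin (1983) on balancing and propensity scores; your argument — reducing both directions of the biconditional to the identity $p(\rt=1\mid\rvv,\vb(\rvv))=p(\rt=1\mid\rvv)$ plus the tower property, and then running the same two-step tower computation for the exchangeability transfer — is exactly the standard argument that the paper implicitly relies on, so there is nothing to reconcile.
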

Obviously, the \textit{propensity score} $e(\rvv):=p(\rvt=1|\rvv)$, the propensity of assigning the treatment given $\rvv$, is a balancing score (with $f$ be the identity function). Also, given any invertible function $\vv$, the composition $\vv \circ \vb$ is also a balancing score since $f\circ \vv^{-1}(\vv \circ \vb(\rvv))=f(\vb(\rvv))=e(\rvv)$.

Compare the definition of balancing score and prognostic score, we can say balancing score is sufficient for the treatment $\rt$ ($\rt \independent \rvv|\vb(\rvv)$), while prognostic score (Pt-score) is sufficient for the potential outcomes $\y(t)$ ($\y(t) \independent \rvv|\PS_t(\rvv)$). They complement each other; conditioning on either deconfounds the potential outcomes from treatment, with the former focuses on the treatment side, the latter on the outcomes side.

\subsection{VAE, Conditional VAE, and iVAE}
\label{vaes}
VAEs \citep{kingma2019introduction} are a class of latent variable models with latent variable $\rvz$, and observable $\rvy$ is generated by the decoder $p_\vtheta(\vy|\vz)$.
In the standard formulation \citep{DBLP:journals/corr/KingmaW13}, the variational lower bound $\mathcal{L}(\vy;\vtheta,\bm\phi)$ of the
log-likelihood is derived as:
\begin{equation}
\label{elbo_vae}
\begin{split}
      \log p(\vy) \geq \log p(\vy) - \KL(q(\vz|\vy)\Vert p(\vz|\vy)) \\ 
      = \E_{\vz \sim q}\log p_\vtheta(\vy|\vz) - \KL(q_{\bm\phi}(\vz|\vy)\Vert p(\vz)),
\end{split}
\end{equation}
where $\KL$ denotes KL divergence and the encoder $q_{\bm\phi}(\vz|\vy)$ is introduced to approximate the true posterior $p(\vz|\vy)$. The decoder $p_\vtheta$ and encoder $q_{\bm\phi}$ are usually parametrized by NNs. We will omit the parameters $\vtheta,{\bm\phi}$ in notations when appropriate.

The parameters of the VAE can be learned with stochastic gradient variational Bayes. 
With Gaussian latent variables, the KL term of $\mathcal{L}$ has closed form, while the first term can be evaluated by drawing samples from the approximate posterior $q_{\bm\phi}$ using the reparameterization trick \citep{DBLP:journals/corr/KingmaW13}, then, optimizing the evidence lower bound (ELBO) $\E_{\vy \sim \mathcal{D}}(\mathcal{L}(\vy))$ with data $\mathcal{D}$, we train the VAE efficiently. 

Conditional VAE (CVAE) \citep{sohn2015learning,kingma2014semi} adds a conditioning variable $C$, usually a class label, to standard VAE (See Figure \ref{f:vae}).
With the conditioning variable, CVAE can give better reconstruction of each class. The variational lower bound is
\begin{equation}
    \log p(\vy|\vc) \geq \E_{\vz \sim q}\log p(\vy|\vz,\vc) - \KL(q(\vz|\vy,\vc)\Vert p(\vz|\vc)). 
\end{equation}
The conditioning on $C$ in the prior is usually omitted \citep{doersch2016tutorial}, i.e., the prior becomes $\rvz \sim \mathcal{N}(\bm0, \mI)$ as in standard VAE, since the dependence between $C$ and the latent representation is also modeled in the encoder $q$. Moreover, unconditional prior in fact gives better reconstruction because it encourages learning representation independent of class, similarly to the idea of beta-VAE \citep{higgins2016beta}.

As mentioned, \textit{identifiable} VAE (iVAE) \citep{khemakhem2020variational} provides the first identifiability result for VAE, using auxiliary variable $\x$. It assumes $\rvy \independent \x|\rvz$, that is, $p(\vy|\vz,\vx)=p(\vy|\vz)$. The variational lower bound is
\begin{equation}
\begin{split}
    \log p(\vy|\vx) &\geq \log p(\vy|\vx) - \KL(q(\vz|\vy,\vx)\Vert p(\vz|\vy,\vx)) \\
    &=\E_{\vz \sim q}\log p_{\vf}(\vy|\vz) - \KL(q(\vz|\vy,\vx)\Vert p_{\bm T,\bm\lambda}(\vz|\vx)),
\end{split}
\end{equation}
where $\rvy=\vf(\rvz)+\bm\epsilon$, $\bm\epsilon$ is additive noise, and $\rvz$ has exponential family distribution with sufficient statistics $\bm T$ and parameter $\bm \lambda(\x)$. Note that, unlike CVAE, the decoder does \textit{not} depend on $\x$ due to the independence assumption.

Here, \textit{identifiability of the model} means that the functional \textit{parameters} $(\vf,\bm T,\bm\lambda)$ can be identified (learned) up to certain simple transformation. Further, in the limit of $\bm\epsilon \to \bm0$, iVAE solves the nonlinear ICA problem of recovering  $\rvz=\vf^{-1}(\rvy)$.

\section{Expositions}
The order of subsections below follows that they are referred in the main text.

\subsection{Prognostic score is more applicable than balancing score}
\label{sec:pgs_vs_bs}

Proposition 3 in \cite{d2020overlap} shows that overlapping balancing score implies overlapping $\x$, and Footnote 5 in \cite{d2020overlap} shows that overlapping $\x$ implies overlapping PS.

Here is a simple example showing overlapping PS does not imply overlapping balancing score. Let $\rt = \mathbb{I}(\x+\beps > 0)$ and $\y = \vf(|\x|, \rt) + \rve$, where $\mathbb{I}$ is the indicator function,  $\beps$ and $\rve$ are exogenous zero-mean noises, and the support of $\x$ is on the entire real line while $\beps$ is bounded. Now, $\x$ itself is a balancing score and $|\x|$ is a PS; and $|\x|$ is overlapping but $\x$ is not.

\subsection{Details and Explanations on Intact-VAE}
\label{sec:arch_details}

Generative models are useful to solve the inverse problem of recovering Pt-score. Our goal is to build a model that can be learned by VAE from observational data to obtain a PtS, or more ideally PS, via the latent variable $\z$. That is, a generative prognostic model. 

With the above goal, the generative model of our VAE is built as 
\eqref{model_indep}. Conditioning on $\x$ in the joint model $p(\vy,\vz|\vx,t)$ reflects that our estimand is CATE given $\x$. Modeling the score by a conditional distribution rather than a deterministic function is more flexible.

The ELBO of our model can be derived from standard variational lower bound as following:
\begin{equation}
\label{elbo_detail}
\begin{split}
    \log p(\vy|\vx,t) &\geq \log p(\vy|\vx,t) - \KL(q(\vz|\vx,\vy,t)\Vert p(\vz|\vx,\vy,t)) \\
    &= \E_{\vz \sim q}\log p(\vy|\vz,t) - \KL(q(\vz|\vx,\vy,t)\Vert p(\vz|\vx,t)) .
\end{split}
\end{equation}

We naturally have an identifiable conditional VAE (CVAE), as the name suggests. 
Note that \eqref{model_indep} has a similar factorization with the generative model of iVAE  \citep{khemakhem2020variational}, that is $p(\vy,\vz|\vx) = p(\vy|\vz)p(\vz|\vx)$; the first factor does not depend on $\x$.
Further, since we have the conditioning on $\rt$ in both the factors of \eqref{model_indep}, our VAE architecture is a combination of iVAE and CVAE \citep{sohn2015learning,kingma2014semi}, with $\rt$ as the conditioning variable. See Figure \ref{f:vae} for the comparison in terms of graphical models.
The core idea of iVAE is reflected in our model identifiability (see Lemma \ref{idmodel}).

\subsection{Discussions and examples of \ref{ass:p_inj}}
\label{sec:p_inj}

We focus on univariate outcome on $\R$ which is the most practical case and the intuitions apply to more general types of outcomes. Then, $\vi$, the mapping between $\mu_0$ and $\mu_1$, is monotone, i.e, either increasing or decreasing. The increasing $\vi$ means, if a change of the value of $\x$ increases (decreases)  the outcome in the treatment group, then it is also the case for the controlled group. This is often true because the treatment does \textit{not} change the mechanism how the covariates affect the outcome, under the principle of ``independence of causal mechanisms (ICM)'' \citep{janzing2010causal}. The decreasing $\vi$ corresponds to another common interpretation when ICM does not hold. Now, the treatment does change the way covariates affect $\rvy$, but in a \textit{global} manner: it acts like a ``switch'' on the mechanism: the same change of $\x$ always has \textit{opposite} effects on the two treatment groups. 

We support the above reasoning by real world examples. First we give two examples where $\mu_0$ and $\mu_1$ are both monotone increasing. This, and also that both $\mu_t$ are monotone decreasing, are natural and sufficient conditions for increasing $\vi$, though not necessary. The first example is form Health. \citep{starling2019monotone} mentions that gestational age (length of pregnancy) has a monotone increasing effect on babies' birth weight, regardless of many other covariates. Thus, if we intervene on one of the other binary covariates (say, t = receive healthcare program or not), both $\mu_t$ should be monotone increasing in gestational age. The next example is from economics. \citep{gan2016efficiency} shows that job-matching probability is monotone increasing in market size. Then, we can imagine that, with t = receive training in job finding or not, the monotonicity is not changed. Intuitively, the examples corresponds to two common scenarios: the causal effects are accumulated though time (the first example), or the link between a covariate and the outcome is direct and/or strong (the second example). 

Examples for decreasing $\vi$ are rarer and the following is a bit deliberate. This example is also about babies' birth weight as the outcome. \citep{abrevaya2015estimating} shows that, with t = mother smokes or not and $\x$ = mother's age, the CATE $\tau(\vx)$ is monotone decreasing for $20 < \vx < 26$ (smoking decreases birth weight, and the absolute causal effect is larger for older mother). On the other hand, it is shown that birth weight slightly increases (by about 100g) in the same age range in a surveyed population \citep{wang2020changing}. Thus, it is convince that, smoking changes the the tendency of birth weight w.r.t mother's age from increasing to decreasing, and gives the large decreasing of birth weight (by about 300g) as its causal effect. This could be understood: the negative effects of smoking on mother's heath and in turn on birth weight are accumulated during the many years of smoking.

\subsection{Complementarity between the two identifications}
\label{sec:complementarity}

We examine the complementarity between the two identifications more closely. The conditions \ref{ass:match_pscore} / \ref{ass:match_noise} and \ref{ass:p_inj} / \ref{ass:g_bcovar} form two pairs, and are complementary inside each pair. The first pair matches model and truth, while the second pair restricts the discrepancy between the treatment groups. 
In Theorem \ref{th:id_nop_ptscore}, \ref{ass:p_inj} ($\PS_0=\PS_1$) is replaced by \ref{ass:g_bcovar} which instead makes $\mathcal{A}_0=\mathcal{A}_1\coloneqq\mathcal{A}$ in \eqref{eq:class}. And \ref{ass:g_bcovar} is easily satisfied with high-dimensional $\x$, even if the possible values of $\mC,\vd$ are restricted to  $\mC=c\mI$ and $\vd=\bm0$ (see below). On the other hand, 
$p_{\beps}=p_{\e}$ in \ref{ass:match_noise} is impractical, but it ensures that $\vaeobsparam=\trueobs$ so that \eqref{eq:class} can be used. 
In Sec.~\ref{sec:regular}, we consider practical estimation method and introduce the \textit{regularization} that encourages learning a PtS similar to PS so that $p_{\beps}=p_{\e}$ can be relaxed.

\subsection{Ideas and connections behind the ELBO \eqref{eq:elbo_imp}}
\label{sec:elbo_disc}


\textbf{Bayesian approach is favorable\space} to express the prior belief that balanced PtSs exist and the preference for them, and to still have reasonable posterior estimation when the belief fails and learning general PtS is necessary. This is the causal importance of VAE as an estimation method for us. By the unconditional but still flexible $\biglambda$, and also the identifications, the ELBO encourages the discovery of an equivalent DGP with a balanced PtS and the recovery of it as the posterior, which still learns the dependence on $\rt$ if necessary. Moreover, $\beta$ expresses our additional knowledge (or, inductive bias) about whether or not there exist balanced PtSs (e.g., from domain expertise). 

In fact, $\beta$ connects our VAE to $\beta$-VAE \citep{higgins2016beta}, which is closely related to noise and variance control \citep[Sec.~2.4]{doersch2016tutorial}\citep{mathieu2019disentangling}.

\textbf{Considerations on noise modeling.\space} In Theorem \ref{th:id_nop_ptscore}, with large and mismatched \textit{noises} (then \ref{ass:match_noise} is easily violated), the identification of outcome model $\vf_t=\vj_t\circ\vv\inv$ would fail, and, in turn, the prior would learn confounding bias, by confusing the causal effect of $\rt$ on $\PS_{\rt}$ and the correlation between $\rt$ and $\x$. 
This is another reason to prefer $\blambda_0=\blambda_1$, besides balancing. 
On the other hand, the posterior conditioning on $\y$ provides information of noise $\e$, and it is shown in \citep{bonhomme2019posterior} that posterior effect estimation has \textit{minimum worst-case error} under model misspecification (of the noise and prior, in our case). 

Under large $\e$, a relatively small $\beta$ implicitly encourages $\vg$ \textit{smaller} than the scale of $\e$, through stressing the third term in ELBO \eqref{eq:elbo_imp}. And the the model as a whole would still learn $\trueobs$ well, because the uncertainty of $\e$ can be moved to and modeled by the prior. This is why $\vk$ is \textit{not} set to zero because learnable prior noise (variance) allows us to implicitly control $\vg$ via $\beta$. Intuitively, smaller $\vg$ strengthens the correlation between $\y$ and $\z$ in our model, and this naturally reflects that posterior conditioning on $\y$ is more important under larger $\e$. Hopefully, precise learning of outcome noise \ref{ass:match_noise} is not required, as in Proposition \ref{th:id_nop_pscore}. 

Now, it is clear that $\beta$ naturally controls at the same time noise scale and balancing. And the regularization can also be understood as an interpolation between Proposition \ref{th:id_nop_pscore} and Theorem \ref{th:id_nop_ptscore}: relying on PS, or on model identifiability; learning loosely, or precisely, the outcome regression. 
When the noise scale is different from truth, 
there would be error due to imperfect recovery of $\vj$. Sec.~\ref{sec:balance} shows that this error and balancing form a trade-off, which is adjusted by $\beta$.

\textbf{Importance of balancing from misspecification view.\space}
If we must learn an unbalanced PtS, we have larger misspecification under a balanced prior and rely more on $\y$ in the posterior. Both are bad because it is shown in \citep{bonhomme2019posterior} that posterior only helps under bounded (small) misspecification, and posterior estimator has higher variance than prior estimator (see below for an extreme case).
Again, we want a regularizer to encourage learning of PS, so that we can explore the \textit{middle ground}: relatively low-dimensional $\PS$, or relatively small $\e$.

\textbf{Example.\space} Assume the true outcome noise is (near) zero.
By setting $\beps \to \bm0$ in our model, the posterior $\vaepostparam=\vaegenparam/\vaeobsparam$ degenerates to $\vf_{\rt}\inv(\y)=\vf_{\rt}\inv(\vj_{\rt}(\PS_{\rt}))=\vv\inv(\PS_{\rt})$, a \textit{factual} PtS. However, $\vf_{1-\rt}\inv(\y)= \vf_{1-\rt}\inv(\vj_{\rt}(\PS_{\rt}))= \vv\inv(\vj_{1-\rt}\inv\circ\vj_{\rt}(\PS_{\rt})) \neq \vv\inv(\PS_{1-\rt})$, \textit{the score recovered by posterior does not work for counterfactual assignment}! The problem is, unlike $\x$, the outcome $\y=\y(\rt)$ is affected by $\rt$, and, the degenerated posterior disregards the information of $\x$ from the prior and depends exclusively on factual $(\y,\rt)$.



\subsection{Consistency of VAE and prior estimation}
\label{sec:consist_vae}

The following is a refined version of Theorem 4 in \cite{khemakhem2020variational}.
The result is proved by assuming: i) our VAE is flexible enough to ensure the ELBO is tight (equals to the true log likelihood) for some parameters; ii) the optimization algorithm can achieve the \textit{global} maximum of ELBO (again equals to the log likelihood).
\begin{proposition}[Consistency of Intact-VAE]
\label{consistency}
Given model \eqref{model_indep}\&\eqref{eq:enc}, and let $p^*(\vx,\vy,t)$ be the true observational distribution, assume 

\renewcommand{\labelenumi}{\roman{enumi})}
\begin{enumerate}
\def\theenumi{\roman{enumi})}
    \item there exists $(\bar{\vtheta}, \bar{\bm\phi})$ such that $p_{\bar{\vtheta}}(\vy|\vx,t)=p^*(\vy|\vx,t)$ and $p_{\bar{\vtheta}}(\vz|\vx,\vy,t)=q_{\bar{\bm\phi}}(\vz|\vx,\vy,t)$;
    
    \item \label{ass:prime} the ELBO $\E_{\mathcal{D} \sim p^*}(\mathcal{L}(\vx,\vy,t; \vtheta, \bm\phi))$ \eqref{elbo} can be optimized to its global maximum at $(\vtheta', \bm\phi')$;
\end{enumerate}

Then, in the limit of infinite data, $p_{\vtheta'}(\vy|\vx,t)=p^*(\vy|\vx,t)$ and $p_{\vtheta'}(\vz|\vx,\vy,t)=q_{\bm\phi'}(\vz|\vx,\vy,t)$.
\end{proposition}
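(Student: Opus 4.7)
The plan is to rewrite the expected ELBO as a parameter-independent constant minus two non-negative conditional KL terms, observe that assumption (i) attains the corresponding upper bound, and then use assumption (ii) to conclude that the optimizer $(\vtheta',\bm\phi')$ must also attain it, forcing both KL divergences to vanish.

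First I would start from the standard variational identity behind \eqref{elbo_detail},
\[
\log p_{\vtheta}(\vy|\vx,t) = \mathcal{L}(\vx,\vy,t;\vtheta,\bm\phi) + \KL(q_{\bm\phi}(\vz|\vx,\vy,t)\Vert p_{\vtheta}(\vz|\vx,\vy,t)),
\]
take expectation under $p^*(\vx,\vy,t)$, and rewrite the log-likelihood via $\log p_{\vtheta} = \log p^* - \log(p^*/p_{\vtheta})$. This yields
\[
\E_{p^*}[\mathcal{L}] = C - \E_{p^*(\vx,t)}\bigl[\KL(p^*(\vy|\vx,t)\Vert p_{\vtheta}(\vy|\vx,t))\bigr] - \E_{p^*(\vx,\vy,t)}\bigl[\KL(q_{\bm\phi}(\vz|\vx,\vy,t)\Vert p_{\vtheta}(\vz|\vx,\vy,t))\bigr],
\]
where $C \coloneqq \E_{p^*}[\log p^*(\vy|\vx,t)]$ does not depend on the parameters. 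Both KL integrals are non-negative, so $C$ is a uniform upper bound on the population ELBO over all $(\vtheta,\bm\phi)$.

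Next, assumption (i) produces $(\bar\vtheta,\bar{\bm\phi})$ at which the two KL integrands vanish identically on the support of $p^*$, so $\E_{p^*}[\mathcal{L}(\bar\vtheta,\bar{\bm\phi})] = C$ and the bound is attained. Assumption (ii) then says $(\vtheta',\bm\phi')$ is a global maximum of the same objective, hence $\E_{p^*}[\mathcal{L}(\vtheta',\bm\phi')] = C$ as well. Subtracting from the decomposition forces both KL integrals at $(\vtheta',\bm\phi')$ to be zero, and by non-negativity of the conditional KL each integrand vanishes $p^*$-almost-surely. This gives $p_{\vtheta'}(\vy|\vx,t) = p^*(\vy|\vx,t)$ on the support of $p^*(\vx,t)$ and $q_{\bm\phi'}(\vz|\vx,\vy,t) = p_{\vtheta'}(\vz|\vx,\vy,t)$ on the support of $p^*(\vx,\vy,t)$, which are the claimed equalities.

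The main technical obstacle is standard regularity rather than a new idea: one needs $\log p_{\vtheta}(\vy|\vx,t)$ and the two KL divergences to be well-defined and integrable under $p^*$ uniformly over the parameter class given by \eqref{model_indep}--\eqref{eq:enc}, so the above decomposition is valid everywhere, and the supports of $p^*(\vx,t)$ and $p^*(\vx,\vy,t)$ must be rich enough to promote the $p^*$-a.s.\ conclusions to the equalities stated in the proposition. These are exactly the caveats tacitly assumed in Theorem~4 of \citet{khemakhem2020variational}; the substantive content is the two-term decomposition above.
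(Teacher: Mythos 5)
Your proof is correct and follows essentially the same route as the paper's: both arguments rest on writing the population ELBO as $\E_{p^*}[\log p^*(\vy|\vx,t)]$ minus the two non-negative KL gaps (observational fit and posterior approximation), noting that assumption (i) attains this upper bound, and concluding from (ii) that the global maximizer must drive both gaps to zero. Your write-up is in fact slightly more careful than the paper's, which states the posterior KL with its arguments in the reverse order; since either direction vanishes iff the distributions agree, this makes no substantive difference.
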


\begin{proof}[Proof]
From i), we have $\mathcal{L}(\vx,\vy,t; \bar{\vtheta}, \bar{\bm\phi})=\log p^*(\vy|\vx,t)$. But we know $\mathcal{L}$ is upper-bounded by $\log p^*(\vy|\vx,t)$. So, $\E_{\mathcal{D} \sim p^*}(\log p^*(\vy|\vx,t))$ should be the global maximum of the ELBO (even if the data is finite).

Moreover, note that, for any $(\vtheta, \bm\phi)$, we have $\KL(p_{\vtheta}(\vz|\vx,\vy,t) \Vert q_{\bm\phi}(\vz|\vx,\vy,t) \geq 0$ and, in the limit of infinite data, $\E_{\mathcal{D} \sim p^*}(\log p_{\vtheta}(\vy|\vx,t)) \leq \E_{\mathcal{D} \sim p^*}(\log p^*(\vy|\vx,t))$. Thus, the global maximum of ELBO is achieved \textit{only} when $p_{\vtheta}(\vy|\vx,t)=p^*(\vy|\vx,t)$ and $p_{\vtheta}(\vz|\vx,\vy,t)=q_{\bm\phi}(\vz|\vx,\vy,t)$.
\end{proof}

Consistent prior estimation of CATE follows directly from the identifications. The following is a corollary of Theorem \ref{th:id_nop_ptscore}.
\begin{corollary}
\label{th:estimation}

Under the conditions of Theorem \ref{th:id_nop_ptscore}, 
further require the consistency of Intact-VAE.
Then, in the limit of infinite data, we have $\mu_{t}(\x)=\vf_{t}(\vh_{t}(\x))$
where $\vf,\vh$ are the optimal parameters learned by the VAE.
\end{corollary}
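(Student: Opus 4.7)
The plan is to chain Proposition \ref{consistency} (consistency of the VAE) with Theorem \ref{th:id_nop_ptscore} (identification via recovery of PtS) in essentially one step: consistency guarantees that, in the infinite-data limit, the learned parameters $\vtheta'$ satisfy $p_{\vtheta'}(\vy|\vx,t)=p^*(\vy|\vx,t)=\trueobs$, which is exactly the antecedent of Theorem \ref{th:id_nop_ptscore}. Then applying that theorem's conclusion 2) yields $\mu_t(\vx) = \hat{\mu}_t(\vx)$ for all $t,\vx$, and under the noiseless prior degeneration $\vk(\vx)=k\vk'(\vx)$ with $k\to 0$ the estimator collapses to the deterministic composition $\hat{\mu}_t(\vx) = \E_{p_{\blambda}(\vz|\vx,t)}\E_{p_{\vf}}(\vy|\vz,t) = \vf_t(\vh_t(\vx))$, which is what we want.

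Concretely, the steps I would write are: (i) Invoke consistency: since we assumed the consistency of Intact-VAE, condition i) of Proposition \ref{consistency} (existence of a parameter $(\bar{\vtheta},\bar{\bphi})$ realizing the true observational distribution and a tight posterior) and condition ii) (global optimization of the ELBO) hold, so the optimal parameters $\vtheta'=(\vf,\vh,\vk)$ satisfy $p_{\vtheta'}(\vy|\vx,t)=\trueobs$. (ii) Feed this equality into Theorem \ref{th:id_nop_ptscore}; all of \ref{ass:pt_inj}, \ref{ass:model}, \ref{ass:match_noise}, \ref{ass:inv_jac_smp} and \ref{ass:g_bcovar_smp} are in force by hypothesis, so conclusion 2) gives $\mu_t(\vx)=\hat{\mu}_t(\vx)$ for every $t\in\binset$ and every $\vx\in\mathcal{X}$, with $\hat{\mu}_t(\vx):=\E_{p_{\blambda}(\vz|\vx,t)}\E_{p_{\vf}}(\vy|\vz,t)$. (iii) Evaluate $\hat{\mu}_t$ under the noiseless-prior regime $\vk\to\bm 0$ in \ref{ass:match_noise}: the Gaussian prior $p_{\blambda}(\vz|\vx,t)=\mathcal{N}(\vz;\vh_t(\vx),\diag(\vk(\vx)))$ concentrates at $\vh_t(\vx)$, and since $\E_{p_{\vf}}(\vy|\vz,t)=\vf_t(\vz)$ for the additive-noise decoder with zero-mean $\beps$, we obtain $\hat{\mu}_t(\vx)=\vf_t(\vh_t(\vx))$.

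The step I expect to do the real work is (ii), but that is entirely inherited from Theorem \ref{th:id_nop_ptscore}, so no new argument is needed beyond the careful bookkeeping that the hypotheses transport from finite-data optimization to the infinite-data limit. The only genuinely delicate point is making sure the idealization ``$k\to 0$'' in \ref{ass:match_noise} is consistent with the realizability assumption i) of Proposition \ref{consistency}; I would note (or assume without further comment, since the paper treats this idealization as a theoretical convenience) that for every $k>0$ one has a well-defined optimal $\vtheta'_k$, and the identity $\hat{\mu}_t(\vx)=\vf_{t}(\vh_{t}(\vx))$ is continuous in $k$ and independent of $k$ in the limit, so the passage to the limit is harmless. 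No other obstacles are anticipated: the corollary is really just the composition of two results already proved.
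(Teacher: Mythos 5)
Your proposal is correct and matches the paper's intended argument: the paper gives no explicit proof, treating the corollary as an immediate consequence of chaining the consistency result (Proposition \ref{consistency}) with conclusion 2) of Theorem \ref{th:id_nop_ptscore}, exactly as you do. Your extra care about the $k\to 0$ idealization is a reasonable bookkeeping remark that the paper itself glosses over.
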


\subsection{Pre/Post-treatment prediction}
\label{sec:prepost}

Sampling posterior requires \textit{post-treatment} observation $(\vy, t)$. Often, it is desirable that we can also have \textit{pre-treatment} prediction for a new subject, with only the observation of its covariate $\rvx=\vx$. To this end, we use the prior as a pre-treatment predictor for $\rvz$: replace $q_{\bphi}$ with $p_{\biglambda}$ in \eqref{eq:cate_est} and get rid of the average taken on $\data$; all the others remain the same. We also have sensible pre-treatment prediction even without true low-dimensional PSs, because $p_{\biglambda}$ gives the best balanced approximation of the target PtS. The results of pre-treatment prediction are given in the experimental section \ref{sec:add_exp}. 

\subsection{Additional notes on novelties of the bounds in Sec.~\ref{sec:balance}}
\label{sec:novel_bound}

We give details and additional points regarding the novelties.
\cite{lu2020reconsidering} also use a VAE and derive bounds most related to ours. Still, our method strengthens \cite{lu2020reconsidering}, in a simpler and principled way: we distinguish true score and latent $\z$ and show that identification is the link; considering both prior and posterior, we show the symmetric nature of the balancing term and relate it to our KL term in \eqref{eq:elbo_imp}, without ad hoc regularization; moreover, we consider outcome noise modeling which is a strength of VAE and relate it to hyperparameter $\beta$. Particularly, in \citep{lu2020reconsidering}, latent variable $\z$ is confused with the true representation ($\PS_{t}$ up to invertible mapping in our case). \textit{Without} identification, the method in fact has unbounded error. Note that \cite{shalit2017estimating} do not consider connection to identification and noise modeling as well. The error between $\hat{\tau}_{\vf}$ and $\tau_{\vj}$, which we bound, is due to the unknown outcome noise that is not accounted by our Theorem \ref{th:id_nop_ptscore}; thus, the theory in Sec.~\ref{sec:balance} is complementary to that in Sec.~\ref{sec:identification}.

\section{Other related work}
\label{sec:other_related}
\subsection{Injectivity, invertibility, monotonicity, and overlap}

Let us note that \textit{any injective mapping defines an invertible mapping}, by restrict the domain of the inverse function to the range of the injective mapping. Also note that injectivity is weaker than monotonicity; a monotone mapping can be defined by an injective and \textit{order-preserving} mapping between ordered sets. Particularly, \textit{an injective and continuous mapping on $\R$ is monotone}, and many works in econometrics give examples of this case.

Many classical and recent works (with many real world applications, see C.1) in econometrics are based on monotonicity. Particularly, there is a long line of work based on \textit{monotonicity of treatment} \citep{huber2018local}. More related to our method is another line of work based on \textit{monotonicity of outcome}, see \citep{chernozhukov2013quantile} and references therein for early results. Some recent works apply monotonicity of outcome to nonparametric IV regression (NPIV) \citep{freyberger2015identification,li2017nonparametric,chetverikov2017nonparametric}, where the structural equation of the outcome is assumed to be $\ry=f(\rt)+\epsilon$, and $f$ is monotone and $\rt$ (the treatment) is often continuous. Particularly, \citep{chetverikov2017nonparametric} combines monotonicity of both treatment and outcome, and \citep{freyberger2015identification} considers \textit{discrete} treatment (note continuity or differentiability is not necessary for monotonicity). NPIV with monotone $f$ is closely related to our method, but the difference is that $\rt$ is replaced by a PtS in our method, and the PtS is recovered from observables. Finally, as we mentioned in Sec.~\ref{sec:identification}, monotonicity is a kind of shape restriction which also includes, e.g., concavity and symmetry and attracts recent interests \citep{chetverikov2018econometrics}. However, most of NPIV works focus on identifying $f$ but not directly on TEs, and we do not know any works that use monotonicity to address limited overlap.

Recently in machine learning, \citep{johansson2019support,zhang2020learning,johansson2020generalization} note the relationship between invertibility and overlap. As mentioned, \citep{johansson2020generalization} gives bounds without overlap, but the relationship between invertibility and overlap is not explicit in their theory. \citep{johansson2019support} explicitly discuss overlap and invertibility, but does not focus on TEs. \citep{zhang2020learning} assumes overlap so that identification is given, and then focuses on learning overlapping representation that preserves the overlapping the covariate. However, it does not relate invertibility and overlap, but uses invertible representation function to \textit{preserve exchangeability given the covariate}, and linear outcome regression to simply the model.  Related, our identifications required \ref{ass:prepart}, of which linearity of PtS and representation function is a sufficient condition, and our outcome model is injective, to \textit{preserve the exchangeability given the PtS}. Thus, our method works under more general setting, and arguably under weaker conditions.

\subsection{VAEs for TE estimation}
\label{sec:vae_te}

VAEs are suitable for causal estimation thanks to its probabilistic nature. However, most VAE methods for TEs, e.g. \citep{louizos2017causal,zhang2020treatment,vowels2020targeted,lu2020reconsidering}, add ad hoc heuristics into their VAEs, and thus break down probabilistic modeling, not to mention identifiable representation. Moreover, the methods rely on learning sufficient representations from \textit{proxy} variables, leading to either impractical assumptions or conceptual inconsistency, in causal identification. 

\textbf{On identification.} First, as to causal identification, \citep{louizos2017causal} assumes unobserved confounder can be recovered, which is rarely possible even under further structural assumptions  \citep{tchetgen2020introduction}, and \citep{rissanen2021critical} recently gives evidence that the method often fails. Other methods \citep{,zhang2020treatment,vowels2020targeted,lu2020reconsidering} assume unconfoundedness but still rely on proxy at least intuitively; particularly, \citep{lu2020reconsidering} factorizes the decoder as in the proxy setting. However, \textit{unconfoundedness and proxy should not be put together}. The conceptual inconsistency is that, by definition, unconfoundedness means covariates \textit{fully} control confounding, while the motivation for proxy is that unconfoundedness is often \textit{not} satisfied in practice and covariates are at best proxies of confounding, which are non-confounders causally connected to confounders \citep{tchetgen2020introduction}. 
Second, without identifiable representation, the empirical results of the methods lacks solid ground; under settings not covered by their experiments, the methods would silently fail to learn proper representations, as we show in Sec.~\ref{sec:exp_syn}.

\textbf{On ad hoc heuristics.} Ad hoc heuristics break down probabilistic modeling and / or give ELBOs that do not estimate the probabilistic models. For example, \citep{louizos2017causal} uses separated NNs for the two potential outcomes to mimic TARnet \citep{shalit2017estimating}. And, to have pre-treatment estimation, $q(\rt|\x)$ and $q(\y|\x,\rt)$ are added into the encoder. As a result, the ELBO of \citep{louizos2017causal} has two additional likelihood terms corresponding to the two distributions. \citep{zhang2020treatment} is even more ad hoc because it splits the latent variable $\z$ into three components, and applies the ad hoc tricks of \citep{louizos2017causal} to each of the component. Particularly, when constructing the encoder, \citep{zhang2020treatment} implicitly assumes the three components of $\z$ are conditional independent give $\x$, which violates the intended graphical model. 

Our method is motivated by the important concept of prognostic score, and is naturally based on \eqref{eq:cate_by_bts}.
As a consequence, our VAE architecture is a natural combination of iVAE and CVAE (see Figure \ref{f:vae}). Our ELBO \eqref{elbo} is derived by standard variational lower bound. Moreover, in our $\beta$-Intact-VAE, pre-treatment prediction is given naturally by our conditional prior, thanks to the correspondence between our model and \eqref{eq:cate_by_bts}.

\section{Details and additions of experiments}
\label{sec:add_exp}

We evaluate the post-treatment performance on training and validation set jointly (This is non-trivial. Recall the fundamental problem of causal inference). The treatment and (factual) outcome should not be observed for pre-treatment predictions, so we report them on a testing set. See also Sec.~\ref{sec:prepost} the pre/post-treatment distinction.

\subsection{Synthetic data}
\label{sec:exp_syn_app}

\begingroup

\begin{wrapfigure}{r}{0.25\textwidth}
\vspace{-.2in}
  \begin{center}
    \includegraphics[width=0.25\textwidth]{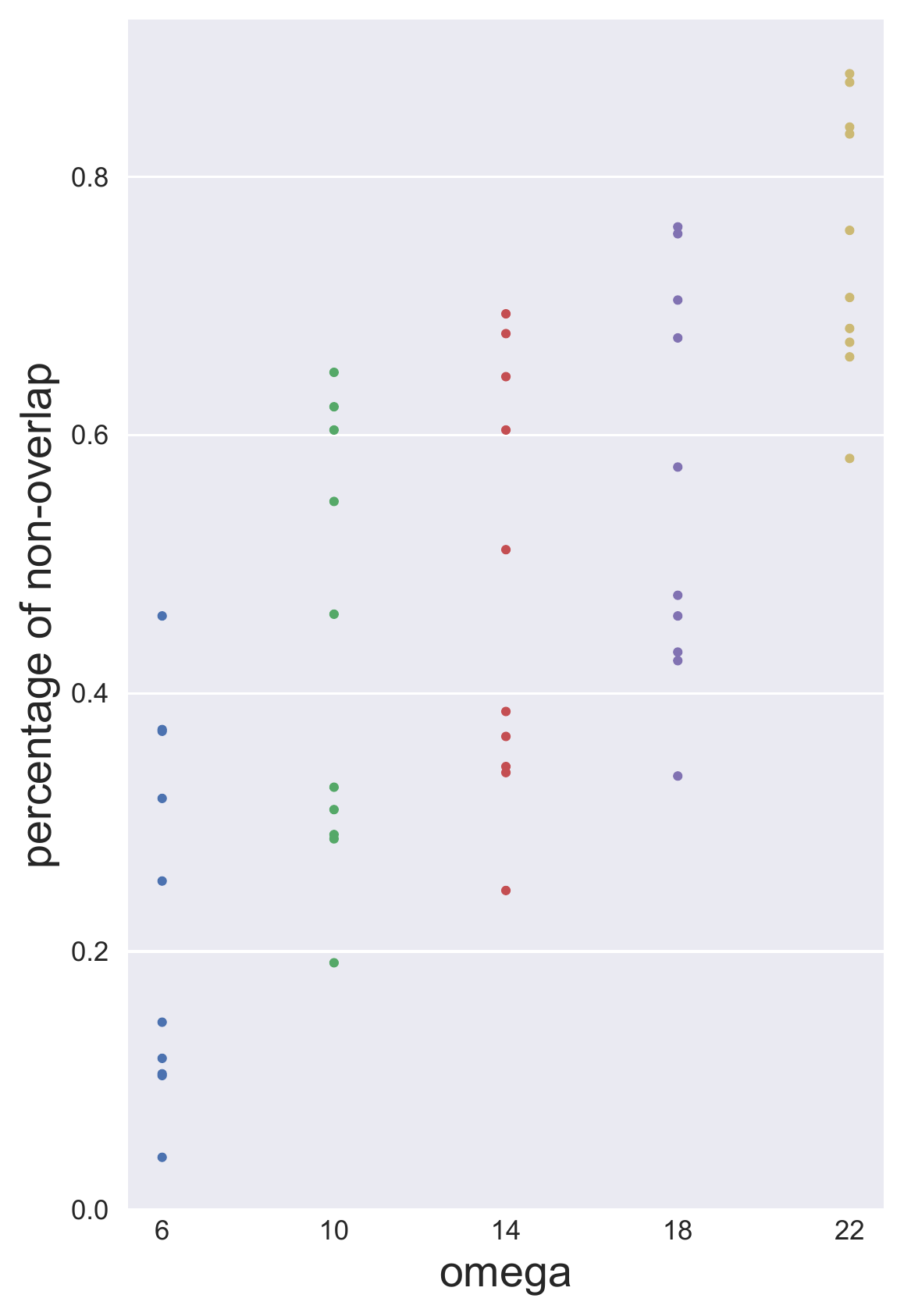}
  \end{center}
  \vspace{-.1in}
  
  \caption{\footnotesize{Degree of limited overlap w.r.t $\omega$.}}
\vspace{-.2in}
\label{fig:ol}
\end{wrapfigure}

We detail how the random parameters in the DGPs are sampled. $\mu_i$ and $\sigma_i$ are uniformly sampled in range $(-0.2, 0.2)$ and $(0, 0.2)$, respectively. The weights of linear functions $\vh,\vk,l$ are sampled from standard normal distributions. The NNs $f_0,f_1$ use leaky ReLU activation with $\alpha=0.5$ and are of 3 to 8 layers randomly, and the weights of each layer are sampled from $(-1.1, -0.9)$. To have a large but still reasonable outcome variance, the output of $f_t$ is divided by $C_t\coloneqq\Var_{\{\mathcal{D}|\rt=t\}}(f_t(\z))$. When generating DGPs with dependent noise, the variance parameter $g_t$ for the outcome is generated by adding a softplus layer after respective $f_t$, and then normalized to range $(0, 2)$.

We use the original implementation of CFR\footnote{\url{https://github.com/clinicalml/cfrnet}}. Very possibly due to bugs in implementation, the CFR version using Wasserstein distance has error of TensorFlow type mismatch on our synthetic dataset, and the CFR version using MMD diverges with very large loss value on one or two of the 10 random DGPs. We use MMD version, and, when the divergence of training happens, report the results from trained models before divergence, which still give reasonable results. We search the balancing parameter alpha in [0.16, 0.32, 0.64, 0.8, 1.28], and fix other hyperparameters as they were in the default config file.

\begin{wrapfigure}{r}{0.4\textwidth}
\vspace{-.3in}
  \begin{center}
    \includegraphics[width=0.4\textwidth]{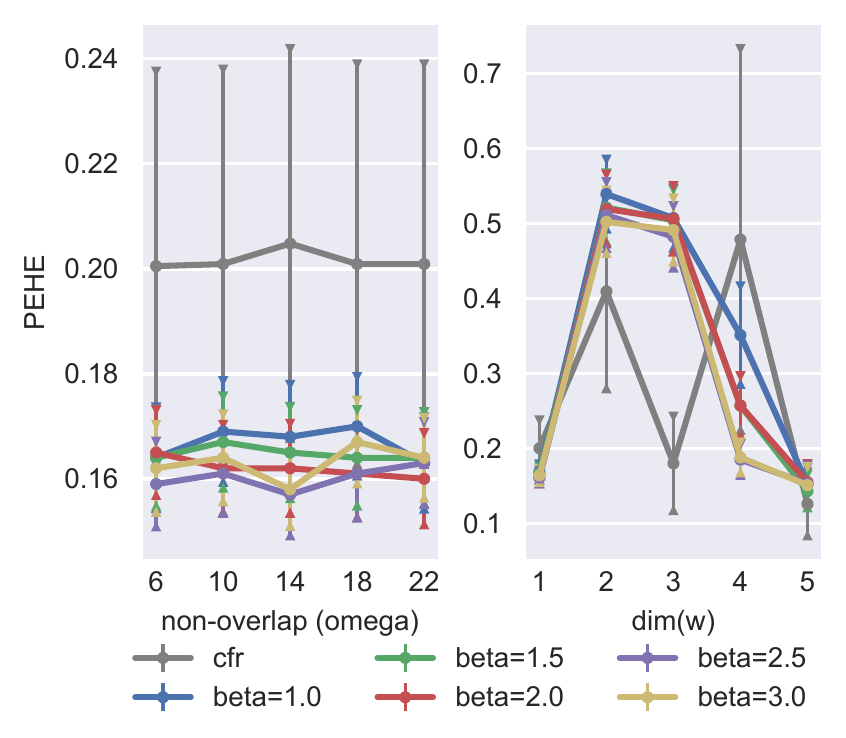}
  \end{center}
  \vspace{-.2in}
  
  \caption{\footnotesize{$\sqrt{\epsilon_{pehe}}$ on synthetic dataset, with $g_t(\rvw)=1$ in DGPs, and $\dim(\z)=200$ in our model. Error bar on 10 random DGPs.}}
\label{fig:z200}
\end{wrapfigure}

We characterize the degree of limited overlap by examining the percentage of observed values $\vx$ that give probability less than 0.001 for one of $p(t|\vx)$. The threshold is chosen so that all sample points near those values $\vx$ almost certainly belong to a single group since we have 500 sample point in total. If we regard a DGP as very limited-overlapping when the above percentage is larger than 50\%, then, as shown in Figure \ref{fig:ol}, non (all) of the 10 DGPs are very limited-overlapping  with $\omega=6$ ($\omega=22$).

\begin{wrapfigure}{r}{0.4\textwidth}
\vspace{-.5in}
  \begin{center}
    \includegraphics[width=0.4\textwidth]{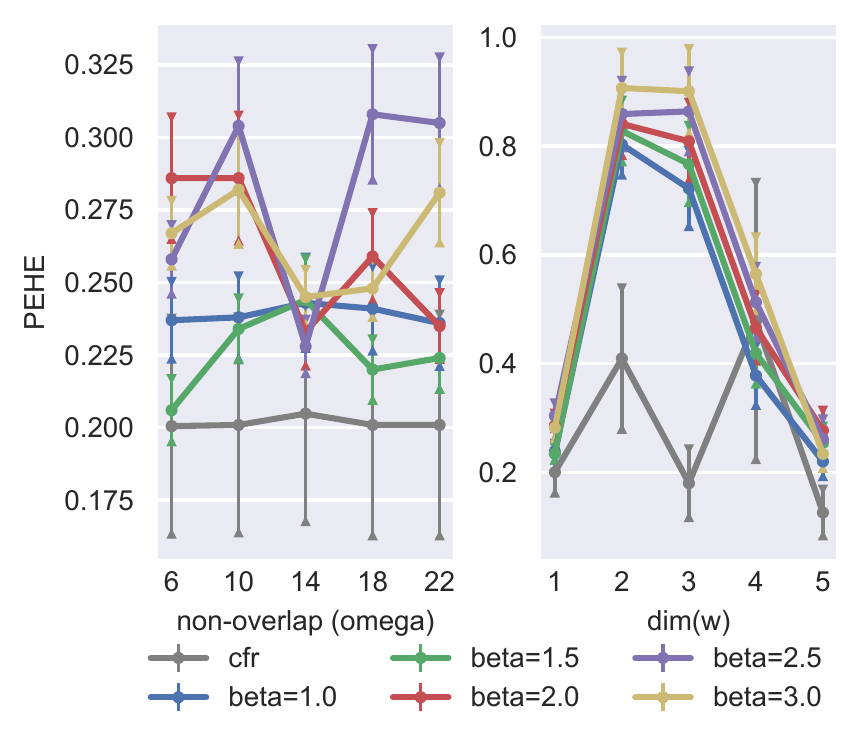}
  \end{center}
  \vspace{-.2in}
  \caption{$\sqrt{\epsilon_{pehe}}$ on synthetic dataset, with $g_t(\rvw)=1$ in DGPs. Error bar on 10 random DGPs. 
  }
\vspace{-.2in}
\label{nonl_art}
\end{wrapfigure}

For diversity of the datasets, we set $g_t(\rvw)=1$ in DGPs in Appendix. Figure \ref{fig:z200} shows, with $\dim(\z)=200$, our method works better than CFR under $\dim(\rvw)=1$ and as well as CFR under $\dim(\rvw)>1$. As mentioned in Conclusion, this indicates that the theoretical requirement of injective $\vf_t$ in our model might be relaxed. Interestingly, larger $\beta$ seems to give better results here, this is understandable because $\beta$ controls the trade-off between fitting and balancing, and the fitting capacity of our decoder is much increased with $\dim(\z)=200$. Note that the above observations on $\dim(\z)$ are not caused by fixing $g_t(\rvw)=1$ (compare Figure \ref{fig:z200} with Figure \ref{nonl_art} below).

Figure \ref{nonl_art} shows the importance of noise modeling. Compared to Figure \ref{fig:depn} in the main text, where $g_t(\rvw)$ in DGPs is not fixed, our method works worse here, particularly for large $\beta$, because now noise modeling ($\vg,\vk$ in the ELBO) only adds unnecessary complexity.
The changes of performance w.r.t different $\omega$ should be unrelated to overlap levels, but to the complexity of random DGPs; compare to Figure \ref{fig:z200}, with larger NNs in our VAE, the changes become much insignificant.
The drop of error for $\dim(\rvw) > 3$ is due to the randomness of $f$ in (\theequation). In Sec.~\ref{sec:pgs}, we saw that the 2-dimsensional PS $\PS\coloneqq(\mu_0(\x), \mu_1(\x))$ always exists under ANMs. Thus,  when $\dim(\rvw) > 2$, our method tries to recover that $\PS$, and generally performs not worse than under $\dim(\rvw)=2$, but still not better than under $\dim(\rvw)=1$.

Figure \ref{fig:ate} shows results of ATE estimation. Notably, CFR drops performance w.r.t degree of limited overlap. Our method does not show this tendency except for very large $\beta$ ($\beta=3$). This might be another evidence that CFR and its unconditional balancing overfit to PEHE (see Sec.~\ref{sec:ihdp}). Also note that, under $\dim(\rvw)=1$, $\beta=3$ gives the best results for ATE although it does not work well for PEHE, and we do not know if this generalizes to the conclusion that large $\beta$ gives better ATE estimation under the existence of PS, but leave this for future investigation.

\begin{wrapfigure}{r}{0.4\textwidth}
\vspace{-.2in}
  \begin{center}
    \includegraphics[width=0.4\textwidth]{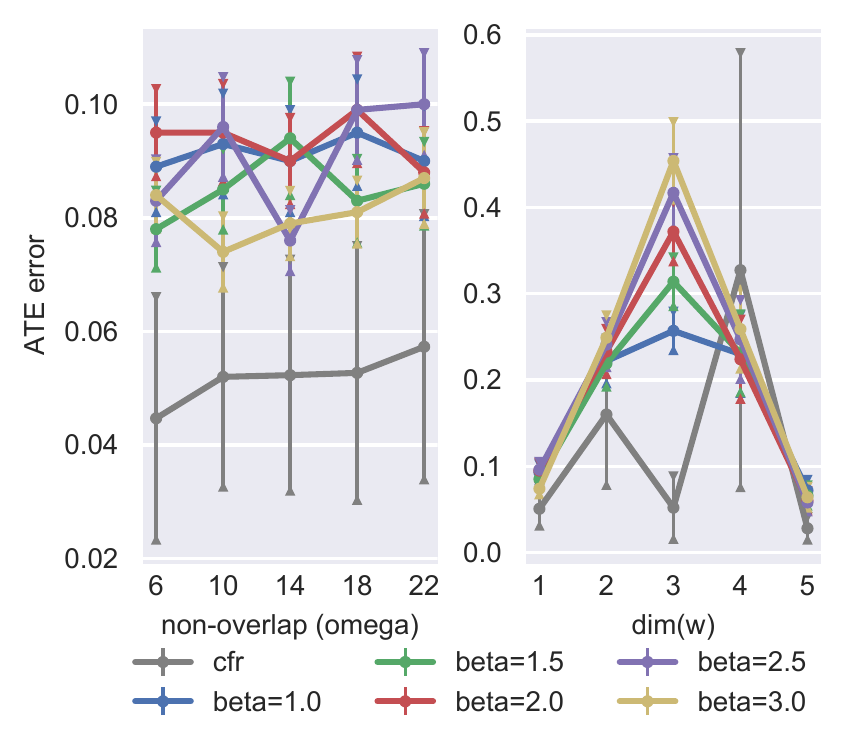}
  \end{center}
  \vspace{-.1in}
  
  \caption{\footnotesize{$\epsilon_{ate}$ on synthetic dataset, with $g_t(\rvw)=1$ in DGPs. Error bar on 10 random DGPs.}}
\vspace{-.1in}
\label{fig:ate}
\end{wrapfigure}

Figure \ref{fig:pre} shows results of pre-treatment prediction. In left panel, both our method and CFR perform only slightly worse than post-treatment. This is reasonable because here we have PS $\rvw$ with $\dim(\rvw)=1$, there is no need to learn PtS. In the right panel, we also do not see significant drop of performance compared to post-treatment. This might be due to the hardness of learning balanced PtS in this dataset, and posterior estimation does not give much improvements.

\begin{wrapfigure}{r}{0.4\textwidth}
\vspace{-.3in}
  \begin{center}
    \includegraphics[width=0.4\textwidth]{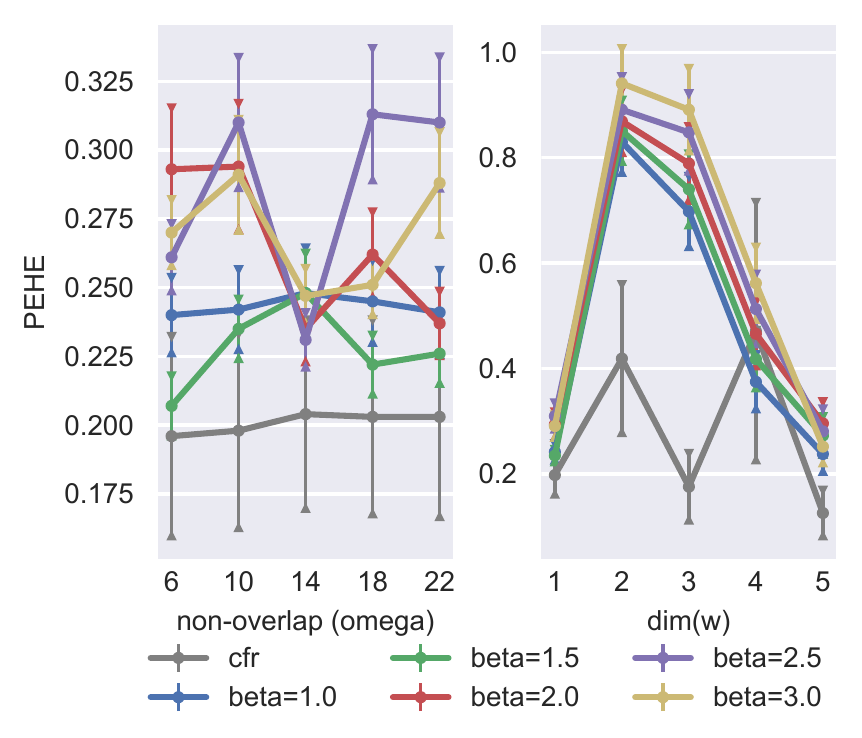}
  \end{center}
  \vspace{-.1in}
  
  \caption{\footnotesize{\textit{Pre-treatment} $\sqrt{\epsilon_{pehe}}$ on synthetic dataset. Error bar on 10 random DGPs.}}
\label{fig:pre}
\end{wrapfigure}

You can find more plots for latent recovery at the end of the paper.

\subsection{IHDP} 
\label{sec:ihdp_app}

IHDP is based on an RCT where each data point represents a baby with 25 features (6 continuous, 19 binary) about their birth and mothers. \texttt{Race} is introduced as a confounder by artificially removing all treated children with nonwhite mothers. There are 747 subjects left in the dataset. The outcome is synthesized by taking the covariates (features excluding \texttt{Race}) as input, hence \textit{unconfoundedness} holds given the covariates. Following previous work, we split the dataset by 63:27:10 for training, validation, and testing. Note, there is no ethical concerns here, because the treatment assignment mechanism is artificial by processing the data. Also our results are only quantitative and we make no ethical conclusions.

The generating process is as following \citep[Sec.~4.1]{hill2011bayesian}.
\begin{equation}
    \ry(0) \sim \mathcal{N}(e^{\va^T(\x+\vb)},1),\quad \ry(1) \sim \mathcal{N}(\va^T\x-c,1),
\end{equation}
where $\va$ is a random coefficient, $\vb$ is a constant bias with all elements equal to $0.5$, and $c$ is a random parameter adjusting degree of overlapping between the treatment groups. As we can see, $\va^T\x$ is a true PS. 
As mentioned in the main text, the PS might be discrete. Thus, this experiment also shows the importance of VAE, even if an apparent PS exists. Under \textit{discrete} PSs, training an regression based on Proposition \ref{th:id_nop_pscore} is hard, but our VAE works well.

The two added components in the modified version of our method are as following. First, we build the two outcome functions $\vf_t(\rvz),t=0,1$ in our learning model \eqref{model_indep}, using two separate NNs. Second, we add to our ELBO \eqref{elbo} a regularization term, which is the Wasserstein distance \citep{cuturi2013sinkhorn} between $\E_{\data\sim p(\x|\rt=t)}p_{\biglambda}(\rvz|\rvx),t\in\binset$. As shown in Table \ref{t:ihdp_mod}, best unconditional balancing parameter is 0.1. Larger parameters gives much worse PEHE and does not improve ATE estimation. Smaller parameters are more reasonable but still do not improve the results. The overall tendency is clear. Compared to ours, CFR with its unconditional balancing does not improve ATE estimation, it may improve PEHE results with fine tuned parameter, but possibly at the price of worse ATE estimation.

\endgroup


\begin{table}[h]

\centering
\scriptsize
\vspace{-.1in}
\caption{Performance of modified version with different unconditional balancing parameter, the values of which are shown after ``Mod.''.} 

\begin{tabular}{p{1.cm}p{.8cm}p{.8cm}p{.8cm}p{.8cm}p{.8cm}p{.8cm}p{.8cm}p{.8cm}}
\toprule 
Method &{Ours} &{Mod. 1} &{Mod. 0.2} &{Mod. 0.1} &{Mod. 0.05} &{Mod. 0.01} &CFR \\
\midrule 
$\epsilon_{ate}$ &.177$_{\pm .007}$ &.196$_{\pm .008}$
&.177$_{\pm.007}$ &.167$_{\pm .005}$ &.177$_{\pm .006}$ &.179$_{\pm.006}$ &.25$_{\pm.01}$
\\
\midrule 

$\sqrt{\epsilon_{pehe}}$ &.843$_{\pm .030}$ &1.979$_{\pm .082}$ 
&1.116$_{\pm.046}$ &$.777_{\pm .026}$ &.894$_{\pm.039}$ &.841$_{\pm .029}$ &.71$_{\pm.02}$
\\

\bottomrule 
\end{tabular}

\label{t:ihdp_mod}
\end{table}

Table \ref{t:ihdp_pre} shows pre-treatment results, All methods gives reasonable results.
\begin{table}[h]

\centering
\scriptsize
\vspace{-.1in}
\caption{\textit{Pre-treatment} Errors on IHDP over 1000 random DGPs. We report results with $\dim(\z)=10$. 
\textbf{Bold} indicates method(s) which is \textit{significantly} better. The results are taken from \cite{shalit2017estimating}, except GANITE \citep{yoon2018ganite} and CEVAE \citep{louizos2017causal}.} 

\begin{tabular}{p{1.cm}p{.5cm}p{.5cm}p{.5cm}p{.5cm}p{.7cm}p{.7cm}p{.8cm}p{1.2cm}}
\toprule 
Method &{TMLE} &{BNN} &{CFR} &{CF} &{CEVAE} &{GANITE} &{Ours} \\
\midrule 
pre-$\epsilon_{ate}$ &NA &.42$_{\pm .03}$
&.27$_{\pm.01}$ &.40$_{\pm.03}$ &.46$_{\pm.02}$ &.49$_{\pm.05}$  &\textbf{.211}$_{\pm .011}$
\\
\midrule 

pre-$\sqrt{\epsilon_{pehe}}$ &NA &2.1$_{\pm .1}$ 
&\textbf{.76}$_{\pm.02}$ &3.8$_{\pm.2}$ &2.6$_{\pm.1}$ &2.4$_{\pm .4}$ &.946$_{\pm .048}$
\\

\bottomrule 
\end{tabular}
\label{t:ihdp_pre}
\vspace{-.1in}
\end{table}

\subsection{Pokec Social Network Dataset}
\label{sec:pokec}

This experiment shows our method is the best compared with the methods specialized for networked deconfounding, a challenging problem in its own right. Thus, our method has the potential to work under \textit{unobserved confounding}, but we leave detailed experimental and theoretical investigation to future.

Pokec \citep{leskovec2014snap} is a real world social network dataset. We experiment on a semi-synthetic dataset based on Pokec, which was introduced in \citep{veitch2019using}, and use exactly the same pre-processing and generating procedure. The pre-processed network has about 79,000 vertexes (users) connected by 1.3 $\times 10^6$ undirected edges.  The subset of users used here are restricted to three living districts which are within the same region. The network structure is expressed by binary adjacency matrix $\mG$. Following \citep{veitch2019using}, we split the users into 10 folds, test on each fold and report the mean and std of pre-treatment ATE predictions. We further separate the rest of users (in the other 9 folds) by $6:3$, for training and validation. 

Each user has 12 attributes, among which  \texttt{district}, \texttt{age}, or \texttt{join date} is used as a confounder $\ru$ to build 3 different datasets, with remaining 11 attributes used as covariate $\rvx$. Treatment $\rt$ and outcome $\rvy$ are synthesised as following:
\begin{equation}
\label{pokec}
    \rt \sim \bern(g(\ru)), \quad \rvy = \rt + 10(g(\ru)-0.5) + \epsilon,
\end{equation}
where $\epsilon$ is standard normal. Note that \texttt{district} is of 3 categories; \texttt{age} and \texttt{join date} are also discretized into three bins. $g(\ru)$, which is a PS, maps these three categories and values to $\{0.15, 0.5, 0.85\}$. 

$\beta$-Intact-VAE is expected to learn a PS from $\mG, \rvx$, if we can exploit the network structure effectively. 
Given the huge network structure, most users can practically be identified by their attributes and neighborhood structure, which means $\ru$ can be roughly seen as a deterministic function of $\mG, \rvx$. 
This idea is comparable to Assumptions 2 and 4 in \citep{veitch2019using}, which postulate directly that a balancing score can be learned in the limit of infinite large network. 
To extract information from the network structure, we use Graph Convolutional Network (GCN) \citep{DBLP:conf/iclr/KipfW17} in conditional prior and encoder of $\beta$-Intact-VAE. The implementation details are given at the end of this subsection.

Table \ref{tbl:Pokec} shows the results. 
The pre-treatment $\sqrt{\epsilon_{pehe}}$ for \texttt{Age}, \texttt{District}, and \texttt{Join date} confounders are 1.085, 0.686, and 0.699 respectively, practically the same as the ATE errors. Note that, \cite{veitch2019using} does not give individual-level prediction.

\begin{table}[h]
\centering
\scriptsize

\caption{Pre-treatment ATE on Pokec. Ground truth ATE is 1, as we can see in \eqref{pokec}. ``Unadjusted'' estimates ATE by $\E_{\mathcal{D}}(y_1)-\E_{\mathcal{D}}(y_0)$. ``Parametric'' is a stochastic block model for networked data \citep{gopalan2013efficient}. ``Embed-'' denotes the best alternatives given by \citep{veitch2019using}. \textbf{Bold} indicates method(s) which is \textit{significantly} better than all the others. We report results with 20-dimensional latent $\z$. The results of the other methods are taken from \citep{veitch2019using}.} \label{tbl:Pokec}

\begin{tabular}{l|c|c|c}
\toprule 
              & \texttt{Age}   & \texttt{District} & \texttt{Join Date} \\
              \midrule
Unadjusted    & 4.34 $\pm$ 0.05  & 4.51 $\pm$ 0.05  & 4.03 $\pm$ 0.06   \\
Parametric  & 4.06 $\pm$ 0.01   & 3.22 $\pm$ 0.01  & 3.73 $\pm$ 0.01   \\
Embedding-Reg.  & 2.77 $\pm$ 0.35  & \textbf{1.75} $\pm$ 0.20   & 2.41 $\pm$ 0.45   \\
Embedding-IPW  & 3.12 $\pm$ 0.06  & \textbf{1.66} $\pm$ 0.07   & 3.10 $\pm$ 0.07   \\
Ours           &\textbf{2.08} $\pm$ 0.32   & \textbf{1.68} $\pm$ 0.10   & \textbf{1.70} $\pm$ 0.13  \\
\bottomrule
\end{tabular}
\end{table}


To extract information from the network structure, we use Graph Convolutional Network (GCN) \citep{DBLP:conf/iclr/KipfW17} in conditional prior and encoder of $\beta$-Intact-VAE. A difficulty is that, the network $\mG$ and covariates $\mX$ of \textit{all} users are always needed by GCN, regardless of whether it is in training, validation, or testing phase. However, the separation can still make sense if we take care that the treatment and outcome are used only in the respective phase, e.g., $(y_m,t_m)$ of a testing user $m$ is only used in testing.

GCN takes the network matrix $\mG$ and the \textit{whole} covariates matrix $\mX \coloneqq (\vx_1^T,\dotsc,\vx_M^T)^T$, where $M$ is user number, and outputs a representation matrix $\mR$, again for all users. During training, we \textit{select} the rows in $\mR$ that correspond to users in training set. Then, treat this \textit{training representation matrix} as if it is the covariates matrix for a non-networked dataset, that is, the downstream networks in conditional prior and encoder are the same as in the other two experiments, but take $(\mR_{m,:})^T$ where $\vx_m$ was expected as input. And we have respective selection operations for validation and testing. We can still train $\beta$-Intact-VAE including GCN by Adam, simply setting the gradients of non-seleted rows of $\mR$ to 0. 

Note that GCN cannot be trained using mini-batch, instead, we perform batch gradient decent using full dataset for each iteration, with initial learning rate $10^{-2}$. We use dropout \citep{srivastava2014dropout} with rate 0.1 to prevent overfitting.






\subsection{Additional plots on synthetic datasets}
See next pages.

\clearpage


\begin{figure}[h] 
    \vspace*{-0.1in}
    \centering
    \includegraphics[width=.9\textwidth]{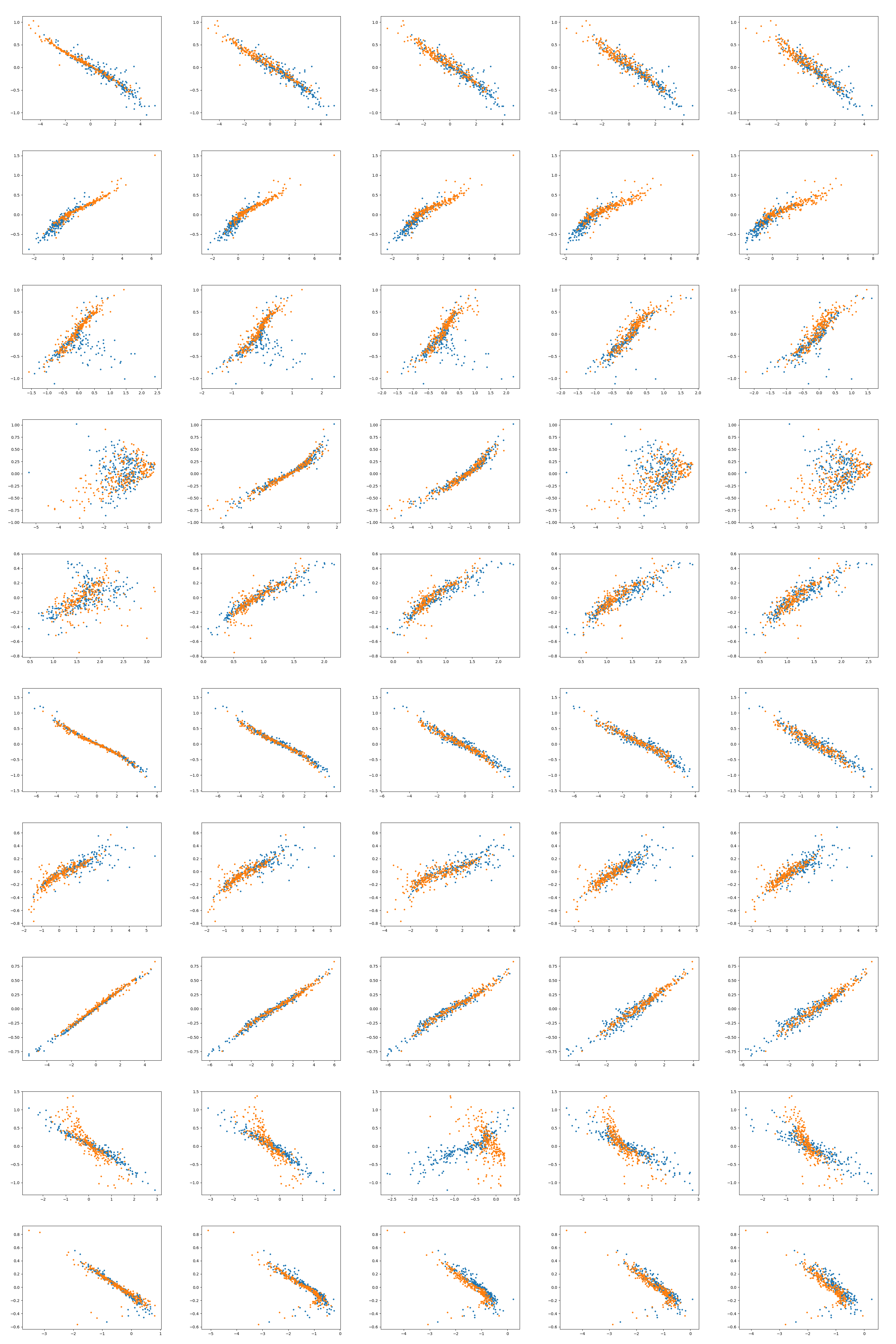}
    \vspace*{-0.1in}
    \caption{Plots of recovered-true latent. Rows: first 10 nonlinear random models, columns: outcome noise level.}
    \vspace*{-0.1in}
    \label{fig:recover}
\end{figure}

\begin{figure}[h] 
    \vspace*{-0.1in}
    \centering
    \includegraphics[width=.9\textwidth]{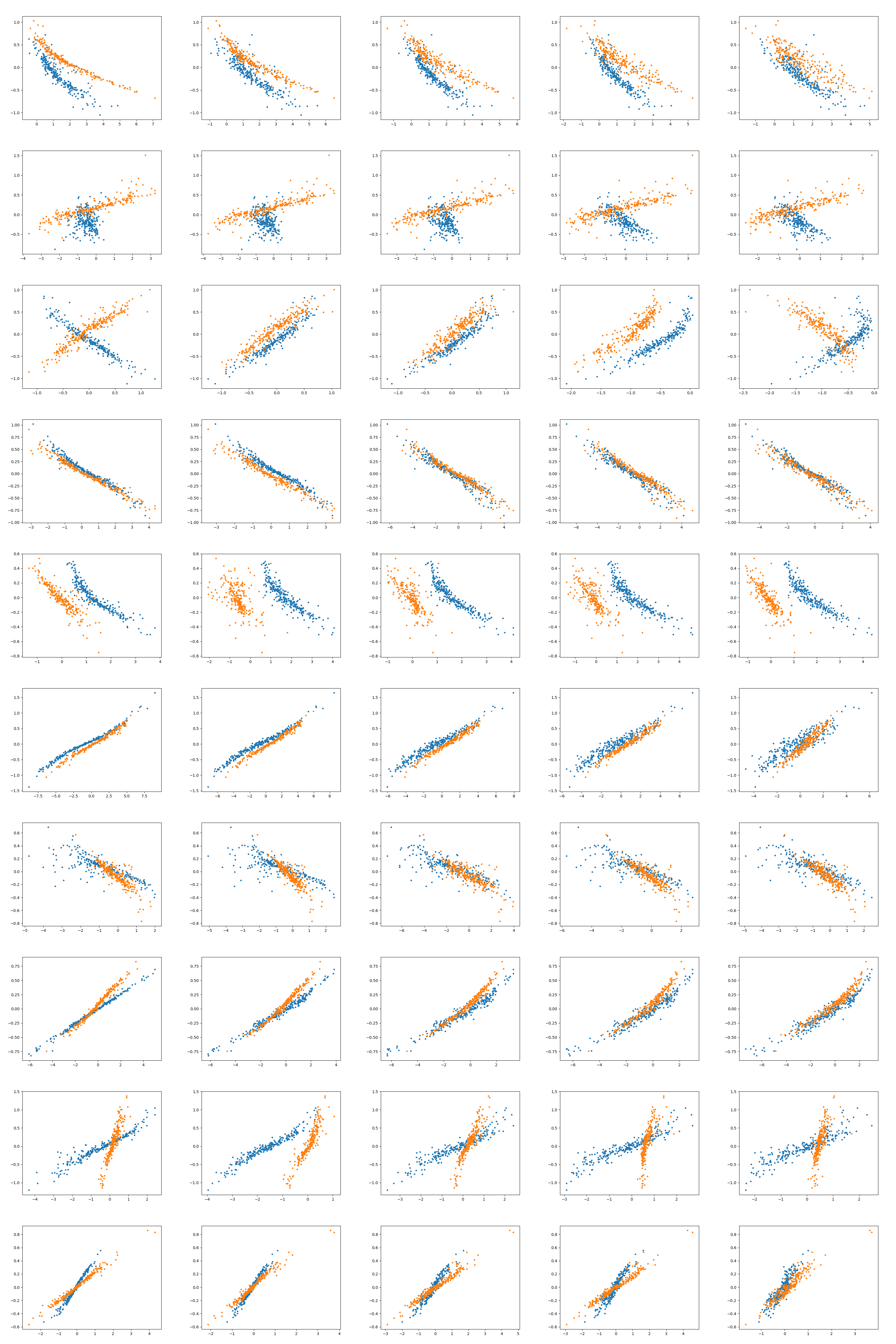}
    \vspace*{-0.1in}
    \caption{Plots of recovered-true latent. Conditional prior \textit{depends} on $t$. Rows: first 10 nonlinear random models, columns: outcome noise level. Compare to the previous figure, we can see the transformations for $t=0,1$ are \textit{not} the same, confirming the importance of balanced prior.}
    \vspace*{-0.1in}
    \label{fig:bad_recover}
\end{figure}

\clearpage

\end{document}